\newcommand{\blind}{0}
\newcommand{\dataset}{{\cal D}}
\def\R{\mathbb{R}}
\def\E{\mathbb{E}}
\def\one{\mathds{1}}
\def\P{\mathrm{Pr}}
\def\Cov{\mathrm{cov}}
\def\V{\mathrm{var}}
\def\bz{\mathbf{Z}}
\newtheorem{definition}{Definition}
\newtheorem{lemma}{Lemma}
\newtheorem{example}{Example}
\newtheorem{remark}{Remark}
\newtheorem{proposition}{Proposition}
\newtheorem{theorem}{Theorem}
\begin{document}

	\def\spacingset#1{\renewcommand{\baselinestretch}%
		{#1}\small\normalsize} \spacingset{1}

	%%%%%%%%%%%%%%%%%%%%%%%%%%%%%%%%%%%%%%%%%%%%%%%%%%%%%%%%%%%%%%%%%%%%%%%%%%%%%%
	
	\if0\blind
	{
		\title{\bf Asymptotic Distributions and Rates of Convergence for Random Forests via Generalized U-statistics}
		\author{Wei Peng, Tim Coleman, \& Lucas Mentch\thanks{
				This work was partially supported by \textit{NSF DMS-1712041}}  \hspace{.2cm}\\
			Department of Statistics, University of Pittsburgh}
		\maketitle
	} \fi
	
	\if1\blind
	{
		\bigskip
		\bigskip
		\bigskip
		\begin{center}
			{\LARGE\bf Asymptotic Distributions and Rates of Convergence for Random Forests via Generalized U-statistics}
		\end{center}
		\medskip
	} \fi
	
	\bigskip
	\begin{abstract}
		Random forests are among the most popular off-the-shelf supervised learning algorithms.  Despite their well-documented empirical success, however, until recently, few theoretical results were available to describe their performance and behavior.  In this work we push beyond recent work on consistency and asymptotic normality by establishing rates of convergence for random forests and other supervised learning ensembles.  We develop the notion of generalized U-statistics and show that within this framework, random forest predictions can remain asymptotically normal for larger subsample sizes and under weaker conditions than previously established.  Moreover, we provide Berry-Esseen bounds in order to quantify the rate at which this convergence occurs, making explicit the roles of the subsample size and the number of trees in determining the distribution of random forest predictions.  When these generalized estimators are reduced to their classical U-statistic form, the rates we establish are faster than any available in the existing literature.
	\end{abstract}
	
	%\noindent%
	%{\it Keywords:}  U-statistics, Random Forests, Berry-Esseen 
	%\vfill
	
	%\newpage
	\spacingset{1.5} % DON'T change the spacing!
	
	%%%%%%%%%%%%%%%%%%%%%%%%%%%%%%%%%%%%%%%%%%%%%%%%%%
	% 1.  Introduction
	%%%%%%%%%%%%%%%%%%%%%%%%%%%%%%%%%%%%%%%%%%%%%%%%%%

	\section{Introduction}
	\label{introduction}
	
	The random forest algorithm is a supervised learning tool introduced by \cite{Breiman2001} that constructs many independently randomized decision trees and aggregates their predictions by averaging in the case of regression or taking a majority vote for classification. Random forests have been shown to successfully handle high-dimensional and correlated data while exhibiting appealing properties such as fast and accurate off-the-shelf fitting without the overfitting issues that often plague related methods. They have been successfully applied in a variety of scientific fields including remote sensing \citep{belgiu2016random}, computational biology \citep{qi2012random}, stock price forecasting \citep{khaidem2016predicting}, and forecasting bird migration \citep{coleman2020statistical}.  In a recent large-scale empirical study comparing 179 classifiers across the 121 datasets comprising the entire UCI machine learning repository, \cite{Fernandez2014} found that random forests performed extremely well with 3 of the top 5 algorithms being some variant of the standard procedure.
	
	Despite their wide-ranging applicability and well-documented history of empirical success, establishing formal mathematical and statistical properties for random forests has proved quite difficult, due in large part to the complex, data-dependent nature of the CART-splitting criterion \citep{CART} traditionally used to construct individual trees. \cite{Breiman2001} provided the first such result, demonstrating that the expected misclassification rate is a function of the accuracy of the individual classifiers and the correlation between them. The bound on the misclassification rate postulated in the work is loose but suggestive in the sense that interplay between these two sets the foundation for understanding the inner-workings of the procedure.  \cite{banerjee2007} established a limit law for the split location in a regression tree context with independent Gaussian noise. Further analysis of the behavior of CART-style splitting was conducted by \cite{Ishwaran2015} who demonstrated an end-cut preference, whereby splits along non-informative variables are more likely to occur near the edges of the feature space.  
	
	A variety of other work has focused on analyzing other properties of random forest ensembles or extending the methodology to related problem types.  \cite{Lin2006} developed the idea of potential nearest neighbors and demonstrated their relationship to tree-based ensembles. More recently \cite{Lopes2020} analyzed the tradeoff between the size of the ensemble and the classification accuracy.   \cite{Biau2008}, \cite{Biau2010}, and \cite{Biau2012} studied various idealized versions of random forests and investigated consistency while \cite{pmlr-v28-denil13} proved consistency for a particular type of online forest.  \cite{Ishwaran2008} developed the idea of random survival forests and the consistency of such models is investigated in \cite{Ishwaran2010} and \cite{Cui2019}.  \cite{Meinshausen2006} extended random forest estimates to the context of quantile regression and \cite{Zhu2015} experimented with reinforcement learning trees.  For a more detailed accounting of random-forest-related research, we refer readers to an excellent recent review by \cite{Biau2016}. 
	
	In recent years, many promising developments have come by considering individual trees built with subsamples rather than the more traditional bootstrap samples.  \cite{Wager2014} extended the infinitesimal jackknife estimates of variance introduced by \cite{Efron2014} to produce confidence intervals for subsampled random forest predictions.  \cite{Scornet2015} provided the first consistency result for Breiman's original forests, establishing $L^2$ consistency whenever the underlying regression function is additive.  \cite{Mentch2016} made the connection to infinite-order U-statistics and provided the asymptotic distributions of random forest predictions. \cite{Wager2018} showed that for large ensembles, subsampled random forests are both asymptotically unbiased and Gaussian whenever individual trees are built according to honesty and regularity conditions.  
	
	In this paper, we continue the trend of establishing mathematical properties of random forests by building on the U-statistic connection made in \cite{Mentch2016}.  As in other recent theoretical analyses on the topic (e.g.\ \cite{Biau2008,Biau2010,Biau2012,Mentch2016,Wager2018}), we adopt a general notion of random forests, viewing this class of estimators as those producing predictions of the form
	\begin{equation*}
		\label{eqn:rfDef}
		\text{RF}(x) = \frac{1}{N} \sum_{i=1}^{N} h(x; Z_{i1}, \dots , Z_{is}; \omega) 
	\end{equation*}
	where each $Z_{i1}, \dots , Z_{is}$ denotes a subsample taken without replacement from the available training data and $\omega$ denotes additional randomness injected into the base learner $h$.  In particular, we do not require that base learners be trees constructed via the CART methodology as originally proposed in \cite{Breiman2001}.  We establish central limit theorems for such estimators, that, to our knowledge, cover a broader range of estimators and also allow for faster subsampling rates than established in existing literature.  More notably, we take a step forward in the theoretical analysis of random forests by providing Berry-Esseen Theorems governing the rate at which this convergence takes place by bounding the maximal error of approximation between the Gaussian distribution and that of the random forest predictions.  In establishing these, we develop the notion of \emph{generalized} U-statistics which allow for kernels to be incomplete, randomized, and infinite-order.  Importantly, when these estimators are simplified to their classical U-statistic form, the resulting bounds we provide are faster than any in the existing literature.
	
	The remainder of this paper is organized as follows. In \cref{background}, we provide additional background on the random forest algorithm and introduce the notion of generalized U-statistics. In \cref{normality} we provide a theorem that describes the asymptotic distribution of these statistics when the rank of the kernel is allowed to grow with $n$.  These distributional results rely on the behavior of a variance ratio and we conclude \cref{normality} by discussing its behavior for a variety of base learners.  Building on these preliminary results, in \cref{berry-esseen}, we provide Berry-Esseen bounds for both complete and incomplete generalized U-statistics.

	%%%%%%%%%%%%%%%%%%%%%%%%%%%%%%%%%%%%%%%%%%%%%%%%%%
	% 2.  Background
	%%%%%%%%%%%%%%%%%%%%%%%%%%%%%%%%%%%%%%%%%%%%%%%%%%
	
	\section{Background}
	\label{background}
	
	Suppose that we have data of the form $Z_1,\dots , Z_n $ assumed to be independent and identically distributed (i.i.d.)\ from some distribution $F_{Z}$ and let $\theta$ be some parameter of interest. Suppose further that there exists an unbiased estimator $h$ of $\theta$ that is a function of $s \leq n$ arguments and without loss of generality, assume that $h$ is permutation symmetric in those arguments. The minimum variance unbiased estimator for $\theta$ given by 
	\begin{equation}
		\label{eq:u}
		U_{n,s} = {n\choose s}^{-1} \sum_{(n,s)} h(Z_{i1}, \dots , Z_{is})
	\end{equation}
	is a U-statistic as introduced by \cite{Halmos1946} and \cite{Hoeffding1948}, where the sum is taken over all $\binom{n}{s}$ subsamples of size $s$; we use the $(n,s)$ shorthand for this quantity throughout the remainder of this paper. Standard elementary examples of U-statistics include sample mean, sample variance and covariance, and Kendall's $\tau$-statistic. When both the kernel $h$ and rank $s$ are held fixed, \cite{Hoeffding1948} showed that $U_{n,s}$ tends toward a normal distribution with mean $\theta$ and variance ${s^2\zeta_1}/{n}$ where, for any $1 \leq c \leq s$, 
	\begin{equation*}
		\zeta_{c} = \Cov \left(h(Z_1,\dots,Z_c,Z_{c+1},\dots, Z_s),h(Z_1,\dots,Z_c,Z_{c+1}',\dots, Z_s')\right)
	\end{equation*}
	where $Z_{c+1}',\dots,Z_{n}'$ are i.i.d.\ from $F_{Z}$ and independent of $Z_1,\dots,Z_n$. 
	
	Throughout the remainder of this paper, we consider a regression framework where the data consist of independent pairs of random variables consisting of covariates and a response $Z_i=(X_i,Y_i)\in \mathcal{X}\times \R~(i = 1,\dots, n)$ sampled from a common distribution $F_Z$. Unless otherwise stated, we assume $\mathcal{X}=\R^p$ for analytical convenience.
	%The task of regression is to utilize the sample $Z_1,\dots,Z_n$ to find a mapping $\hat{f}:\mathcal{X} \mapsto \R$ such that $\hat{f}(X)$ is close to $Y$ under some certain measurement for $(X,Y)$ drawn from $F_Z$.
	
	Given some $s \leq n$, let $Z_{i1}, \dots, Z_{is}$ denote a subsample of size $s$ and consider a particular location $x \in \R^p$.  The prediction at $x$ can be written as $h_x(Z_{i1},\dots , Z_{is})$ where the function $h_x$ takes the subsampled covariates and responses as inputs, forms a regression estimate, and outputs the predicted response at $x$.  Throughout the remainder of this paper, we drop the subscript $x$ for notational convenience.  Repeating this process on $N$ subsamples and averaging across predictions gives
	\begin{equation*}
		U_{n,s,N}(x) = \frac{1}{N} \sum_{i=1}^{N} h(Z_{i1}, \dots, Z_{is})
	\end{equation*}
	so that our prediction now takes the form of a U-statistic with kernel $h$.  When all subsamples are used so that $N = {n\choose s}$, the form is that of a \emph{complete} U-statistic; whenever a smaller number of subsamples are utilized, it is \emph{incomplete}.  When the subsample size $s$ grows with the sample size $n$, these estimators are considered \emph{infinite-order} U-statistics as introduced by \cite{Frees1989} and utilized by \cite{Mentch2016} to establish asymptotic normality of random forests.

	In a general supervised learning framework, these kernels can be thought of as base learners in an ensemble.  Decision trees are among the most popular choices of base learners and are typically built according to the CART criterion.  Here, splits in each cell $A$ are chosen to maximize
	\[\begin{aligned}
		L(j,z) &= \frac{1}{|A|}\sum_{i=1}^n(Y_i-\bar{Y}_A)^2\one_{X_{i}\in A} \\
		&\hspace{5mm} - \frac{1}{|A|}\sum_{i=1}^n(Y_i-\bar{Y}_{A_L}\one_{X_{j,i}<z} -\bar{Y}_{A_R}1_{X_{j,i}\geq z})^2\one_{X_i\in A}
	\end{aligned}\]
	across all covariates $X_j$, $1\leq j\leq p$, where $z\in \R$, $A_L = \{X\in A: X_{j}<z\}$, $A_R= \{X\in A: X_j\geq z\}$, and for any set $S$, $\bar{Y}_{S}$ denotes the average response value for observations $X \in S$.  When trees are built with bootstrap samples, the resulting ensembles produce \emph{bagged} estimates as discussed in \cite{Breiman1996}.  The random forest extension of bagging introduced by \cite{Breiman2001} inserts additional independent randomness into each tree, typically to determine the subset of $\text{\texttt{mtry}} \leq p$ features eligible for splitting at each node.  The subsampled version of this procedure thus produces estimates at $x$ of the form
	\begin{equation}
		\label{eqn:rf}
		\tilde{U}_{n,s,N,\omega}(x) = \frac{1}{N} \sum_{i=1}^{N} h(Z_{i1}, \dots, Z_{is}; \omega) .
	\end{equation}
	Note that for each decision tree we consider an i.i.d.\ sample of randomness $\omega_i$ but for notational convenience, we refer to this as simply $\omega$ for all trees.  Furthermore, in a similar fashion as above, define $\zeta_{c,\omega}~(c = 1,\ldots, s-1)$ and $\zeta_s$ as 
	\begin{equation}
		\label{zeta}
		\begin{aligned}
			&\zeta_{c,\omega} = \Cov \left(h(\dots,Z_c,Z_{c+1},\dots, Z_s; \omega),h(\dots,Z_c,Z_{c+1}',\dots, Z_s'; \omega')\right) \\
			&\zeta_s =  \Cov \left(h(\dots,Z_c,Z_{c+1},\dots, Z_s; \omega),h(\dots,Z_c,Z_{c+1},\dots, Z_s;  \omega)\right)
		\end{aligned}
	\end{equation}
	and note that $\zeta_s$ is simply the variance of the kernel with randomization parameter $\omega$.

	\cite{Mentch2016} provide asymptotic distributional results for $\tilde{U}_{n,s,N,\omega}$ with respect to their individual means that cover all possible growth rates of $N$ with respect to $n$, though the form of the result provided has several practical limitations.  In particular, the authors require that $\zeta_{1,\omega}$ does not approach $0$, but for most practical base learners, the correlation between estimators with only one observation in common should vanish as the subsample size grows.  Indeed, Lemma \ref{lemma1} in \cref{app:background} gives that $\zeta_{1,\omega} \leq \frac{1}{s}\zeta_{s,\omega}\leq \frac{1}{s}\zeta_s$ so that when $\zeta_s$ is bounded, $\zeta_{1,\omega} \to 0$ as $s \to \infty$.  In very recent work, \cite{Romano2019} showed that the same result could be obtained under a more mild condition.  In both results, however, the subsample size is limited to $s = o(n^{1/2})$ which can be quite restrictive in practice.  In Appendix \ref{app:background}, we demonstrate that this limitation is a result of a reliance on H\'ajek projections and in fact, whenever such an approach is taken, there is strong reason to believe that a subsampling rate of $s=o(n^{1/2})$ is the largest possible.  As later discussed by \cite{Wager2018} however, when $s$ is small, it is possible that the squared bias decays slower than the variance, thereby producing confidence intervals which, when built according to the stated Gaussian limit distribution, may not cover the true value. \cite{Wager2018} provide an alternative central limit theorem for averages over trees built according to honesty and regularity conditions.  When base learners conform to such conditions and $N$ is very large, the authors show that the subsampling rate can be improved to $s = o(n^{\beta})$ for $0.5 < \beta < 1$ while retaining consistent estimates.

	Motivated by the form of \eqref{eqn:rf} we now formalize the notion of \emph{generalized} U-statistics. 
	\begin{definition}[generalized U-statistic]
		\label{gu}
		Suppose $Z_1, \dots, Z_n$ are i.i.d.\ samples from $F_Z$ and let $h$ denote a (possibly randomized) real-valued function utilizing $s$ of these samples that is permutation symmetric in those $s$ arguments. A generalized U-statistic with kernel $h$ of order (rank) $s$ refers to any estimator of the form 
		\begin{equation}
			\label{eqn:gu}
			U_{n,s,N,\omega} = \frac{1}{\hat{N}} \sum_{(n,s)} \rho h(Z_{i1}, \dots, Z_{is}; \omega)
		\end{equation}
		where $\omega$ denotes i.i.d.\ randomness, independent of the original data. 
		The $\rho$ denotes i.i.d.\ Bernoulli random variables determining which subsamples are selected and $\P(\rho=1) = N/{n\choose s}$. The actual number of subsamples selected is given by $\hat{N} = \sum \rho$ where $\E[\hat{N}] = N$.
		When $N={n\choose s}$, the estimator in \cref{eqn:gu} is a generalized complete U-statistic and is denoted as $U_{n,s,\omega}$.  When $N < {n\choose s}$, these estimators are generalized incomplete U-statistics. 
	\end{definition}
	
	Though it is not practical to simulate ${n\choose s}$ Bernoulli random variables, fortunately, it is equivalent to first simulate $\hat{N}\sim \mathrm{Binomial}({n\choose s}, N/{n\choose s})$ and then randomly generate $\hat{N}$ subsamples without replacement. Note also that while the number of subsamples $\hat{N}$ in \cref{eqn:gu} is random, it concentrates around $N$.  
	
	Allowing for the possibility of a randomized kernel is of benefit here as it allows the results that follow to pertain to the kinds of randomized ensembles often considered in practice. The randomization parameter $\omega$ might, for example, perform some kind of feature subsampling as is commonly associated with random forests -- much further discussion along these lines is provided in \cref{normality}.  We stress however that the mere inclusion of such a randomization parameter is not where the true innovation in our work lies, nor should it be viewed as the ``essential ingredient" in what we refer to as generalized U-statistics.  Indeed, in several of the results that follow, the theoretical details needed to establish them follow a near-identical recipe regardless of whether the kernel itself takes on additional randomness.  
	
	Rather, the real benefit of considering generalized U-statistics lies in the form of the estimator itself that allows for, in essence, a random weighting to be applied to the kernel through the use of $\rho$.  
	%Note, for example, that \cref{eqn:gu} has a slightly smaller variance than its fixed counterpart in \cref{eqn:rf}. 
	As a bit of a preview of what is to follow, note also that in this generalized form, an incomplete U-statistic can be viewed as merely a complete U-statistic with a different kernel up to $N/\hat{N}$.  It is these kinds of realizations that provide significant benefits for theoretical analysis by allowing us to view incomplete U-statistics as merely a modified version of its complete form, rather than as an approximation to it that inherits a remainder term that needs to be controlled.  Furthermore, in the complete case, it can be shown that the variance of the U-statistic can be decomposed into a sum over $s$ terms and that the structure of the statistic itself shrinks the higher-order terms in that sum.  This careful examination of higher-order terms allows us to not only establish asymptotic normality, but to provide rates of convergence sharper than any in the existing literature, some of which are based on fundamental work dating back several decades.

	In the literature on classic U-statistics, many results are derived by applying a technique called the H-decomposition, which allows the statistic to be written as a sum of uncorrelated terms. \cref{app:B-HD} contains a detailed overview of the classic H-decomposition.  The idea was first introduced by \cite{Hoeffding1961}, but has analogues in many parts of statistics, most notably in the analysis of variance in balanced experimental designs; for a more general result, see \cite{Efron1981}. To handle \emph{generalized} U-statistics, we begin by extending the concept of the H-decomposition to this more general setting.
	
	\begin{definition}[H-decomposition]
		Suppose that $Z_1,\dots, Z_s$ are i.i.d.\ samples from $F_Z$ and  $h(z_1,\ldots, z_s; \omega)$ is a (possibly randomized) real valued function that is permutation-symmetric in $(z_1,\dots, z_s)$.  Let $h_i(z_1,\ldots, z_i) = \E[h(z_1,\ldots, z_i,Z_{i+1},\dots, Z_s;\omega)]-\E[h]$ for $ i=1,\dots, s$ and let
		\begin{align*}
			h^{(i)} &= h_i(z_1,\dots, z_i)-\sum_{j=1}^i\sum_{(s,j)}h^{(j)}(z_{i1},\dots, z_{ij}), \quad  \text{ for } i=1,\dots, s-1 \text{ and } \\
			h^{(s)} & = h(z_1,\dots, z_s;\omega) -\sum \limits_{j=1}^{s-1}\sum\limits_{(s,j)}h^{(j)}(z_{i1},\dots, z_{ij}).
		\end{align*}
		The H-decomposition of a generalized complete U-statistic is expressed as
		\begin{equation}
			\label{gu-hd}
			U_{n,s,\omega} = \sum_{j=1}^s{s\choose j}{n\choose j}^{-1}  \sum_{(n,j)}h^{(j)}(Z_{i1},\dots, Z_{is}).
		\end{equation}
	\end{definition}
	When no extra randomness is injected into $h$, the above definition reduces to the classic H-decomposition. Note that the randomness $\omega$ is only involved in $h^{(s)}$; for $h^{(1)},\dots, h^{(s-1)}$, it is marginalized out. Note that because each subsample is associated with an i.i.d.\ draw of the randomness $\omega$, each of the $h^{(s)}$ terms in \cref{gu-hd} involve this randomness though this notation is suppressed in \cref{gu-hd} for readability.

	%%%%%%%%%%%%%%%%%%%%%%%%%%%%%%%%%%%%%%%%%%%%%%%%%%%
	%.  Asymptotic Normality
	%%%%%%%%%%%%%%%%%%%%%%%%%%%%%%%%%%%%%%%%%%%%%%%%%%%
	
	\section{Asymptotic Normality}
	\label{normality}
	
	Before providing the asymptotic distributional results for generalized U-statistics of the form
	\begin{equation}
		\label{eqn:GenUstat}
		U_{n,s,N,\omega} = \frac{1}{\hat{N}} \sum_{(n,s)} \rho h(Z_{i1}, \dots, Z_{is}; \omega)
	\end{equation}
	we pause to emphasize the value in considering this form of estimator and to distinguish this generalization from the more classical counterparts considered in recent studies.  \cite{Mentch2016} produce a central limit theorem for infinite-order U-statistics, but consider randomized kernels only insofar as establishing that when such randomness is well-behaved, the limiting distributions are equivalent.  More recently, \cite{Wager2018} analyzed random forests constructed with all possible subsamples where the kernel can thus be written in a form where the additional randomness is marginalized out.  Such estimators take the form
	\begin{equation}
		\label{eqn:WA18}
		\binom{n}{s}^{-1} \sum_{(n,s)} {\E}_{\omega} h(Z_{i1}, \dots, Z_{is}; \omega)
	\end{equation}
	so that the kernels themselves are non-random and thus the estimator is simply a complete, infinite-order U-statistic with kernel $g = \mathbb{E}_{\omega} h(Z_{i1}, \dots, Z_{is}; \omega)$.
	
	While more convenient for theoretical analysis, random forests of the form conceived in \eqref{eqn:WA18} are not generally utilized in practice, even in small-data settings since, by construction, such a statistic involves building every possible randomized tree on every possible subsample of the data.  In practice, random forests might be loosely seen as a \emph{double} or \emph{nested} Monte Carlo approximation to the estimators in \eqref{eqn:WA18}, where one source of approximation results from using $N < {n\choose s}$ subsamples and the other results from estimating the kernel itself $\mathbb{E}_{\omega} h(Z_{i1}, \dots, Z_{is}; \omega)$ on each subsample. Recent work by \cite{rainforth2018nesting} provides an analysis of these kinds of nested approximations.
	
	In practice, however, random forests are nearly always constructed by selecting subsamples at random and pairing each with an independently selected randomization instance $\omega$, which is itself generally assumed to be selected uniformly at random.  Generalized U-statistics therefore provide a direct and accurate representation of such estimators.  We begin with a theorem establishing asymptotic normality for complete generalized U-statistics. 
	
	\begin{theorem}\label{AN-u}
		
		Let $Z_1,\dots, Z_n$ be i.i.d.\ from $F_Z$ and  $U_{n,s,\omega}$ be a generalized complete U-statistic with kernel $h(Z_1,\dots,Z_{s};\omega)$. Let $\theta=\E[h]$, $\zeta_{1,\omega}=\V(\E[h|Z_1])$ and $\zeta_s=\V(h)$.
		If $\frac{s}{n}\frac{\zeta_s}{s\zeta_{1,\omega}}\to 0 $, then
		\begin{equation}
			\label{eq:M1}
			\frac{U_{n,s,\omega}-\theta}{\sqrt{s^2\zeta_{1,\omega}/n}} \rightsquigarrow N(0,1).
		\end{equation}
		
	\end{theorem}
	The  proof of \cref{AN-u} is provided in the \cref{app:B-AN}. The general strategy is to find a linear statistic to approximate $U_{n,s,\omega}$, and show that the difference is negligible by applying the H-decomposition.

	\begin{remark} 
		The condition in \cref{AN-u} that $\frac{s}{n}\frac{\zeta_s}{s\zeta_{1,\omega}}\to 0$ can be replaced by the weaker condition that $\frac{s}{n}(\frac{\zeta_s}{s\zeta_{1,\omega}}-1) \to 0$.  In practice, this condition can be satisfied by choosing $s$ to grow slow relative to the variance ratio ${\zeta_s}/{s\zeta_{1,\omega}}$.  In particular, whenever the ratio is bounded, choosing $s=o(n)$ is sufficient.  Thus, in establishing asymptotic normality, this weaker condition may be of minimal consequence.  However, in quantifying the finite sample deviations from normality via the Berry-Esseen Theorems in \cref{berry-esseen}, this alternative condition plays an important role in establishing the bounds provided.
	\end{remark}

	Similar results for non-generalized U-statistics have appeared in the recent works discussed earlier.  Theorem 1 in \cite{Mentch2016} can be modified slightly to provide an analogous result whenever $\frac{s^2}{n}\frac{\zeta_s}{s\zeta_{1,\omega}}\to 0$. A recent result in \cite{Romano2019} proceeds along these lines.  Both results, however, could be improved by applying the H-decomposition rather than the H\'ajek projection.  Similarly, Theorem 3.1 in \cite{Wager2018} establishes asymptotic normality for non-generalized, complete U-statistics whenever the subsample size $s$ grows like $n^{\beta}$ for some $\beta < 1$.  Here though the authors are concerned only with base learners that take the form of averages over honest and regular trees and in particular, with controlling the asymptotic bias of the resulting estimator. Thus, with minor modifications, Theorem 3.1 in \cite{Wager2018} could be seen as something of a corollary to our \cref{AN-u} above, corresponding to the special case where the within-kernel randomness is held fixed or marginalized out. 
	
	The complete forms of these estimators are almost never utilized in practice due to the computational burden involved with calculating ${n\choose s}$ base learners.  Thus, armed with the results for the complete case, we now establish an analogous result for \emph{incomplete} generalized U-statistics. 
	
	\begin{theorem}\label{AN-iu}
		
		Let $Z_1,\dots,Z_n$ be i.i.d.\ from $F_Z$ and $U_{n,s,N,\omega}$ be a generalized incomplete U-statistic with kernel $h(Z_1,\dots,Z_{s};\omega)$. Let $\theta=\E[h]$, $\zeta_{1,\omega}=\V(\E[h|Z_1])$ and  $\zeta_s=\V(h)$. Suppose that $\E[|h-\theta|^{2k} ]/\E^2[|h-\theta|^k]$ is uniformly bounded for $k=2,3$ and for all $s$. If $\frac{s}{n} \frac{\zeta_s}{s\zeta_{1,\omega}} \to 0 $ and $N\to \infty$, then 
		\begin{equation}
			\label{eq:M2}
			\frac{U_{n,s,N,\omega}-\theta}{\sqrt{ {s^2\zeta_{1,\omega}}/{n} + {\zeta_{s}}/{N}}} \rightsquigarrow N(0,1).
		\end{equation}
	\end{theorem}
	
	\begin{remark} 
		Note that the variance in the theorem above takes a different form than in \cref{AN-u} but the requirement that $\frac{s}{n}\frac{\zeta_s}{s\zeta_{1,\omega}}\to 0$ remains the same.  Indeed, whenever this condition is satisfied, the complete U-statistic analogue is also asymptotically normal and normality of the incomplete version can be established as a by-product.  However, this condition and more generally, asymptotic normality of the complete version, is not necessary.  In such cases, choosing a very small ensemble size (e.g.\ N=o(n/s)) is sufficient.  More details and related results are provided in \cref{app:B-AN} along with the proof of \cref{AN-iu}.
	\end{remark}
	
	Taken together, Theorems \ref{AN-u} and \ref{AN-iu} provide the asymptotic distribution of generalized U-statistics for all possible growth rates on the number of subsamples $N$ relative to $n$.  Besides the regularity conditions on the kernel, these results require only that $\frac{s}{n}\frac{\zeta_s}{s\zeta_{1,\omega} \to 0} $.  This condition, similar to the notion of $\nu$-incrementality discussed in \cite{Wager2018}, is not overly strong but may appear somewhat arbitrary.  In the following subsection we investigate the behavior of this ratio for a variety of base learners.

	\subsection{Variance Ratio Behavior} \label{var_ratio}
	
	For a given kernel $h$, let $\hat{h}$ be the projection of $h$ onto the linear space.  We have that $\hat{h}= \sum_{i=1}^s h_1(Z_i)$ and thus
	\begin{equation}
		\label{vr}
		\frac{\V(h)}{\V(\hat{h})} = \frac{\V(h)}{s\V(\E[h|Z_1])} = \frac{\zeta_s}{s \zeta_{1,\omega}}  .
	\end{equation}
	Since $\zeta_s$ is the overall variance and $\zeta_{1,\omega}$ can be written as the variance of the expectation of the kernel conditioning on one argument, we can view the ratio in \cref{vr} as a measure of the potential influence of one single observation on the output of the kernel.  When $\zeta_s/s\zeta_{1,\omega} \to 1$, $h$ itself is asymptotically linear. More generally though, Theorems \ref{AN-u} and \ref{AN-iu} require only that $\frac{s}{n}\frac{\zeta_s}{s\zeta_{1,\omega}} \to 0 $ in order for the generalized U-statistic to be asymptotically normal.  Thus, if the limiting behavior of the variance ratio ${\zeta_s}/{s \zeta_{1,\omega}}$ is understood, the subsampling rate can be chosen to ensure the entire term approaches 0.  
	
	For simple kernels such as the sample mean and sample variance, it is straightforward to show that the limit of this variance ratio is 1, though this can also be shown to hold for more standard regression estimates such as ordinary least squares; see \cref{app:B-VR} for explicit calculations.  Here we focus our attention more on nearest-neighbor estimators and linear smoothers as these are more directly relatable to the tree-style base learners often used in practice.
	
	\begin{proposition}
		\label{knn}
		
		Let $Z_1,\dots, Z_s$ denote i.i.d.\ pairs of random variables $(X_i,Y_i)$ and suppose $Y_i = f(X_i) + \epsilon_i$ where $f$ is continuous, $\epsilon_i$ has mean $0$ and variance $\sigma^2$, and $X_i$ and $\epsilon_i$ are independent. Let $\varphi$ denote the standard k-nearest neighbor (kNN) estimator.  Then 
		\[
		\limsup_{s\to \infty}\frac{\zeta_s}{s\zeta_1} \leq c(k)
		\] 
		where 
		\[c(k) =2k /\left[ \sum_{i=0}^{k-1}\sum_{j=0}^{k-1} \frac{(i+j)!}{i!j!}\frac{1}{2^{i+j}}\right]
		\]
		so that $c(k)$ is decreasing in $k$ and $1 < c(k) \leq 2$. 
		
	\end{proposition}
	
	\begin{figure}[t]
		\centering
		\includegraphics[width=0.8\textwidth]{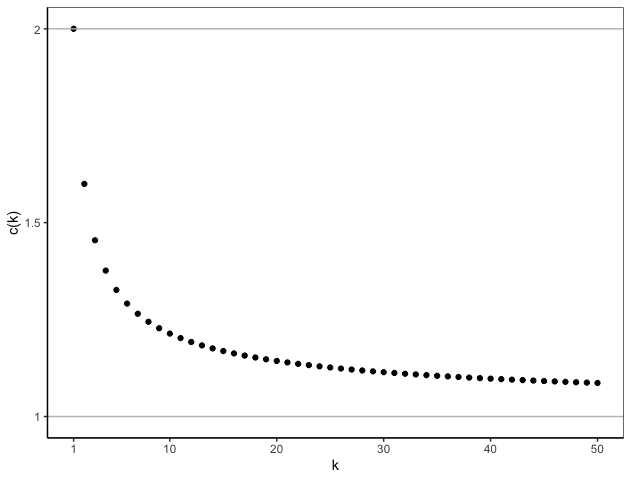}
		\caption{A plot of c(k) for $k = 1,..., 50$.}
		\label{fig:ck}
	\end{figure} 
	
	A sketch of $c(k)$ for $k = 1,..., 50$ is shown in \cref{fig:ck}. The proof of \cref{knn} is provided in \cref{app:B-VR}. Note that kNN is a nonadaptive linear smoother, the variance ratio of which is bounded above by a constant. The following result gives an upper bound for the more general class of all linear smoothers.
	\begin{proposition}
		\label{ls}
		
		Let $Z_1,\dots , Z_s$ denote i.i.d.\ pairs of random variables $(X_i,Y_i)$ and suppose $Y_i = f(X_i)+\epsilon_i$ where $f$ is bounded, $\epsilon$ has mean $0$ and variance $\sigma^2$, and $X_i$ and $\epsilon_i$ are independent. Let 
		$\varphi = \sum_{i=1}^s w(i,x,\mathbf{X})Y_i$
		such that $\sum_{i=1}^s w(i,x,\mathbf{X}) =1$, where $\mathbf{X}$ denotes $\{X_i\}_{i=1}^s$.  Then 
		\begin{equation*}
			\limsup_{s\to \infty }\frac{\zeta_s/{s\zeta_1}}{s} <\infty.
		\end{equation*}
	\end{proposition}
	
	The results above demonstrate that the behavior of the variance ratio is manageable for k-nearest neighbor base learners and more generally, linear smoothers.  Recent work \citep{Scornet2016kernel,Olson2018} has sought to draw a connection between these estimators and the CART-style trees utilized in Breiman's original random forests. The purely random forest \citep{Biau2008} that determines splits completely at random, for example, is exactly a linear smoother and thus by the above result, has a variance ratio that behaves like $O(s)$.  In work dating back even further, \cite{Lin2006} introduced the concept of \emph{potential} nearest neighbors (PNNs) and showed that random forests can be viewed as an adaptively weighted $k$-PNN method. 
	
	\begin{definition}[\citep{Lin2006}]
		A sample point $Z_i=(X_i,Y_i)$ is called a k-potential nearest neighbor ($k$-$\mathrm{PNN}$) of a target point $x$ if and only if there are fewer than $k$ sample points other than $X_i$ in the hyperrectangle defined by $x$ and $X_i$. 
	\end{definition}
	
	Typically, the number of potential nearest neighbors is much larger than the number of nearest neighbors. Existing nearest-neighbor methods, both adaptive and nonadaptive, predict by selecting and averaging over $k$ points from the set of all $k$-$\mathrm{PNNs}$. The classical kNN procedure non-adaptively chooses the $k$ points as those closest to $x$ under some metric whereas commonly used tree-based methods may have a terminal size bounded by $k$ and adaptively select points from the $k$-$\mathrm{PNNs}$ based on empirical relationships in the data. 
	
	Moving closer to this, consider the base learner that forms a prediction at $x$ by simply choosing $k$ of the $s$ observations in the subsample uniformly at random and averaging the corresponding response values.  In  \cref{app:B-RP}, we show that the resulting variance ratio for this naive estimator is given by
	\[
	\frac{\zeta_s}{s \zeta_{1,\omega}} = \frac{s}{k} = \frac{1/k\cdot s^2}{s\cdot 1}.
	\]
	Now reconsider the kNN base learner. We can view such an estimator as ``randomly" selecting $k$ points from the $k$-$\mathrm{NN}$s and again predicting by taking the average. In this case, the variance ratio can be written as 
	\begin{equation*}
		\frac{\zeta_s}{s\zeta_{1,\omega}} = O(1) = O\left(\frac{1/k\cdot k^2}{s\cdot\E[\P^2(X_1\in \mathrm{kNN}\mid X_1)]}\right).
	\end{equation*}
	The form of this result may naturally lead one to conjecture that for any base learner that predicts by randomly selecting and averaging over points in some set $A$, the resulting ratio may have the form
	\begin{equation}
		\label{conj}
		\frac{\zeta_s}{s \zeta_{1,\omega}} = O\left(\frac{1/k\cdot |A|^2}{s \cdot \E[\P^2(X_1\in A \mid X_1)]} \right).
	\end{equation}
	Consider then a simple tree-style estimator that predicts at $x$ by sampling $k$ points uniformly at random from its $k$-PNNs and averaging the corresponding response values; we refer to these random potential nearest neighbor estimators as RP trees.  The additional difficulty introduced with RP trees is that the size of this set of potential nearest neighbors is itself random, though from \cite{Lin2006}, we know that the expected number of $k$-PNNs is $O(k(\log s )^{p-1})$ and so extending our conjecture, we arrive at the following proposition. 
	We have 
	\begin{proposition}
		\label{rptree}
		Let $Z_1,\dots , Z_s$ denote i.i.d.\ pairs of random variables $(X_i,Y_i)$ and suppose $Y_i = f(X_i)+\epsilon_i$ where $f$ is bounded, $\epsilon$ has mean $0$ and variance $\sigma^2$, and $X_i$ and $\epsilon_i$ are independent. Suppose further that the density of $X$ is bounded away from 0 and infinity in $[0,1]^p$. Then for the RP tree estimator, we have 
		\begin{equation}
			%		\label{conj}
			\limsup_{s\to \infty}\frac{\zeta_s/{s \zeta_{1,\omega}}}{(\log s)^{2p-2}} <\infty.
		\end{equation}
	\end{proposition}
	The proof of  \cref{rptree} is provided in \cref{app:B-RP}. Here, asymptotic normality can be ensured by insisting on the same subsample sizes put forth in \cite{Wager2018}, namely that $s = o(n^{\beta})$ for some $0.5 < \beta < 1$.
	
	To bridge the gap even further between these completely non-adaptive trees that decide splits independent of any data and the completely adaptive CART-style trees often employed in practice that utilize all available data to decide both the tree structure and final predictions, we turn now to the idea of \emph{honesty} and \emph{double sample} trees.
	%We can build more general non-adaptive trees rather than RP tree for which the variance ratio is bounded by the same rate. 
	\begin{definition}
		We call a tree-style predictor a non-adaptive $k$-tree if $\mathrm{(a)}$ the splitting does not depend on the response values of the data, but could potentially depends the features and some other random mechanism that is independent of the data, and $\mathrm{(b)}$ the terminal size is bounded between $k$ and $2k -1$. 
	\end{definition}
	Note that part (a) of the above definition is what was referred to as the "honesty" condition in \cite{Wager2018}.  
	According to this definition, RP trees are simply a special type of non-adaptive $k$-tree. We show these RP trees are a worst case non-adaptive $k$-tree in the sense that the $\zeta_1$ parameter resulting from a non-adaptive $k$-tree can be lower bounded by that of an RP tree. 
	\begin{proposition}
		\label{non-adaptive}
		Under the same conditions on the underlying distribution as in \cref{rptree}, for the non-adaptive $k$-PNN tree, we have 
		\begin{equation}
			%		\label{conj}
			\limsup_{s\to \infty}\frac{\zeta_s/{s \zeta_{1,\omega}}}{(\log s)^{2p-2}}<\infty.
		\end{equation}
	\end{proposition}
	
	The proof of \cref{non-adaptive} is provided in \cref{app: B-AD}.  A similar result was obtained in \cite{Wager2018} where the denominator was given as $(\log s)^{p}$. \cref{app: B-AD} provides an alternative proof. 
	
	As is becoming evident, we now find ourselves caught in a rut familiar to most most researchers studying the theoretical properties of statistical and machine learning procedures.  On one hand, the non-adaptive nature of the methods described above helps a great deal in providing mathematical tractability -- in our case, by making it feasible to bound $\zeta_{1,\omega}$.  On the other, the strong performance of CART-style decision trees in practice is generally attributed, at least in large part, to their adaptive nature.  Double sample trees, introduced in \cite{Wager2018}, can be thought of as something of a compromise between these two extremes, perhaps even offering the best of both worlds in some settings.
	
	Double sample trees are constructed by first splitting the training samples into two halves - $\dataset_1$ and $\dataset_2$.  The first half is used to obtain the structure of the tree and the second half is used to form predictions by averaging across the response values in $\dataset_2$ from those points that fall in the same terminal node as the target point.  Here we also impose that the terminal nodes in the trees contain between $k$ and $2k-1$ observations from $\dataset_2$.
	
	The advantage of double sample trees is somewhat obvious:  the space can still be partitioned into small regions based on the CART criterion using data originating from the same underlying process, yet, because that data is independent of the second set of data used to form predictions, we can still guarantee control of $\zeta_{1,\omega}$.  On the other hand, because only half of the data is being used for each process, one may worry that such models would lose predictive accuracy.  Recent work on regularization in random forests, however, (see, e.g., \cite{JMLR:v21:19-905,mentch2020getting}) suggests that this could actually be potentially advantageous in low-signal settings.
	
	To see how control of $\zeta_{1,\omega}$ can be established, consider some $Z_1 \in \dataset_2$.  If we regard $\dataset_1$ as the source of the extra randomness, then the double sample tree can be viewed as a non-adaptive $k$-tree with respect to $\dataset_2$.  In particular, we can establish good control of $\zeta_{1,\omega}$ due to \cref{non-adaptive}, where $\zeta_{1,\omega}=O(1/ s(\log s )^{2p-2})$. Indeed, letting $h_1(Z_1) = \E[h(\dataset_1, \dataset_2;\omega)\mid Z_1]$, by law of total variation, we have
	\begin{equation}
		\begin{aligned}
			\zeta_{1,\omega} 
			& = \V\left(h_1(Z_1)\right) \\
			& 	\geq \E[\V(h_1(Z_1)\mid {Z_1\in \dataset_1})1_{Z_1\in \dataset_1}] +  \E[\V(h_1(Z_1)\mid {Z_1\in \dataset_2}) 1_{Z_1\in \dataset_2}] \\
			& \geq  \Pr(Z_1\in \dataset_2) \V(h_1(Z_1)\mid {Z_1\in \dataset_2}) \\
			& =  \frac{1}{2} \V(h_1(Z_1)\mid {Z_1\in \dataset_2}) \\
			& = O(1/(s(\log s)^{2p-2})).
		\end{aligned}
	\end{equation}
	
	Note that  \cite{Wager2018} obtained a similar result for double sample trees by imposing further regularity conditions on the splitting process.
	We do not require any such conditions here. \\

	In general, calculating the variance ratio for adaptive base learners without imposing specific constraints on the base learners and/or data is quite challenging.  However, we conclude our discussion here by noting that the previous calculations offer some encouragement.  Given the $k$-PNNs of some target point $x$ and considering estimators that predict by averaging over some subset of these, we showed that for non-adaptive estimators like kNN, the variance ratio is bounded. On the other hand, when the samples are selected uniformly at random from all $k$-PNNs, the variance ratio is on the order of $(\log s)^{2p-2}$.  Further and more generally, when the samples are selected in any non-adaptive fashion from all $k$-PNNs -- including via the use of an external, independent dataset -- the variance ratio remains upper bounded by $O((\log s)^{2p-2})$.
	
	Tree-based estimators, by definition, predict by averaging over subsets of potential nearest neighbors, though as discussed above, the particular fashion in which those neighbors are chosen in practice is often heavily data-dependent.  If, however, we are in a common regression setting where the response is regressed on covariates that contain some signal, then trees may heavily weight only a subset of the potential nearest neighbors, particularly in directions that can account for some of the variability in the response.  In such settings, the variance ratio may be approximately of the form in \eqref{conj} for some smaller set $S$ and therefore be smaller than that of RP trees.  In \cref{app:B-SI}, we provide a simple simulation study that suggests that the variance ratios resulting from ensembles are well-behaved in practice regardless of whether the base learners are simple non-adaptive nearest neighbor methods or CART-based decision trees.  We also provide several plots demonstrating the asymptotic normality of the resulting predictions.  
	
	We close this section by noting that in order for these asymptotic distributions to be used for inferential purposes, one must estimate the corresponding parameters.  In practice, estimating $\zeta_{1,\omega} = \V(h_1(z))$ is often quite challenging.  Since $h_1(Z_i)$ is generally unknown and we don't know the underlying distribution $F_Z$, we cannot generate new data and use a Monte Carlo method.  Rather, we can only sample only from the original data $(Z_1,\dots, Z_n)$, which inevitably introduces dependence among $\hat{h}_1(Z_1), \dots, \hat{h}_1(Z_n)$.  
	
	Recent work, however, showed that when the conditions for \cref{AN-u} hold, the proposed  ``pseudo" infinitesimal jackknife estimator (closely related to that introduced in \cite{Efron2014}) is consistent whenever $N = {n\choose s}$ \cite{peng2021bias}.  The insight is that although the dependence introduces bias, that bias will be dominated by $\zeta_{1,\omega}$ if the U-statistic $U_{n,s,\omega}$ is dominated by its linear term.  Even more importantly, this work also showed that so long as $N$ is large enough to ensure that ${n}/(Ns\zeta_{1,\omega})\to 0$, the finite sample version of pseudo infinitesimal jackknife estimator remains consistent.  Note that $s\zeta_{1,\omega}\leq \zeta_s$ and $\zeta_s$ could tend towards 0, so it is necessary that $N \gg n$.  One unfortunate practical takeaway is that the number of trees required for accurately estimating $\zeta_{1,\omega}$ can potentially be much larger than the number required to stabilize predictions from the random forest.  Interested readers are also invited to see recent work by \cite{zhou2019v} that explores this variance estimation problem through the lens of V-statistics where subsamples are drawn with replacement.  Note that if the asymptotic linearity condition in \cref{AN-u} does not hold, to the best of our knowledge, there are no methods in the existing literature capable of estimating $\zeta_{1,\omega}$ consistently given $s=o(n)$. This can be explained by the fact that since $\zeta_{1,\omega}$ represents the covariance between two trees sharing only a single common sample, whenever $s$ is large, we do not have sufficiently many such pairs with such little dependence.
	
	Finally, we note that estimating these variance parameters is not always essential for carrying out formal inference in a random forest context.  Recognizing the potential computational strain involved in estimating $\zeta_{1,\omega}$, \cite{coleman2019scalable} recently developed an asymptotically valid permutation test for assessing variable importance in random forests.  Here, rather than explicitly estimate the sampling distribution of predictions, the authors suggest building just one additional forest in which the effects of the variables under investigation are muted.  Predictions are made using the trees in each forest and the individual trees are then randomly permuted between forests and their respective accuracies are recalculated.  The intuition here is that if the features in question have no effect on predictive accuracy, forests containing trees that are unable to utilize those features should be as accurate as those with no such restrictions.

	%%%%%%%%%%%%%%%%%%%%%%%%%%%%%%%%%%%%%
	% 5. Berry-Esseen Bound
	%%%%%%%%%%%%%%%%%%%%%%%%%%%%%%%%%%%%%
	
	\section{Berry-Esseen Bounds}
	\label{berry-esseen}
	
	Given i.i.d.\ random variables $Z_1, \dots, Z_n$ with mean $\mu$ and variance $\sigma^2$, the Berry-Esseen theorem \citep{Berry1941,Esseen1942} provides a classical result describing the rate of convergence of $S_n = \sum_{i=1}^n (Z_i-\mu)/\sigma\sqrt{n}$ to the normal distribution. It states that provided the third moment  $v_3 = \E|Z-\mu|^3$ is finite, 
	\[
	\sup_{z\in \R} |F_n(z)-\Phi(z)|\leq \frac{Cv_3}{\sigma^3\sqrt{n}}
	\]
	\noindent where $F_n$ is the distribution function of $S_n$, $\Phi$ is the distribution function of the standard normal, and $C$ is a constant independent of $n$ and the $Z_i$.  Several authors (e.g.\ \cite{Callaert1978, Chan1977, Grams1973,Chen2010}) have since contributed various iterations of Berry-Esseen theorems for U-statistics. In the following sections, we derive bounds for \emph{generalized} U-statistics involving $n,s,N$, and the moments of the base learner to lend some intuition regarding how these parameters might be chosen in practice.  We utilize the H-decomposition along with novel representations of U-statistics in order to provide bounds sharper than previously established in the literature for infinite-order U-statistics as well as first-of-their-kind bounds for \emph{generalized} U-statistics.

	\subsection{Bounds for Generalized  U-statistics}
	\label{berry-esseen-thms}
	
	We begin with the following result on generalized, complete U-statistics.
	\begin{theorem}
		\label{be-1}
		
		Suppose that $Z_1,\dots, Z_n$ are i.i.d.\ from  $F_Z$ and that $U_{n,s,\omega}$ is a generalized complete U-statistic with kernel $h=h(Z_1,\dots,Z_{s}; \omega)$. Let $\theta=\E[h]$, $\zeta_s=\V(h)$ and $\zeta_{1,\omega} = \E[g^2(Z_1)]$, where $g(z)= \E[h(z,Z_2,\dots,Z_{s}; \omega)]-\theta$. Suppose further that $\zeta_s<\infty$ and $\zeta_{1,\omega}>0$, then 
		\begin{equation*}
			\sup_{z\in \R}\left|\P\left\{\frac{U_{n,s,\omega}-\theta}{\sqrt{s^2\zeta_{1,\omega}/n}}\leq z\right\}- \Phi(z)\right|\leq \frac{6.1 \E|g|^3}{n^{1/2}\zeta_{1,\omega}^{3/2}} +(1+\sqrt{2}) \left\{\frac{s}{n} \left(\frac{\zeta_s}{s\zeta_{1,\omega}}-1\right)\right\}^{1/2}.
		\end{equation*}
		
	\end{theorem}
	
	A number of important points are worth noting here.  First, when $s$ is fixed, the bound has a rate on the order of $1/\sqrt{n}$ as should be expected since this is the standard rate associated with classic (finite-order), complete U-statistics.  Additionally, when the randomness $\omega$ is held fixed so that the estimator reduces to an infinite-order U-statistic, this bound is sharper than that provided in \cite{Chen2010}, which, to our knowledge, is the sharpest to date in the existing literature.  Specifically, the bound appearing in \cite{Chen2010} replaces the term 
	\[
	\frac{s}{n} \left(\frac{\zeta_s}{s\zeta_1}-1\right)
	\]
	in the bound above, which is on the order of $s/n$, with 
	\[
	\frac{(s-1)^2\zeta_s}{s(n-s+1)\zeta_1}
	\]
	which is on the order of $s^2/n$.  An immediate consequence of this tighter bound is that when kernels are employed such that the resulting terms ${\zeta_s}/{s\zeta_1}$ and ${\E|g|^{3}}/{\zeta_1^{3/2}}$ are bounded, a subsampling rate of $s=o(n)$ is sufficient to ensure the bound converges to 0, whereas a rate of $s=o(\sqrt{n})$ would be required according to the bound given in \cite{Chen2010}.  This sharper rate we obtain is ultimately a result of utilizing \cref{chenCol} together with the H-decomposition. Full details are provided in \cref{app:C-BR}.
	
	We note also that recent work by \cite{Song_2019} considered a similar high-dimensional statistic and obtain the Berry-Esseen bound
	\begin{equation}
		\label{song}
		O\left(\frac{D_n^2\log^{q_1}(dn)}{\underline{\sigma}^2_gn}\right)^{1/6} +  O\left(\frac{s^3}{n}\left[\frac{\log^{q_2}(d)D_n^2}{s\underline{\sigma_g}^2}\right]\right)^{1/4}
	\end{equation}
	where, $q_1, q_2$ are constants, $d$ is the dimension of the kernel, and $D_n$ is a uniform bound on the Orlicz norm of the components of the kernel. Note that the first term in \cref{song} comes from the Berry-Esseen bound for the H\'ajek projection (which is a high dimensional sample mean; see \cite{chernozhukov2017central}) and the second term comes from controlling the remainder of that H\'ajek projection. The two terms in this sum correspond directly to the two terms in \cref{be-1}.

	Comparing the two bounds, we see that the second term is raised to the power of $1/4$ in \cref{song} but to the power of $1/2$ in our result in \cref{be-1}.  The authors in \cite{Song_2019} also argue that the term $\underline{\sigma_g}^2$ is lower bounded by $O(s^{-2})$, ultimately implying that $s$ is required to be at most $n^{1/4-\epsilon}$,  $\forall \epsilon>0$, whereas we require only that $s=O(n^{1-\epsilon})$.  It seems to be the \emph{multi}-dimensionality alone rather than the \emph{high}-dimensionality that is driving the suboptimal rates in \cref{song}, because we do not have a sharp bound on the size of the remainder term nor a sharp concentration inequality like that in \cref{RCE} to apply in higher dimensions.  The authors consider bounding $\underline{\sigma}^2_g$, where $\underline{\sigma}_g^2=\min \zeta_{1,j}$ for $j=1,\dots, d$, via the Cram\'er-Rao inequality and end up with $\zeta_{1,j} = O(s^{-2})$. This equality may be conservative since the bound considers the worst-case scenario. In most of our examples, $\zeta_{1}$ can be on the order of  $s^{-1-\epsilon}$, $\forall \epsilon>0$.  \\

	%%%%%%%%%%%%%%%%%%%%%%%%%%%%%%
	%%  Bounds for Generalized Incomplete U-statistics
	%%%%%%%%%%%%%%%%%%%%%%%%%%%%%
	
	Generalized incomplete U-statistics can be viewed as generalized \emph{complete} U-statistics with an alternative kernel up to the scalar $N/\hat{N}$. Recognizing this fact, we can make use of the H-decomposition and Lemma \ref{chenCol} given in the appendix to obtain the following bound for incomplete, generalized U-statistics. 
	
	\begin{theorem}
		\label{be-2}
		
		Suppose that $Z_1,\dots, Z_n$ are i.i.d.\ from $F_z$ and that $U_{n,s,N,\omega}$ is a generalized incomplete U-statistic with kernel $h=h(Z_1,\dots,Z_{s}; \omega)$. Let $\theta=\E[h]$, $\zeta_s=\V(h)$, and $\zeta_{1,\omega} = \E[g^2(Z_1)]$, where $g(z)= \E[h(z, Z_2,\dots,Z_{s}; \omega)]-\theta$ and let $p={N}/{{n\choose s}}$. Suppose further that $\zeta_s<\infty$ and $\zeta_{1,\omega}>0$. Then for any $ 0 < \eta_0 < 1/2$, 
		\[
		\begin{aligned}
			& \sup_{z\in \R}\left|\P\left\{\frac{U_{n,s,N,\omega}-\theta}{\sqrt{s^2\zeta_{1,\omega}/n}}\leq z\right\}- \Phi(z)\right| \\
			& \leq C\left\{ \frac{\E|g|^3}{n^{1/2}\zeta_{1,\omega}^{3/2}} + \left\{\frac{s}{n} \left(\frac{\zeta_s}{s\zeta_{1,\omega}}-1\right)\right\}^{1/2}  + \left\{ \frac{n}{N}\left(1-p\right) \frac{\zeta_s}{s\zeta_{1,\omega}}\right\}^{1/2} + N^{-1/2 + \eta_0}\right\}.
		\end{aligned}\]
		
	\end{theorem}
	
	The proof of \cref{be-2} is provided in \cref{app:C-BR}. 
	The preceding theorems indicate that for both infinite-order and generalized U-statistics, when incomplete versions of these estimators are used, these statistics remain asymptotically as efficient as the complete forms so long as $n=o(N)$. 
	Comparing Theorems \ref{be-1} and \ref{be-2}, note that these bounds differ only by the inclusion of two additional final terms in the sum, where $N^{-1/2+\eta_0}$ is due to the variation of $\hat{N}$ with respect to $N$ and $\left\{\frac{n}{N}(1-p)\frac{\zeta_s}{s\zeta_{1,\omega}}\right\}^{1/2}$ is due to incompleteness, which is close to 0 in such large-$N$ settings.
	
	However, in small-$N$ settings, this term can become quite large, leading to a bound nearing or even exceeding 1, thereby making it of little use. \cref{be-3} below provides improved Berry-Esseen bounds for such small-$N$ settings where relatively few base learners are employed. To achieve this, rather than writing the estimators as linear statistics plus a small additional manageable term, we take an alternative approach that views incomplete U-statistics as complete U-statistics plus a remainder. This strategy is similar to that used in \cite{chen2019randomized} who recently derived non-asymptotic Gaussian approximation error bounds for high-dimensional, incomplete U-statistics, but for kernels with fixed (finite) rank.  Proofs of the following results are provided in Appendix \ref{app:C-BR}.

	\begin{theorem}
		\label{be-3}
		
		Suppose that $Z_1,\dots, Z_n$ are i.i.d.\ from $F_Z$ and that $U_{n,s,\omega, N}$ is a generalized incomplete U-statistic with kernel $h=h(Z_1,\dots, Z_s;\omega)$. Let $\theta=\E[h]$, $\zeta_s =\V(h)$, and $\zeta_{1,\omega} = \E[g^2(Z_1)]$, where $g(z) = \E[h(z,Z_2,\dots, Z_s; \omega)]-\theta$. Suppose further that $\zeta_s<\infty$ and $\zeta_{1,\omega}>0$.  If $\E[|h-\theta|^{2k}]/\E^2[|h-\theta|^k]$ is uniformly bounded for $k=2,3$ and for all $s$. Then for any $0<\eta_0<1/2$, 
		\begin{equation*}
			\label{eq:Berry-IU}
			\begin{aligned}
				& \sup_{z\in \R}\left|\P\left\{\frac{U_{n,s,N,\omega}-\theta}{\sqrt{{s^2\zeta_{1,\omega}}/{n}+ {\zeta_s}/{N}}}\leq z\right\} -\Phi(z)\right|\\
				& \leq C\left\{\frac{\E|g|^3}{n^{1/2}(\E|g|^2)^{3/2}}  + \frac{\E|h-\theta|^3}{N^{1/2}(\E|h-\theta|^2)^{3/2}} + 
				\left\{\frac{s}{n} \left(\frac{\zeta_s}{s\zeta_{1,\omega}}-1\right)\right\}^{1/2} \right.\\
				& ~~~~\left.+\, N^{-1/2+\eta_0} +  \left(\frac{s}{n}\right)^{1/3}  \right\}
			\end{aligned}
		\end{equation*}	
		for some constant $C>0$.
		
	\end{theorem}
	
	Here we see that when $s$ is fixed, the Berry-Esseen bound is on the order of $n^{-1/3}$. When $s$ grows with $n$, the bound converges to zero as long as $s/n \to 0$ and $N \to \infty$ with some mild conditions on $h$. 	
	The fundamental task in producing this result is to show that the convolution of the two independent sequences approaches a normal distribution.  A number of approximations are required, though we give a nearly sharp bound on each in order to provide the Berry-Esseen bound shown.  As noted earlier, \cite{chen2019randomized} recently investigated a similar setup for higher-dimensional kernels assumed to be of fixed rank.  In our case, the use of an infinite-order kernel injected with extra randomness introduces additional technical difficulties, though the restriction to one-dimensional settings allows us to incorporate more useful concentration inequalities.  Thus, even for kernels assumed to have a fixed, finite rank, the result above is sharper than that provided in \cite{chen2019randomized} for the one-dimensional setting.  
	
	As an additional benefit, we note that the bound above consisting of a five-term sum contains insightful terms not produced in \cite{chen2019randomized}.  In particular, the second term corresponds to the bound that would be available for an estimator that takes an average of i.i.d.\ random variables, while the first term plus the third term gives the bound for the complete infinite-order U-statistic setting.  This leads to the very natural intuition that when $N$ is quite small, the bound produced is approximately what would be expected by averaging over independent base learners whereas when $N$ is large, the bound is approximately what we would expect for a complete infinite-order U-statistic.  Again, $N^{-1/2+\eta_0}$ is due to the variation of $\hat{N}$ with respect to $N$.  To see where the final term $(s/n)^{1/3}$ comes from, we now delve into the proof details in the following subsection.

	%%%%%%%%%%%%%%%%%%%%%%%%%%%%%%
	%  A Tighter Bound 
	%%%%%%%%%%%%%%%%%%%%%%%%%%%%%%
	
	\subsection{A Tighter Bound }
	
	In order to obtain the previous bounds, we first condition on $Z_1,\dots, Z_n$ and obtain a Berry-Esseen bound for the difference between the infinite-order forms of incomplete and complete U-statistics, $U_{n,s,N} - U_{n,s}$. 
	The terms involved in this bound are themselves infinite-order U-statistics with kernels that are power functions of the original kernel $h$.  We make use of Chebyshev's inequality to replace those infinite-order U-statistics by their population mean, the application of which requires no particular assumptions on the tail behavior of the kernel.  This approach, however, leads to the non-optimal term of $(\frac{s}{n})^{1/3}$.  We thus conclude our discussion on Berry-Esseen Theorems by showing in this final subsection that placing additional assumptions on the kernel $h$ can allow the application of sharper concentration inequalities that can therefore allow the term $(\frac{s}{n})^{1/3}$ to be replaced by $(\frac{s}{n})^{1/2}$. 
	
	\begin{theorem}
		\label{be-4}
		Suppose that $Z_1,\dots, Z_n$ are i.i.d.\ from $F_Z$ and that  $U_{n,s,N,\omega}$ is a generalized incomplete U-statistic with kernel $h=h(Z_1,\dots, Z_s;\omega)$. Let $\theta=\E[h]$, $\zeta_s = \V(h)$ and $\zeta_{1,\omega} = \E[g^2(Z_1)]$, where $g(z) = \E[h(z,Z_2,\dots, Z_s; \omega)]-\theta$. Suppose further that $\zeta_s<\infty$ and $\zeta_{1,\omega}>0$. 
		If $|h-\theta|^k$ is sub-Gaussian after standardization with variance proxy that is uniformly bounded for $k=2,3$ and all $s$, then for any $0<\eta_0,\eta<1/2$,
		\begin{equation*}
			\label{eq:Berry-IU-subgaussian}
			\begin{aligned}
				& \sup_{z\in \R}\left|\P\left\{\frac{U_{n,s,N,\omega}-\theta}{\sqrt{{s^2}\zeta_{1,\omega}/n+ {\zeta_s}/{N}}}\leq z\right\} -\Phi(z)\right|\\
				& \leq C\left\{\frac{\E|g|^3}{n^{1/2}(\E|g|^2)^{3/2}}  + \frac{\E|h-\theta|^3}{N^{1/2}(\E|h-\theta|^2)^{3/2}} + \left[\frac{s}{n} \left(\frac{\zeta_s}{s\zeta_{1,\omega}}-1\right)\right]^{1/2} \right.\\ 
				&~~~~  \left. +\, N^{-1/2+\eta_0} + \left(\frac{s}{n}\right)^{-1/2+\eta} \right\},
			\end{aligned}
		\end{equation*}
		where $C>0$ is some constant.
		
	\end{theorem}
	Note that since there is a trade-off between the probability and concentration bound, larger $\eta$ or $\eta_0$ requires a larger $n$ to ensure the above inequality holds. Proof details of \cref{be-4} are given in \cref{app:C-TB} for the incomplete infinite-order U-statistic setting; the extension to \emph{generalized} incomplete U-statistics follows in an identical fashion.

	%%%%%%%%%%%%%%%%%%%%%%%%%%%%%%%%%%%%%%
	% 6. Discussion
	%%%%%%%%%%%%%%%%%%%%%%%%%%%%%%%%%%%%%%
	\section{Discussion}\label{discussion}
	
	The previous sections establish distributional results for random forest estimators, which take the form of \emph{generalized} U-statistics.  We showed that under mild regularity conditions, such estimators tend to a normal distribution so long as $\frac{s}{n}\left( \frac{\zeta_s}{s\zeta_{1,\omega}} -1 \right)\to 0$.  When kernels are well-behaved, this thus implies that subsamples may be taken on the order of $n$ while retaining the asymptotic normality of the estimator. In practice, we expect that this condition is often most naturally satisfied by subsampling at a slower rate with $s=o(n)$ and ensuring that the corresponding variance ratio ${\zeta_s}/{s\zeta_{1,\omega}}$ is bounded. In Section 3 we showed that the variance ratio is well-behaved for a number of nearest-neighbor-type base learners.  In general though, such behavior is not well-understood, particularly for adaptive learners and such investigations present promising opportunities for future research. In \cref{berry-esseen} we provide Berry-Esseen bounds to quantify the proximity of these estimators to the normal distribution.  \cref{be-1} provides the sharpest bound to date on this rate for complete, infinite-order U-statistics, while the bounds that follow are each the first of their kind.
	
	Throughout this work, asymptotic results for incomplete U-statistics are often based on the asymptotic result for the complete analogue.  Indeed, this is how the condition on the variance ratio arises. While natural, this approach is not strictly necessary for establishing asymptotic normality of the incomplete version, particularly when the number of subsamples is quite small. Further discussion on this somewhat special case is provided in the appendix.

	%%%%%%%%%%%%%%%%%%%%%%%%%%%%%%%%%%%%%%%%%%%%%%%%%%%
	%%%%%%%%%%%%%%%%%%%%%%% Appendix %%%%%%%%%%%%%%%%%%%%%%%
	%%%%%%%%%%%%%%%%%%%%%%%%%%%%%%%%%%%%%%%%%%%%%%%%%%%

	%%%%%%%%%%%%%%%%%%%%%%%%%%%%%%%%%%%%%%%%%%%%%%%%%%
	% Acknowledgements
	%%%%%%%%%%%%%%%%%%%%%%%%%%%%%%%%%%%%%%%%%%%%%%%%%%
	\section*{Acknowledgements}
	We would like to thank Larry Wasserman for helpful conversations and feedback.  
	
	\bibliographystyle{unsrt} 
   %\bibliography{paper-ref}
   
   \bibliographystyle{Chicago}

	\appendix
	%%%%%%%%%%%%%%%%%%%%%%%%%%%%%%%%%%%%%%%%%
	\section{Proofs in  \cref{background}}
	\label{app:background}
	%%%%%%%%%%%%%%%%%%%%%%%%%%%%%%%%%%%%%%%%%
	
	Here we provide a fuller discussion of the previously established central limit theorems for randomized, incomplete, infinite-order U-statistics, paying particular attention to the relationship between the projection method utilized and the resulting subsampling rate necessary in order to retain asymptotic normality.  As noted in \cref{background}, \cite{Mentch2016} provided one such theorem, but with somewhat strict conditions.  First, the authors require that for all $\delta>0$,
	\[
	\frac{1}{\zeta
		_{1,\omega}} \int_{\left|h_1(z)\right|\geq \delta\sqrt{n\zeta_{1,\omega}} }  h_1^2(z) \,dP \to 0 \quad  (n\to \infty)
	\]
	where $h_{1}(z) =  \E[h(z,Z_2,\dots,Z_{s};\omega)]-\theta$.  Note however that so long as $\E[h^2(Z_1, \dots , Z_s;\omega)]<\infty$, 
	\[
	\frac{1}{\zeta_{1,\omega}} {\int}_{\left| h_{1}\right| \geq \delta\sqrt{n\zeta_{1,\omega}}}  h^2_{1}(Z)\,dP 
	= {\int}_{ \left|\frac{h_{1}}{\sqrt{\zeta_{1,\omega}}}\right|\geq \delta\sqrt{n} } \left(\frac{h_{1}}{\sqrt{\zeta_{1,\omega}}}\right)^2\,dP
	\]
	automatically tends to 0 as $n \to \infty$ and thus this condition is redundant for kernels assumed to have finite second moment.

	In Section 2, we noted that there is strong reason to suspect that a subsampling rate of $s=o(n^{1/2})$ is the largest possible when the results are established via H\'ajek projections.  We now elaborate on that point here.  
	
	Let $\mathcal{S}$ denote the set of all variables of the form  $\sum_{i=1}^n g_i(Z_i)$ for arbitrary measurable functions $g_i:\R^d \mapsto \R$ with $\E[ g_i^2(Z_i)]<\infty$ ($i = {1,\dots, n}$). The H\'ajek projection of $U_{n,s}$ onto $\mathcal{S}$ is
	\[  
	\hat{U}_{n,s} =  \theta+ \frac{s}{n}\sum_{i=1}^n h_{1}(Z_i).
	\]
	Now, by the central limit theorem for i.i.d case,  we have 
	$
	{\sqrt{n}\hat{U}_{n,s}}/{\sqrt{s^2\zeta_{1}}}\rightsquigarrow N(0,1)
	$
	and thus by Theorem 11.2 in \cite{vaart_1998}, to obtain the asymptotic normality of U-Statistic, it is sufficient to demonstrate that ${\V(U_{n,s})}/{\V(\hat{U}_{n,s})}\to1$.  This is straightforward when the rank of the kernel is fixed but requires more careful attention whenever $s$ is allowed to grow with $n$. The variance of the U-statistic is
	\begin{equation*}
		\begin{aligned}
			\V(U_{n,s}) 
			&= {n\choose s}^{-1} \sum_{\beta}\sum_{\beta'} \Cov( h(Z_{\beta_1},\dots, Z_{\beta_{s}}), h(Z_{\beta_1'},\dots,   Z_{\beta'_{s}})) \\
			& = {n \choose s}^{-1} \sum_{j=1}^{s} {s \choose j}{n-s \choose s-j} \zeta_j \\
			& = \sum_{j=1}^{s} \frac{{s!}^2}{j!(s-j)!^2}\frac{(n-s)\cdots (n-2s+j+1)}{n(n-1)\cdots(n-s+1)}\zeta_j
		\end{aligned}
	\end{equation*}
	where $\beta$ indexes subsamples of size $s$, and the variance of $\hat{U}_{n,s}$ is
	\begin{equation*}
		\V(\hat{U}_{n,s})= \frac{s^2}{n}\V\left( h_1(Z_1)\right) =  \frac{s^2}{n}\zeta_1.
	\end{equation*}
	The variance ratio is then $ {\V(U_{n,s})}/{\V(\hat{U}_{n,s})} = {(a_n+b_n)}/{c_n}$, where 
	\begin{align*}
		a_n &=\frac{s^2}{n}\frac{(n-s)\cdots (n-2s+2) }{(n-1)\cdots (n-s+1)}\zeta_1, \\
		b_n &={n \choose s}^{-1}\sum \limits_{j=2}^s {s \choose j}{n-s\choose s -j}\zeta_j, \\
		c_n &=\frac{s^2}{n}\zeta_1.
	\end{align*}
	Thus, in order for the variance ratio to converge to 1, it suffices to show ${a_n}/{c_n}\to 1$ and ${b_n}/{c_n}\to 0$. To transform these two conditions with respect to $s$ and $n$, we introduce the following lemmas.  
	
	\begin{lemma}[\citep{lee1990u}]
		\label{lemma1}
		For $1 \leq c\leq d\leq s$, ${\zeta_s}/{c} \leq {\zeta_d}/{d}$.
	\end{lemma}
	
	\begin{lemma}\label{lemma2}
		Let ${H}(n,s)=\left[\frac{(n-s)\cdots (n-2s+2)}{(n-1)\cdots (n-s+1)}\right]$, then $s/\sqrt{n}\to 0$ if and only if $H(n,s) \to 1$.
	\end{lemma}
	
	\begin{proof}
		When $s/\sqrt{n}\to 0$, we have 
		\[
		\begin{aligned}
			H(n,s)
			& \geq \left[\frac{n-2s+2}{n-1}\right]^{s-1} \\
			& = \exp\left[ (s-1)\log \left(1- \frac{2s-3}{n-1}\right)\right] \\
			& \approx \exp\left[-\frac{2s^2}{n}\right] \rightarrow 1.
		\end{aligned}
		\]
		If there exists a subsequence $\{s'\}$ such that $s' /\sqrt{n'}\geq c$ for some constant $c>0$, then 
		\[
		\begin{aligned}
			H(n',s')
			& \leq \left[\frac{n'-{3s'}/{2}+1}{n-{s'}/{2}}\right]^{s'-1} \\
			& = \exp \left[ (s'-1)\log\left( 1- \frac{s'-1}{n-{s'}/{2}}\right)\right] \\
			& \approx \exp\left[-\frac{s'^2}{n'}\right] <1.
		\end{aligned}
		\]
	\end{proof}
	
	Now, we can transform the conditions on $a_n,b_n$ and $c_n$ into conditions on $n$ and $s$. Note that 
	\begin{equation*}
		{a_n}/{c_n} = H(n,s)
	\end{equation*}
	and 
	\begin{equation*}
		\begin{aligned}
			{b_n}/{c_n} 
			&={ n-1\choose s-1}^{-1} \left\{ \sum \limits_{j=1}^{s-1}\frac{1}{j+1} {s-1\choose j}{(n-1)-(s-1)\choose (s-1)-j}\frac{\zeta_{j+1}}{\zeta_1} \right\} \\
			& = { n-1\choose s-1}^{-1}  \left\{\sum \limits_{j=1}^{s-1} {s-1\choose j}{(n-1)-(s-1)\choose (s-1)-j}\frac{\zeta_{j+1}}{(j+1)\zeta_1} \right\}\\
			&  \geq 1 - \left[\frac{(n-s)\cdots (n-2s+2)}{(n-1)\cdots (n-s+1)}\right] \\
			& = 1 - H(n,s).
		\end{aligned}
	\end{equation*}
	Due to Lemma \ref{lemma2}, $s/\sqrt{n}\to 0$ is the necessary condition for ${b_n}/{c_n}\to 0$ and ${a_n}/{c_n}\to 1$. Thus, if we utilize the H\'ajek projection and follow the above approach in establishing that the variance ratio converges to 1, there is no apparent way to relax the condition that $s/\sqrt{n}\to 0$.  On the other hand, the H-decomposition we use in \cref{normality} provides a finer approach and a better method for comparing the variance of $U_{n,s}$ and $\hat{U}_{n,s}$ thereby allowing for a faster subsampling rate to be employed.
	
	\vspace*{-10pt}

	\section{Proofs in Section \ref{normality}}
	\label{app:normality}
	%%%%%%%%%%%%%%%%%%%%%%%%%%%%%%%%%%%%%%%%
	%%%%%%%%%%%%%%%%%%%%%%%%%%%%%%%%%%%%%%%%
	
	\subsection{H-decomposition}
	\label{app:B-HD}
	
	Distributional results for U-statistics are typically established via projection methods whereby some projection $\hat{U}$ is shown to be asymptotically normal with $|U-\hat{U}| \to 0$ in probability. The most popular projections are the H\'ajek projection and the H-decomposition. We show in \cref{app:background} that the approach of  H\'ajek projection always requires $s/\sqrt{n}\to 0$ undesirably.
	Alternatively, the H-decomposition provides a representation of U-statistics in terms of sums of other uncorrelated U-statistics of rank $1,\dots, s$.  The form of this decomposition presented here is derived by \cite{Hoeffding1961}.
	We illustrate those techniques in the setting of the original U-statistic $U_{n,s}$ for simplicity and then extend them to the generalized complete U-statistic $U_{n,s,\omega}$.
	Let
	\begin{equation*}
		h_c(z_1,\dots, z_c) = \E[h(z_1,\dots, z_c,Z_{c+1},\dots,Z_{s})]-\theta,
	\end{equation*}
	and define kernels $h^{(1)},h^{(2)},\dots, h^{(s)}$ of degree $1,\dots,s$ recursively as 
	\begin{align}
		h^{(1)} &= h_1(z_1) \nonumber \\
		h^{(2)}  &= h_2(z_1,z_2)- h_1(z_1)-h_1(z_2) \nonumber \\
		&    \vdots \label{eq:hd} \\
		h^{(s)} &= h_s(z_1,\dots, z_s)-\sum \limits_{j=1}^{s-1}\sum\limits_{(s,j)}h^{(j)}(z_{i1},\dots, z_{ij}). \nonumber
	\end{align}
	These kernel functions have many important and desirable properties, a sample of which are enumerated in the following proposition.
	\begin{proposition}[\citep{lee1990u}]
		\label{prop1}
		
		For $h^{(j)}$, $j =1,\ldots,s$ defined as above, we have 
		\begin{description}
			\item[1.] For $c=1,\ldots, j-1$, 
			$\E[h^{(j)}(z_1,\dots ,z_c,Z_{c+1},\dots, Z_j)]= 0.$
			
			\item[2.] $\E[h^{(j)}(Z_1,\ldots,Z_j)] = 0.$
			
			\item[3.] Let $j<j'$ and $S_1$ and $S_2$ be a $j$-subset of $\left\{Z_1,\dots, Z_n\right\}$ and a $j'$-subset of $\left\{Z_1,\dots,Z_n\right\}$ respectively, then
			$ \Cov (h^{(j)}(S_1), h^{(j')}(S_2)) = 0$.
			
			\item[4.] Let $S_1\neq S_2$ be two distinct $j$-subsets of $\left\{Z_1,\dots, Z_n\right\}$, then
			$ \Cov(h^{(j)}(S_1), h^{(j')}(S_2)) = 0$.
		\end{description}
	\end{proposition}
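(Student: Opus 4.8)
The plan is to prove parts (a) and (b) jointly by strong induction on the degree $j$, and then read off parts (c) and (d) from them via a single conditioning identity. For the base case $j=1$ we have $h^{(1)}(z_1)=h_1(z_1)=\E[h(z_1,Z_2,\dots,Z_s)]-\theta$, so $\E[h^{(1)}(Z_1)]=\E[h(Z_1,\dots,Z_s)]-\theta=0$, which is (b); statement (a) is vacuous when $j=1$. Assuming (a) and (b) at every degree $<j$, fix $1\le c\le j-1$ and integrate the defining recursion
\[
h^{(j)}(z_1,\dots,z_j)=h_j(z_1,\dots,z_j)-\sum_{k=1}^{j-1}\sum_{(j,k)}h^{(k)}(z_{i1},\dots,z_{ik})
\]
over $Z_{c+1},\dots,Z_j$ with $z_1,\dots,z_c$ held fixed. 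The tower property gives $\E\bigl[h_j(z_1,\dots,z_c,Z_{c+1},\dots,Z_j)\bigr]=h_c(z_1,\dots,z_c)$, so the work is in controlling the sum.

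The main step is the bookkeeping for that sum: split the $k$-subsets $T\subseteq\{1,\dots,j\}$ into (i) those with $T\subseteq\{1,\dots,c\}$, on which the partial expectation acts trivially, and (ii) those meeting $\{c+1,\dots,j\}$, for which $|T\cap\{1,\dots,c\}|\le k-1<k$, so by the induction hypothesis (parts (a)--(b) at degree $k<j$) the conditional expectation of $h^{(k)}(z_T)$ is $0$. What remains is $h_c(z_1,\dots,z_c)-\sum_{k=1}^{c}\sum_{T\subseteq\{1,\dots,c\},\,|T|=k}h^{(k)}(z_T)$; isolating the unique $k=c$ term as $h^{(c)}(z_1,\dots,z_c)$ and invoking the recursion \emph{at degree $c$} shows this collapses to $h^{(c)}-h^{(c)}=0$, which is (a) at degree $j$. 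Taking one further expectation over $Z_1$ and using (a) (or the base case when $j=1$) then yields (b).

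For parts (c) and (d) the single observation is that whenever $|S_1|=j\le|S_2|=j'$ with $(j,S_1)\neq(j',S_2)$ one has $|S_1\cap S_2|<j'$ — immediate if $j<j'$, and a consequence of $S_1\neq S_2$ if $j=j'$. Both kernels have mean zero by (b), so it suffices to show $\E[h^{(j)}(S_1)\,h^{(j')}(S_2)]=0$; conditioning on $S_1$,
\[
\E\bigl[h^{(j)}(S_1)\,h^{(j')}(S_2)\bigr]=\E\Bigl[h^{(j)}(S_1)\,\E\bigl[h^{(j')}(S_2)\mid S_1\bigr]\Bigr],
\]
and since the arguments of $S_2$ lying outside $S_1$ are independent of $S_1$ we have $\E[h^{(j')}(S_2)\mid S_1]=\E[h^{(j')}(S_2)\mid S_1\cap S_2]$, a conditional expectation of $h^{(j')}$ given strictly fewer than $j'$ of its arguments, hence $0$ by (a)--(b). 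Specializing to $j<j'$ gives (c), and to $j=j'$ with $S_1\neq S_2$ gives (d) (with the second kernel read as $h^{(j)}(S_2)$).

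The genuinely delicate part is the combinatorial step in the induction for (a): one must track precisely which sub-kernels are annihilated by the partial expectation and then recognize that the survivors reassemble, via the recursion one degree lower, into exactly $h^{(c)}$. Everything downstream — part (b), and then (c) and (d) through the tower/independence argument — is routine once (a) is established.
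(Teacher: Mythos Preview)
Your proof is correct. The paper itself does not supply a proof of this proposition; it simply cites \cite{lee1990u} and moves on, so there is nothing in the manuscript to compare against. What you have written is essentially the standard textbook argument (and is in fact the approach taken in Lee's book): prove the degeneracy property (a) by strong induction on the kernel degree, using the defining recursion both at level $j$ and at level $c$ to cancel the surviving terms, then obtain (b), (c), and (d) as immediate consequences via the tower property and independence. The combinatorial bookkeeping you describe --- partitioning the $k$-subsets according to whether they are contained in the conditioning set, and noting that the survivors reassemble into the recursion one degree lower --- is exactly the crux, and you have handled it correctly.
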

	$h=h(Z_1,\dots, Z_n)$ can be written as $h = \sum_{j=1}^s\sum_{(s,j)}h(Z_{i1},\dots, Z_{ij})$
	and the expression of $U_{n,s}$ now follows easily as
	\begin{equation*}
		\label{eq:ru}
		\begin{aligned}
			U_{n,s}-\theta 
			& = {n \choose s}^{-1}\sum_{(n,s)}h_s(Z_{i1},\dots, Z_{is})\\
			& =   {n \choose s}^{-1}\sum_{(n,s)} \left\{\sum_{j=1}^{s}\sum_{(s,j)}h^{(j)}(Z_{i1},\dots, Z_{ij})\right\} \\
			&= \sum_{j=1}^s {n \choose s}^{-1}\sum_{(n,s)}\sum_{(s,j)}h^{(j)}(Z_{i1},\dots, Z_{ij}) \\
			& =  \sum_{j=1}^s {s \choose j}H_n^{(j)}
		\end{aligned}
	\end{equation*}
	where $H^{(j)}_n = {n \choose j}^{-1}\sum_{(n,j)}h^{(j)}(Z_{i1},\dots ,Z_{ij})$
	is itself a U-statistic, the usefulness of which lies in the fact that $H_n^{(j)}~(j = 1,\dots, n)$ are uncorrelated and the terms in $H_n^{(j)}$ are also uncorrelated.  Because of the properties above, the variance of the kernel is 
	\begin{equation}
		\label{h-variance}
		\V(h) = \V \left\{\sum_{j=1}^{s}\sum_{(s,j)}h^{(j)}(Z_{i1},\dots, Z_{ij})\right\} = \sum_{j=1}^s {s \choose j} V_j 
	\end{equation}
	where $V_j = \V(h^{(j)}(Z_{i1},\dots,Z_{ij}))$. Similarly, the variance of the U-statistic can be written as
	\begin{equation}
		\label{u-variance}
		\V(U_{n,s}) = \V\left\{\sum_{j=1}^s {s \choose j}H_n^{(j)}\right\}=  \sum \limits_{j=1}^s {s \choose j}^2 {n\choose j}^{-1} V_j.
	\end{equation}
	Note that the first-order term $sH_n^{(1)}$ is exactly the same as in the H\'ajek projection $\hat{U}_{n,s}$, but the H-decomposition provides a convenient alternative representation of U-statistics as well as their variance.  In \cref{normality}, we exploit this fact to derive a tighter and more general central limit theorem for generalized U-statistics.

	%%%%%%%%%%%%%%%%%%%%%%%%%%%
	% Proof of Thm 3.1
	%%%%%%%%%%%%%%%%%%%%%%%%%%
	
	\subsection{Proofs of Asymptotic Normality}
	\label{app:B-AN}

	\noindent \textbf{Proof of \cref{AN-u}:  } The generalized complete U-statistic and the base learner can be written in terms of the new kernel functions $h^{(1)},\dots, h^{(s)}$ defined in relation to the H-decomposition.  Let $V_{i,\omega} = \V(h^{(i)})$ for $i=1,\ldots, s-1$, $V_s = \V(h^{(s)})$ and define
	\[ 
	V_{s,\omega} = \V\left \{h_s(Z_1,\dots, Z_s)-\sum \limits_{j=1}^{s-1}\sum\limits_{(s,j)}h^{(j)}(Z_{i1},\dots, Z_{ij})\right \} .
	\]
	These new kernels $h^{(1)},\dots, h^{(s)}$ still retain the desirable properties in Proposition \ref{prop1}. Thus, similar to \cref{h-variance}, \cref{u-variance}, we have the following expressions for the variance of the kernel and generalized U-statistic:
	\begin{align}
		\label{Variances}
		&\V(h)  = \zeta_{s} = \sum_{j=1}^{s-1} {s\choose j} V_{j,\omega} + V_s, \nonumber\\
		&\V(\hat{U}_{n,s,\omega}) = \frac{s^2}{n}V_{1,\omega} = \frac{s^2}{n}\zeta_{1,\omega},  \\
		&\V(U_{n,s,\omega}) = \sum_{j=1}^{s-1} {s\choose j}^2 {n\choose j}^{-1} V_{j,\omega}+ {n \choose s}^{-1} V_s . \nonumber
	\end{align}
	The sequence $ \hat{U}_{n,s,\omega}/\sqrt{s^2\zeta_{1,\omega}/n}$ converges weakly to $N(0,1)$ by the central limit theorem since  $\hat{U}_{n,s,\omega} = \frac{s}{n}\sum_{i=1}^n h_1(Z_i)$ is a sum of i.i.d.\ random variables for each $s$, which satisfies Lindeberg's condition automatically. From \cref{Variances}, we have 
	\begin{equation}
		\label{ieq}
		\begin{aligned}
			\frac{\V(U_{n,s,\omega})}{\V(\hat{U}_{n,s,\omega})} 
			& = \left(\frac{s^2}{n}V_{1,\omega}\right)^{-1} \left \{\sum_{j=1}^{s-1} {s\choose j}^2 {n\choose j}^{-1} V_{j,\omega}+ {n \choose s}^{-1} V_s\right \}\\
			& \leq 1 + \left(\frac{s^2}{n}V_{1,\omega}\right)^{-1}\frac{s^2}{n^2}\left\{\sum_{j=2}^{s-1} {s\choose j}V_{j,\omega} + V_s\right\} \\
			&  \leq 1 + \frac{s}{n}\frac{\zeta_s}{s\zeta_{1,\omega}} \to 1.
		\end{aligned}
	\end{equation}  
	Thus by Theorem 11.2 in \cite{vaart_1998}, we obtain 
	$\frac{U_{n,s,\omega}-\theta}{\sqrt{s^2\zeta_{1,\omega}/n}} \rightsquigarrow N(0,1)$. \hfill $\blacksquare$

	%%%%%%%%%%%%%%%%%%%%%%%%%%%%
	% Proof of Thm 3.2
	%%%%%%%%%%%%%%%%%%%%%%%%%%%%
	
	\noindent \textbf{Proof of \cref{AN-iu}: } Without loss of generality, let $\theta=0$ and observe that	\[ 
	\begin{aligned}
		U_{n,s,N,\omega} 
		& = \frac{1}{\hat{N}}\sum_{(n,s)}\rho h(Z_{i1},\ldots, Z_{is}; \omega)\\
		& = \frac{N}{\hat{N}} \left[U_{n,s,\omega} +  \frac{1}{N}\sum_{(n,s)}(\rho-p)h(Z_{i1},\ldots, Z_{is}; \omega)\right] \\
		& = \frac{N}{\hat{N}}\left[A_n + B_n\right],
	\end{aligned} \]
	where $A_n$ and $B_n$ are uncorrelated and $\V(B_n)= (1-p)\zeta_s/N:=d^2_{n,S,N}$. Since ${N}/{\hat{N}}\xrightarrow[]{p}1$ as $N\to \infty$, we need only show the distributional result of $A_n + B_n$ and apply Slutsky's theorem to obtain that of $U_{n,s,N,\omega}$.

	First, consider the case that $p = N/{n\choose s} \not \to 0$. Since $\smash{\frac{s}{n}\frac{\zeta_s}{s\zeta_{1,\omega}} \to 0}$, by \cref{AN-u} we have $A_n/\sqrt{s^2\zeta_{1,\omega}/n}\rightsquigarrow  N(0,1)$. Moreover, we have 
	\begin{equation*}
		\frac{\V(B_n)}{\V(A_n)} \to \frac{ N^{-1}(1-p)\zeta_s}{s^2\zeta_{1,\omega}/n} \leq \frac{n^2}{Ns^2}\cdot \frac{s}{n}\frac{\zeta_s}{s\zeta_{1,\omega}} \to 0.
	\end{equation*}
	Thus $U_{n,s,N,\omega}/\sqrt{s^2\zeta_{1,\omega}/n}\to N(0,1)$, implying \cref{eq:M2}.
	
	Now, consider the case that $p\to 0 $ and define
	\begin{equation*}
		\begin{aligned}
			\phi_{A_n+B_n}(t) =
			& ~\E\left[\exp\left(it \left( \frac{s^2\zeta_{1,\omega}}{n} +  \frac{\zeta_s}{N}\right)^{-1/2}(A_n+B_n)\right)\right] \\
			& = \E \left[    \exp\left(it\left(\frac{s^2\zeta_{1,\omega}}{n}+\frac{\zeta_s}{N}\right)^{-1/2}A_n\right) \right.\times \\ 
			& ~~~~~~~~~ \left.\E\left[\exp\left(it\left(\frac{s^2\zeta_{1,\omega}}{n}+\frac{\zeta_s}{N}\right)^{-1/2}B_n\right)\mid Z_1,\ldots, Z_n; \omega\right]\right]  \\
			& = \E\left[\hat{\phi}_{A_n}(t)\hat{\phi}_{B_n}(t)\right].
		\end{aligned}
	\end{equation*}
	We will show $\phi_{A_n+B_n}\to e^{-\frac{t^2}{2}}$, of which the key step is to demonstrate that  $\hat{\phi}_{B_n}(t)$ is well behaved with high probability.
	Note that $U_2 = {n\choose s}^{-1}\sum_{(n,s)}h^2(Z_{i1},\dots, Z_{is};\omega)$ is a complete U-statistic with kernel $h^2$.  For any $\epsilon>0$, by Chebyshev's inequality, we have 
	\[
	\P\left\{|U_2-\E[h^2]|\geq \epsilon  \E[h^2]\right\} \leq \frac{s}{n}\frac{\E[h^4]}{\epsilon^2\E^2[h^2]}\leq \frac{s}{n}\frac{C}{\epsilon^2},
	\]
	which indicates that  $U_2/\zeta_s(=U_2/\E[|h|^2])\xrightarrow[]{p}1$.  $U_3/\E[|h|^3]\xrightarrow[]{p}1 $ also holds  by a similar argument, where  $U_3 = {n\choose s}^{-1}\sum_{(n,s)} |h(Z_{i1},\dots, Z_{is};\omega)|^3$. Let  $D=\left\{U_2/\E[h^2] \in [1-\delta, 1+\delta]\right\}\cap \left\{U_3/\E[|h|^3] \in [1-\delta, 1+\delta]\right\}$.  Then for any $\delta, \epsilon>0$,  $D$  holds with probability at least $1-\epsilon$ for $n$ sufficiently large.  Let $\hat{d}_{n,s,N} = \left[(1-p)U_2/N\right]^{1/2}$ and consider $B_n/\hat{d}_{n,s,N}\mid Z_1,Z_2,\dots, Z_n;\omega$, which is a sum of independent random variables. Thus, to establish asymptotic normality, it suffices to check the Lyapounov's condition.  We have
	\begin{equation*}
		\mathcal{L} = \frac{{n\choose s}^{-1}\sum_{(n,s,)} |h|^3}{\left({n\choose s}^{-1}\sum_{(n,s)}|h|^2\right)^{3/2}}  \frac{1-2p+2p^2}{\sqrt{N(1-p)}} =  \frac{1-2p+2p^2}{\sqrt{N(1-p)}} \frac{U_3}{U_2^{3/2}}.
	\end{equation*}
	Thus, as $N\to \infty$ and $p\to 0$, 
	\begin{equation*}
		\mathcal{L} \leq \frac{1+\delta}{(1-\delta)^{3/2}} \frac{\E[|h|^3]}{\E^{3/2}[|h|^2]} \frac{1-2p+2p^2}{\sqrt{N(1-p)}} \to 0
	\end{equation*}
	uniformly with respect to $Z_1,\dots, Z_n$ and $\omega$ over $D$ and hence we have
	\[
	\E \left[\exp\left(iuB_n/(\hat{d}_{n,s,N})\right)\mid Z_1,\dots,Z_n;\omega\right] \to e^{-\frac{u^2}{2}}
	\]
	uniformly over any finite interval  of $u$ and uniformly with respect to $Z_1,\dots, Z_n$ and $\omega$ over $D$.  Letting the interval be $[0, t\sqrt{(1+\delta)}]$,  we have 
	\begin{equation*}
		\begin{aligned}
			&~~~~ \left| \hat{\phi}_{B_n}(t) - \exp\left(\frac{t^2}{2}\left(\left(  1 -p\right)\cdot \frac{\zeta_s/N}{ s^2\zeta_{1,\omega}/{n} + \zeta_s/N}\right)\zeta_s^{-1}U_2\right)\right| \\
			& = 	\left|\E\left[ \exp\left(t\left(\left(  1 -p\right)\cdot \frac{\zeta_s/N}{ s^2\zeta_{1,\omega}/{n} + \zeta_s/N}\right)^{1/2}(\zeta_s^{-1}U_2)^{1/2} \cdot B_n/\hat{d}_{n,s,N}  \right)\mid Z_1,\dots, Z_n;\omega\right] \right. \\
			& \left. - \exp\left(-\frac{t^2}{2}\left(\left(  1 -p\right)\cdot \frac{\zeta_s/N}{ s^2\zeta_{1,\omega}/{n} + \zeta_s/N}\right)\zeta_s^{-1}U_2\right)\right|\leq \epsilon.
		\end{aligned}
	\end{equation*}
	over $D$ for $n$, $N$ sufficiently large. Now, let 
	\begin{equation*}
		\phi_B(t) =  \exp\left(-\frac{t^2}{2}\left(\left(  1 -p\right)\cdot \frac{\zeta_s/N}{ s^2\zeta_{1,\omega}/{n} + \zeta_s/N}\right)\right).
	\end{equation*}
	Then by the uniform continuity of $e^x$ over any finite interval,  there exists $\delta'= O(\delta)$ such that 
	\begin{equation*}
		\begin{aligned}
			&~~~~	\left|\exp\left(-\frac{t^2}{2}\left(\left(  1 -p\right)\cdot \frac{\zeta_s/N}{ s^2\zeta_{1,\omega}/{n} + \zeta_s/N}\right)\zeta_s^{-1}U_2\right) - \phi_B(t)\right|  \leq \delta'
		\end{aligned}
	\end{equation*}
	over $D$. Finally, for $n$ and $N$ sufficiently large, we have
	\begin{equation}
		\label{app1}
		\left|\hat{\phi}_{B_n}(t) - \phi_B(t)\right|1_{D} \leq  (\epsilon+\delta')1_{D}.
	\end{equation}

	\noindent Next, consider $\hat{\phi}_{A_n}$. Since $A_n/\sqrt{s^2\zeta_{1,\omega}/n} \rightsquigarrow  N(0,1)$ by \cref{AN-u}, then  
	\[\E\left[\exp\left(iuA_n/\sqrt{s^2\zeta_{1,\omega}/n}\right)\right] \to e^{-\frac{u^2}{2}}\]
	uniformly over any finite interval of $u$. Let the interval be $[0,t]$, then for $n$ sufficiently large,  we have 
	\begin{equation*}
		\left|\E\left[\exp\left(it\left(\frac{s^2\zeta_{1,\omega}/n}{s^2\zeta_{1,\omega}/n + \zeta_s/N}\right)^{1/2}\cdot A_n/\sqrt{s^2\zeta_{1,\omega}/n}\right)\right] -  e^{-\frac{t^2}{2}\frac{s^2\zeta_{1,\omega}/n}{s^2\zeta_{1,\omega}/n + \zeta_s/N} } \right| \leq \epsilon.
	\end{equation*}
	Let $\phi_A(t) = e^{-\frac{t^2}{2}\frac{s^2\zeta_{1,\omega}/n}{s^2\zeta_{1,\omega}/n + \zeta_s/N} }$ and 
	consequently, we have 
	\begin{equation}
		\label{app2}
		\left|\E \left[\hat{\phi}_{A_n}(t)\right] - \phi_A(t)\right| \leq \epsilon.
	\end{equation}
	Combining \cref{app1} and \cref{app2} gives
	\begin{align*}
		\left|\phi_{A_n+B_n}(t) -\phi_A(t)\phi_B(t)\right|  
		& = \left|\E\left[\hat{\phi}_{A}(t)\hat{\phi}_{B_n}(t)\right]-\phi_A(t)\phi_B(t)\right| \\
		& \leq  \left|\E\left[\hat{\phi}_{A_n}(t)\phi_{B}(t)1_D\right]-\phi_A(t)\phi_B(t)\right| \\
		& ~~~~~~~~  \left|\E\left[\hat{\phi}_{A_n}(t)\left(\hat{\phi}_{B_n}(t)-\phi_B(t)\right)1_D\right]\right|+ \epsilon \\
		& \leq  \left|\E\left[\hat{\phi}_{A_n}(t)\phi_{B}(t)1_D\right]-\phi_A(t)\phi_B(t)\right|  + (\epsilon+\delta') + \epsilon\\
		&   \leq \left|\E\left[\hat{\phi}_{A_n}(t)1_D\right]-\phi_A(t)\right| + 2\epsilon + \delta' \stepcounter{equation}\tag{\theequation}\label{arg1}  \\
		& \leq \left|\E\left[\hat{\phi}_{A_n}(t)\right]-\phi_A(t)\right|+ \left|\E\left[\hat{\phi}_{A_n}(t)1_{D^c}\right]\right|+  2\epsilon + \delta' \\
		& \leq \epsilon + \epsilon + 2\epsilon + \delta' \\
		& = 4\epsilon+\delta'.
	\end{align*}
	
	\noindent Moreover, we have 
	\begin{equation*}
		\begin{aligned}
			\phi_A(t)\phi_B(t) 
			& = \exp\left(-\frac{t^2}{2}\left[\frac{s^2\zeta_{1,\omega}/n}{s^2\zeta_{1,\omega}/n + \zeta_s/N}  +  \left(\left(  1 -\frac{N}{{n\choose s}}\right)\cdot \frac{\zeta_s/N}{ s^2\zeta_{1,\omega}/{n} + \zeta_s/N}\right)\right]\right)
			\\ 
			& = e^{-\frac{t^2}{2}} \cdot \exp\left(-\frac{t^2}{2}\left[ \frac{N}{{n\choose s}} \frac{\zeta_s/N}{ s^2\zeta_{1,\omega}/{n} + \zeta_s/N}\right]\right) \\
			& \to e^{-\frac{t^2}{2}}
		\end{aligned}
	\end{equation*}
	which implies that for $n$, $N$ sufficiently large, we have 
	\begin{equation}
		\label{arg2}
		\left|\phi_A(t)\phi_B(t) -e^{-\frac{t^2}{2}}\right| \leq \epsilon.
	\end{equation}
	Combining \cref{arg1} and \cref{arg2} yields that $\phi_{A_n+B_n}(t) \to e^{-\frac{t^2}{2}}$, which implies 
	\cref{eq:M2}. \hfill $\blacksquare$ \\
	
	%%%%%%%%%%%%%%%%%%%%%%%%%%%%%%
	% Theorem 3.2 Extensions
	%%%%%%%%%%%%%%%%%%%%%%%%%%%%%%
	\noindent Following the statement of \cref{AN-iu} in the main text, we remarked that the form of the result provided included the condition that $\frac{s}{n}\frac{\zeta_s}{s\zeta_{1,\omega}}\to 0$, which thus implies that the complete analogue of the incomplete U-statistic is also asymptotically normal, but that such a condition is not necessary.  We elaborate on this point here by providing an alternative result.
	
	\begin{theorem}\label{thm:app}
		
		Let $Z_1,\dots, Z_n$ be i.i.d.\ from $F_Z$  and $U_{n,s,N,\omega}$ be a generalized incomplete U-statistic with kernel $h=h(Z_1,\dots,Z_{s};\omega)$. Let $\theta=\E[h]$ and $\zeta_s=\V(h)$. Suppose that $\E[|h-\theta|^{2k} ]/\E^2[|h-\theta|^k]$ is uniformly bounded for $k=2,3$ and for $s$. Then 
		\begin{description}
			\item[1.]
			$U_{n,s,N,\omega} -\theta = \frac{N}{\hat{N}}(A_n+B_n)$, where  $B_n= N^{-1}\sum_{(n,s)}(\rho-p)(h(Z_{i1}, \dots, Z_{is};\omega)-\theta)$ and $A_n= U_{n,s,\omega}-\theta$,    If  $s/n\to 0$ and $N\to \infty$ with $p=N/{n\choose s}\to 0$, then
			\begin{equation}
				\label{eq:M2-00}
				\frac{B_n}{\sqrt{\zeta_s/N}} \rightsquigarrow  N(0,1).
			\end{equation}
			
			\item[2.]
			In addition to the conditions in $\mathrm{\mathbf{1}}$,  If $\V(A_n) / \V(B_n)\to 0$, then 
			\begin{equation}
				\label{eq:M2-0}
				\frac{U_{n,s,N,\omega}-\theta}{\sqrt{\zeta_s/N}} \rightsquigarrow N(0,1).
			\end{equation}
		\end{description}
	\end{theorem}
	\begin{proof}
		
		For \textbf{1}, without loss of generality, let $\theta=0$. Observe that	\[ 
		\begin{aligned}
			U_{n,s,N,\omega} 
			& = \frac{1}{\hat{N}}\sum_{(n,s)}\rho h(Z_{i1},\ldots, Z_{is}; \omega)\\
			& =   \frac{1}{\hat{N}}\left[\sum_{(n,s)}(\rho-p)h(Z_{i1},\ldots, Z_{is}; \omega) + U_{n,s,\omega}\right]\\
			& =\frac{N}{\hat{N}}\left[B_n + A_n\right],
		\end{aligned} \]
		where $A_n$ and $B_n$ are uncorrelated, and $\V(B_n) = (1-p)\zeta_s/N:=d^2_{n,S,N}$.  
		First, $U_2 = {n\choose s}^{-1}\sum_{(n,s)}h^2(Z_{i1},\dots, Z_{is};\omega)$ is a complete U-statistic with kernel $h^2$. For any $\epsilon>0$, by Chebyshev's inequality, we have 
		\[
		\P\left\{\left|U_2-\E[h^2]\right|\geq \epsilon  \E[h^2]\right\}\leq \frac{s}{n}\frac{\E[h^4]}{\epsilon^2\E^2[h^2]}\leq \frac{s}{n}\frac{C}{\epsilon^2},
		\]
		which indicates that  $U_2/\zeta_s(=U_2/\E[|h|^2])\xrightarrow[]{p}1$. Let $D_2 =\left\{\zeta_s^{-1}U_2\in [1-\delta,1+\delta]\right\}$. Thus for any $\delta, \epsilon>0$, for $n$ sufficiently large, $\P(D_2) \geq 1- \epsilon$. Let $\hat{d}_{n,s,N} = \left[(1-p)U_2/N\right]^{1/2}$. Then 
		\begin{equation*}
			\begin{aligned}
				\phi_{B_n}(t) 
				& = \E\left[ \exp\left( it   B_n/d_{n,s,N}\right) \right] \\
				& = \E\left[\E\left[ \exp\left( it B_n/d_{n,s,N}\right)\mid Z_1,\dots, Z_n;\omega\right]\right] \\
				& =  \E\left[\E\left[ \exp\left( it  (\zeta_s^{-1}U_2)^{1/2} \cdot B_n/\hat{d}_{n,s,N} \right)\mid Z_1,\dots, Z_n;\omega\right]\right].
			\end{aligned}
		\end{equation*}
		$B_n/\hat{d}_{n,s,N} \mid Z_1,Z_2,\dots, Z_n;\omega$ is a sum of independent random variables and thus in order to establish asymptotic normality it suffices to check the Lyapounov's condition.  We have
		\begin{equation*}
			\mathcal{L} = \frac{{n\choose s}^{-1}\sum_{(n,s)}|h|^3}{\left({n\choose s}^{-1}\sum_{(n,s)}|h|^2\right)^{3/2}}  \frac{1-2p+2p^2}{\sqrt{N(1-p)}} =  \frac{1-2p+2p^2}{\sqrt{N(1-p)}} \frac{U_3}{U_2^{3/2}}
		\end{equation*}
		where $U_3 = {n\choose s}^{-1}\sum_{(n,s)} |h(Z_{i1},\dots, Z_{is};\omega)|^3$. Since $\E[h^6]/\E^2[|h|^3]\leq C$, by a similar argument as for $U_2$, $D_3=\left\{U_3/\E[|h|^3] \in [1-\delta, 1+\delta]\right\}$ holds with probability at least $1-\epsilon$.  Then as $N\to \infty$ and $p\to 0$,  
		\begin{equation*}
			\mathcal{L} \leq \frac{1+\delta}{(1-\delta)^{3/2}} \frac{\E[|h|^3]}{\E^{3/2}[|h|^2]} \frac{1-2p+2p^2}{\sqrt{N(1-p)}} \to 0
		\end{equation*}
		uniformly with respect to $Z_1,\dots, Z_n$ and $\omega$ over $D_2\cap D_3$. Note that distance between the characteristic function $B_n/\hat{d}_{n,s,N}$ and a standard normal distribution can be controlled by $\mathcal{L}$.  Thus  with probability at least $1-2\epsilon$, 
		\[
		\E \left[\exp\left(iuB_n/(\hat{d}_{n,s,N})\right)\mid Z_1,\dots,Z_n;\omega\right] \to e^{-\frac{u^2}{2}}
		\]
		uniformly over any finite interval  of $u$ and uniformly with respect to $Z_1,\dots, Z_n$ and $\omega$ over $D_2\cap D_3$.  Letting the interval be $[t\sqrt{(1-\delta)}, t\sqrt{(1+\delta)}]$,  we have 
		\begin{equation*}
			\left| \E\left[ \exp\left( it  (\zeta_s^{-1}U_2)^{1/2} \cdot B_n/(\hat{d}_{n,s,N}) \right)\mid Z_1,\dots, Z_n;\omega\right]- e^{-\frac{t^2}{2} \zeta_s^{-1} U_2}\right| \leq \epsilon
		\end{equation*}
		over $D_2\cap D_3$ for $N$ sufficiently large. Therefore, 
		\begin{equation*}
			\begin{aligned}
				\left|\phi_{B_n}(t) -e^{-\frac{t^2}{2}}\right|
				&\leq   \E\left[\left|\E\left[ \exp\left( it  (\zeta_s^{-1}U_2)^{1/2} \cdot B_n/\hat{d}_{n,s,N} \right)\mid Z_1,\dots, Z_n;\omega\right]-e^{-t^2/2}\right|\right] \\
				& \leq  \E\left[\left( \left| \E\left[\exp\left( it  (\zeta_s^{-1}U_2)^{1/2} \cdot B_n/\hat{d}_{n,s,N} \right)\mid Z_1,\dots, Z_n;\omega\right] - e^{-\frac{t^2}{2} \zeta_s^{-1}U_2}\right|\right.\right.  \\
				& ~~~~ + \left.\left.\left|e^{-\frac{t^2}{2} \zeta_s^{-1}U_2} - e^{-\frac{t^2}{2}}\right|\right)
				1_{D_2\cap D_3} \right] + 4\epsilon  \\
				& \leq \E\left[\left(\epsilon +\left|e^{-\frac{t^2}{2}\zeta_s^{-1}U_2} - e^{-\frac{t^2}{2}}\right|\right)
				1_{D_2\cap D_3}  \right] +4 \epsilon \\
				& \leq (\epsilon + \delta') + 4\epsilon.
			\end{aligned}
		\end{equation*}
		Since $\epsilon$ and $\delta$ can be arbitrarily small, we have $\phi_{B_n}(t)\to e^{-\frac{t^2}{2}}$ and thus $B_n/d_{n,s,N}\rightsquigarrow N(0,1)$, which implies \cref{eq:M2-00} since $N/{n\choose s}\to 0$.
		
		For \textbf{2}, $\V(A_n)/\V(B_n)\to 0$ implies that  $\V(A_n/{\sqrt{\zeta_s/N}}) = o(1)$. Thus, we have
		\begin{equation*}
			\frac{U_{n,s,N,\omega}}{\sqrt{\zeta_s/N}} = \frac{B_n}{\sqrt{\zeta_s/N}} + o_p(1),
		\end{equation*}
		which implies  \cref{eq:M2-0} by applying Slutsky's theorem. 
	\end{proof}
	
	\noindent Part \textbf{1} of \cref{thm:app} gives that $B_n$, the difference between the incomplete and complete generalized U-statistics, is asymptotically normal under quite weak conditions so long as the number of subsamples $N$ grows slower than ${n\choose s}$.  In particular, no specialized conditions on the resulting variance or variance ratio are required.  In Part \textbf{2}, to establish asymptotic normality of the generalized incomplete U-statistic itself, we do not require the original condition on the variance ratio given in \cref{AN-iu}, though we do require that $\V(A_n) /\V(B_n)\to 0$.  Such a condition remains difficult to verify for general kernels, but can always be satisfied, for example, by taking $N=o(n/s)$.  Indeed, note that
	\begin{equation*}
		\frac{s^2}{n}\zeta_{1,\omega} \leq \V(A_n) \leq \frac{s}{n}\zeta_s,
	\end{equation*}
	and $\V(B_n) = (1-p)\zeta_s/N$, thus  a sufficient condition for  $(\zeta_s/N)^{-1/2}A_n = o_p(1)$ to hold is letting $N = o(n/s)$. Thus, asymptotic normality for incomplete U-statistics can be established without requiring normality of the complete version and in particular, without requiring the specialized condition on the variance ratio discussed at length in \cref{normality}.

	%%%%%%%%%%%%%%%%%%%%%%%%%%%%%%
	% From Section 3.1: First simple examples:
	%%%%%%%%%%%%%%%%%%%%%%%%%%%%%%
	\subsection{Simple Base Learner Variance Ratios}
	\label{app:B-VR}
	
	We now provide explicit calculations for the variance ratios corresponding to the various base learners discussed in \cref{normality}.  We begin with simple examples where base learners take the form of a sample mean, sample variance, and least squares estimators.
	\begin{example}[sample mean]
		\label{ex1}
		
		Suppose  $Z_1, \dots, Z_s$ are i.i.d.\ random variables with mean $\mu$ and let $h=\bar{Z}$, then 
		\begin{equation}
			\label{eq:mean}
			\frac{\zeta_s}{s\zeta_1} = \frac{ \V \left(\sum_{i=1}^s Z_i/s\right)}{s \V \left({Z_1}/{s}+ {(s-1)\mu}/{s}\right)} = 1.
		\end{equation}
		
	\end{example}
	\cref{eq:mean} holds for any estimators that can be written as a sum of i.i.d.\ random variables. In such cases, since $\hat{h} = h$, nothing is lost after projecting.
	
	\begin{example}[sample variance]
		\label{ex2}
		
		Suppose $Z_1,\dots, Z_s$ are i.i.d.\ random variables with variance $\sigma^2$ and fourth central moment $\mu_4$ and consider the sample variance $h = \binom{n}{2}^{-1}\sum_{i< j}(Z_i-Z_j)^2$. Then as $s\to \infty$, 
		\begin{equation*}
			\label{eq:variance}
			\frac{\zeta_s}{s\zeta_1} = 1 + \frac{2}{(s-1)}\cdot \frac{\sigma^4}{\mu_4-\sigma^4} \to 1.
		\end{equation*}
		
	\end{example}
	The kernel $h$ can be written as $h =\sum_{i=1}^s \frac{(Z_i-\bar{Z})^2}{s-1}
	\approx \sum_{i=1}^s  \frac{(Z_i^2-\mu^2)}{s-1}$. Since $\bar{Z}$ is much more stable than $Z_i$, $h$ is  close to a sum of i.i.d random variables.

	\begin{example}[OLS estimator]
		\label{ex3}
		
		Let $Z_1, \dots , Z_s$ denote i.i.d.\ pairs of random variables $(X_i,Y_i)$ and $Y_i = X_i^T\beta+\epsilon_i$. Suppose  that $\epsilon_i$ has mean $0$ and variance $\sigma^2$, and $\epsilon_i$ is independent of $X_i$.  Let $h = (X^TX)^{-1}X^TY$ be the ordinary least squares (OLS) estimator of $\beta$. Then as $s \to \infty$,
		\begin{equation*}
			\label{eq:lm}
			(s\zeta_1)^{-1}\zeta_s \to I,
		\end{equation*}
		where $I$ is the identity matrix.
		
	\end{example}
	
	\begin{proof}
		Let $\hat{\beta} = GX^TY$ be the OLS estimator, where $G=(X^TX)^{-1}$, then  $\zeta_s=\V(\hat{\beta}) = \E[G]\sigma^2$.
		Since $X_i$ and $\epsilon_i$ are independent, we have $\E[\hat{\beta}\mid X_1,Y_1] =\beta + \E_1[G]X_1\cdot \epsilon$, where $\E_1$ takes the expectation conditioning on $X_1$. Then,
		\begin{equation*}
			\zeta_1 = \V\left(\E[\hat{\beta} \mid X_1,Y_1]\right) = \E[\E_1[G]X_1X_1^T\E_1[G]]\sigma^2.
		\end{equation*}
		According to law of large numbers, $\frac{1}{s}\sum_{i=1}^s X_iX_i^T\xrightarrow[]{a.s.}\Sigma$ as $s\to \infty$ where $ \E[X_i X_i^T] = \Sigma$.
		Letting $\Sigma^{-1} = \Omega$, we have
		\[ 
		s\cdot \E[(X^TX)^{-1}] = \E\left[\left(\frac{1}{s}\sum_{i=1}^s X_iX_i^T\right)^{-1}\right]\to \Omega,
		\]
		and hence $s\zeta_s \to \Omega$. Furthermore, we have
		\[
		\begin{aligned}
			sG \mid X_1 
			& = \left(\frac{1}{s}\left[X_1X_1^T+ \sum_{i\neq 1} X_iX_i^T\right]\right)^{-1} \Bigg| \, X_1\xrightarrow[]{a.s.} \Omega.
		\end{aligned}
		\]
		Thus,
		\[
		\begin{aligned}
			s^2\cdot \zeta_1/\sigma^2 
			& = \E\left[\E_1[G]X_1X_1^T\E_1[G]\right]\\
			& = \E\left[\E_1[sG]\cdot X_1X_1^T\cdot \E_1[sG]\right]\\
			& \to \Omega\cdot \Sigma \cdot \Omega \\
			& = \Omega,
		\end{aligned}\]
		and hence $\zeta_1$ is of order $s^{-2}$. Therefore, $(s\zeta_1)^{-1}\zeta_s \to I$.
	\end{proof}
	
	\noindent Here again, note that $h =\sum_{i=1}^s (X^TX)^{-1} X_iY_i $, which is still close to a sum of i.i.d.\ random variables.  These three examples suggest that perhaps for many common base learners, $\zeta_s$ is of order $s^{-1}$ and $\zeta_1$ of order $s^{-2}$; essentially, each individual observation explains roughly $s^{-1}$ times the variance of the base learner.  \\

	%%%%%%%%%%%%%%%%%%%%%%%%%%%%
	% Proof of Proposition 1
	%%%%%%%%%%%%%%%%%%%%%%%%%%%%
	
	%\begin{proof}[of \cref{knn}]
	
	\noindent \textbf{Proof of \cref{knn}: } Denote the kNN estimator at $x$ as $\varphi(x)$. Let $A_i$ denote the event that $X_1$ is the $i$th closest point to the target point $x$ and $B = \cup_{i=1}^k A_i$. First, by the continuity of $f$ at $x$, we have $
	\V(\varphi(x)) \to {\sigma^2}/{k}$ as $s \to \infty$.  Let $X_1^*,\ldots, X_k^*$ be the $k$-NNs of $x$.  Then 
	\[
	\begin{aligned}
		\E[\varphi(x)\mid X_1,Y_1] 
		&= \frac{1}{k}\E\left[ \one_{B}\left[Y_1+\sum_{i=2}^k Y_i^*\right]  + \one_{B^c}\left[\sum_{i=1}^k Y_i^*\right]   \mid X_1,Y_1\right] \\
		& = \frac{\epsilon_1}{k} \E[\one_B\mid X_1] + \frac{1}{k}\left[\sum_{i=1}^k f(X_i^*)\mid X_1\right],
	\end{aligned}
	\]
	and thus $\V(\E[\varphi(x)|X_1,Y_1]) \geq \frac{\sigma^2}{k^2}\E[\P^2(B|X_1)] $.
	Next,
	\begin{align*}
		\E[\P^2(B|X_1)]
		& = \E [(\sum_{i=1}^k\P(A_i \mid X_1))^2] \\
		& = \sum_{i=1}^k\sum_{j=1}^k \E\left[\P(A_i\mid X_1)\P(A_j\mid X_1)\right] \\
		& = \frac{1}{2s-1}\sum_{i=1}^k\sum_{j=1}^k \frac{{s-1\choose i-1} {s-1\choose j-1}}{{2s-2\choose i+j-2}} \\
		& = \frac{ V(k,s)}{2s-1} \label{cc1} \stepcounter{equation}\tag{\theequation},
	\end{align*}
	where  $V(k,s) = \sum_{i=0}^{k-1}\sum_{j=0}^{k-1} \left[ \frac{{s-1\choose i} {s-1\choose j}}{{2s-2\choose i+j}}  \right]$.   We have 
	\[
	\begin{aligned}
		V(k) &= \lim_{s\to \infty}V(k,s)\\
		&= \sum_{i=0}^{k-1}\sum_{j=0}^{k-1} \frac{(i+j)!}{i!j!}\frac{1}{2^{i+j}}\\
		&= \sum_{c=0}^{2k-2}\sum_{i+j=c,0\leq i,j\leq k-1}\frac{(i+j)!}{i!j!}\frac{1}{2^{i+j}} .
	\end{aligned}\]
	We can obtain $k\leq V(k)< 2k-1$ by simply observing that
	\begin{equation*}
		\sum_{c=0}^{k-1}\sum_{i+j=c}\frac{c!}{i!j!}\frac{1}{2^{c}}\leq V(k) \leq  \sum_{c=0}^{2k-2}\sum_{i+j=c}\frac{c!}{i!j!}\frac{1}{2^{c}}.
	\end{equation*}
	Thus,
	\begin{equation}
		\begin{aligned}
			\limsup_{s\to \infty } \frac{\zeta_s}{s\zeta_1}  
			& = \limsup_{s\to \infty} \frac{\V(\varphi(x))}{s\V(\E[\varphi(x)|X_1,Y_1])}\\ 
			& \leq \limsup_{s\to \infty} \frac{2s-1}{s}\frac{k}{V(k,s)} \\
			& = c(k),
		\end{aligned}
	\end{equation}
	and $1 < c(k) <= 2$. \hfill $\blacksquare$ \\
	
	Note that \cref{knn} holds without imposing any conditions on the regression function $f$ or the distribution of $X$. To see why, note from the proof that both $\zeta_s$ and $\zeta_1$ can each be decomposed into two terms, one of which comes from the variation of the regression function while the other is due to the variation of the noise. Here, since $k$ is fixed, the term involving the variation in the regression function is small relative to the noise term for large $s$.  When $k$ grows with $s$, more care must be taken in assessing the contribution of  the regression function. \cite{Biau2015} (Theorem 15.3) discuss the convergence rate of the variance of kNN estimators.  This result could potentially enable results to be established for more general nearest neighbor estimators where the number of neighbors $k$ is permitted to grow with $s$, though we do not explore this further here. \\

	%%%%%%%%%%%%%%%%%%%%%%%%%%%%
	% Proof of Proposition 2
	%%%%%%%%%%%%%%%%%%%%%%%%%%%%
	
	\noindent \textbf{Proof of \cref{ls}: } First let $\tilde{\varphi} = \sum_{i=1}^s w(i,x,\mathbf{X})f(X_i)$ and note that 
	\begin{equation}
		\label{bound-1}
		\begin{aligned}
			\V(\varphi)  & = s\E[w^2(1,x,\mathbf{X})]  \sigma^2  +\V\left(\tilde{\varphi}\right) \\
			& \leq \sigma^2+ ||f||_\infty/4.
		\end{aligned}
	\end{equation}
	Next, $	\E[\varphi \mid X_1,\epsilon_1] 
	= \E\left[\tilde{\varphi}\mid X_1\right] + \epsilon_1 \E\left[w(1,x,\mathbf{X}) \mid X_1\right]$,
	and thus
	\begin{equation}
		\label{bound-2}
		\begin{aligned}
			\V(\E[\varphi \mid X_1,\epsilon_1]) 
			& =\V\left(\E \left[\tilde{\varphi} \mid X_1\right]\right)+ \sigma^2\E\left[\E^2\left[w(1,x,\mathbf{X}) \mid  X_1\right]\right] \\
			& \geq \sigma^2 \E\left[\E^2[w(1,x,\mathbf{X}) \mid X_1]\right]\\
			& \geq \sigma^2 \E^2[w(1,x,\mathbf{X})] \\
			& = \sigma^2/s^2.
		\end{aligned}
	\end{equation}
	Therefore, 
	\begin{align*}
		\limsup_{s\to \infty}\frac{\zeta_s}{s\zeta_1} \frac{1}{s}
		&= \limsup_{s\to  \infty}\frac{s\sigma^2 \E[w^2(1,x,\mathbf{X})] + \V(\tilde{\varphi})}{s^2   \left(\sigma^2 \E[\E^2[w(1,x,\mathbf{X})) \mid X_1]] + \V(\E[\tilde{\varphi} \mid X_1])\right)  } \label{overall-vr}    \stepcounter{equation}\tag{\theequation} \\
		&  \leq \frac{\sigma^2 + ||f||_\infty^2/4}{\sigma^2} \\
		& <\infty. \hspace{100mm} \blacksquare \\
	\end{align*}
	\noindent We emphasize that the inequalities in \cref{bound-1} and \cref{bound-2} are generally quite loose in order to cover the worst case scenario. As seen in \cref{knn}, the order of $ \zeta_1$ can indeed be $s^{-1}$ rather than $s^{-2}$. Nonetheless, \cref{ls} indicates that $s/\sqrt{n}\to 0$ is sufficient to ensure that $\frac{s}{n} \frac{\zeta_s}{s\zeta_1}\to 0$.

	%%%%%%%%%%%%%%%%%%%%%%%%%%%%
	% Variance Ratio for Naive Random estimator
	%%%%%%%%%%%%%%%%%%%%%%%%%%%%
	\subsection{Variance Ratios for RP Trees }
	\label{app:B-RP}
	Now, we turn to the analysis for RP trees, which form predictions by taking a sample uniformly at random from the potential nearest neighbors and averaging the corresponding response values.  We begin with a simpler result for base learners that take a naive random average across $k$ response values selected uniformly at random from the entire dataset.
	
	\begin{example}[naive random average]
		\label{ex6}
		Let $Z_1,\dots, Z_s$ denote i.i.d.\ pairs of random variables $(X_i,Y_i)$.  For any target point $x$, let $\varphi(x)$ denote the estimator that forms a prediction by simply selecting $k$ sample points uniformly at random (without replacement) and averages the selected response values, so that we can write $\varphi(x) = \frac{1}{k} \sum_{i=1}^s  \xi_iY_i$, where
		\[
		\xi_i = \begin{cases}
			1, & i^{th}~ \mathrm{ sample ~ is ~ selected}\\
			0, & i^{th}~\mathrm{ sample ~ is ~ not ~  selected}.
		\end{cases}
		\]
		Then $ \zeta_s=  \V(Y_1)/k$ and $ \zeta_{1,\omega}= (\V(Y_1))/s^2$,  so that $ {\zeta_s}/{s\zeta_{1,\omega}}= s/ k$.
	\end{example}
	
	Note in the above example that when $k$ is fixed, $s/\sqrt{n}\to 0$ is sufficient to ensure that $\smash{\frac{s}{n} \frac{\zeta_s}{s\zeta_{1,\omega}}\to 0}$.  However, when $k$ is assumed to grow with $n$, the subsample size $s$ can grow more quickly.  In the adaptive case, where $w(i,x,\mathbf{X})$ may depend on $\{Y_i\}_{i=1}^s$, tree estimators with small terminal node sizes may look less like a linear statistic and in turn may have a larger variance ratio. However, as discussed, for non-adaptive estimators like kNN, the ratio is bounded by a constant. In this way, well-behaved tree predictors can be seen as similar to kNN and are still more easily controlled than RP trees. 
	
	We turn now to the proving Proposition \ref{rptree}, namely that for base learners that are RP trees, 
	\[ 
	\limsup_{s\to \infty}\frac{\zeta_s/{s\zeta_{1,\omega}}}{(\log s)^{2d-2}}<\infty .
	\]
	
	~\\
	
	\noindent \textbf{Proof of \cref{rptree}: } Denote the RP tree by $T$ and the set of $k$-$\mathrm{PNNs}$ by $\Xi$.  We have
	\[ 
	\V(T)  = \V(\tilde{T}) + {\sigma^2}/{k}   \leq \sigma^2/k + ||f||^2_\infty/4.
	\]
	where $\tilde{T}$ is the RP tree prediction in the noiseless case. Let $ |\Xi|$ denote the number of $k$-$\mathrm{PNNs}$ of $x$, then 
	\[ 
	\E[T\mid Z_1] = \epsilon_1 \E\left[\frac{1}{|\Xi|}\one_{X_1\in \Xi} \mid X_1\right] + \E\left[\tilde{T}  \mid X_1\right].
	\]
	Since $\Xi$ is independent of $\epsilon$, we have $\V(\E[T\mid Z_1])  \geq \sigma^2 \E\left[\E^2[S_1 \mid X_1]\right]$,  where $S_1 = \frac{1}{|\Xi|}\one_{X_1\in \Xi}$. Note that $\E[S_1] = \E\left[\sum_{i=1}^s S_i\right]/s =1/{s}$
	, and thus we have $\V(\E[T\mid Z_1])\geq \sigma^2/s^2$. Furthermore,  we have
	\begin{equation} 
		\label{I0}
		\begin{aligned}
			\E[S_1|Z_1] 
			& =  \P_1(X_1\in \Xi) \E_1 \left[\frac{1}{|\Xi|} \mid {X_1\in \Xi}\right]\\
			& =  \sum_{i=0}^{k-1}{s-1\choose i}u^i(1-u)^{s-1-i}  \, \cdot \, \E_1 \left[\frac{1}{|\Xi|} \mid {X_1\in \Xi}\right] \\
			& = \mathrm{I} \cdot  \mathrm{II},
		\end{aligned}
	\end{equation}
	where  $\P_1 (\cdot)= \P(\cdot \mid X_1), \E_1 = \E(\cdot \mid X_1)$, $u=\P_1(X_i\in R)$ and $R$ is the hyperrectangle defined by $x$ and $X_1$. 
	Conditioning on $X_1\in \Xi$, define the conditional probability function of $(X_2,\ldots, X_s)$ - ${\P}_1(\cdot \mid X_1 \in \Xi)$ as $\tilde{\P}_1$.  
	For $\mathrm{I}$, 
	\[
	\begin{aligned}
		\mathrm{I}^2 
		& = \left[\sum_{i=0}^{k-1}{s-1\choose i}u^i(1-u)^{s-1-i} \right]^2 \\
		& = \sum_{i=0}^{k-1}\sum_{j=0}^{k-1} {s-1\choose i}{s-1\choose j}u^{i+j}(1-u)^{2s-2-i-j} ,
	\end{aligned}
	\]
	thus 
	\begin{align*}
		\E[\mathrm{I}^2]  
		& = \sum_{i=0}^{k-1}\sum_{j=0}^{k-1} {s-1\choose i}{s-1\choose j} \E\left[ u^{i+j} (1-u)^{2s-2-i-j}\right]\\
		&  = \sum_{i=0}^{k-1}\sum_{j=0}^{k-1} \frac{{s-1\choose i}{s-1\choose j}}{{2s-2 \choose i+j}} \E\left[ \frac{u^{i+j} (1-u)^{2s-2-i-j}}{\mathrm{B}(i+j+1,2s-1-i-j)}\right] \\
		& =  \frac{1}{2s-1} \cdot \sum_{i=0}^{k-1}\sum_{j=0}^{k-1} \frac{{s-1\choose i}{s-1\choose j}}{{2s-2 \choose i+j}} G(i,j)\stepcounter{equation}\tag{\theequation}\label{cc2},
	\end{align*}
	where $u = \P_1(X_i\in R)\in (0,1)$.  If $u\sim \mathrm{Uniform}(0,1)$, then $G(i,j)=1$ and \eqref{cc2} reduces to \eqref{cc1}. Let the probability density function  of $u$ be $p(u)$, and the probability density function of beta distribution with shape parameters $\alpha$ and $\beta$ be $g(u,\alpha,\beta)$, then $G(i,j)= \int_0^1 g(u,\alpha,\beta)p(u)\,du$, where $\alpha=i+j, \beta=s-1-\alpha$. Since $i+j<=2k-2$, when $s\to \infty$, $g(u,\alpha,\beta)$ is almost singular at $u=0$. Moreover, we can find that at around $u=0$, $p(u)\geq 1$.	Thus, there exists some $c_1>0$ such that $G(i,j)\geq c_1$. Therefore, we have
	\begin{equation}
		\label{I}
		\E[\mathrm{I}^2] \geq \frac{c_1}{2s-1}V(k,s) ,\quad V(k,s) =   \sum_{i=0}^{k-1}\sum_{j=0}^{k-1} \left[\frac{{s-1\choose i}{s-1\choose j}}{{2s-2 \choose i+j}} \right].  
	\end{equation}
	
	\noindent For $\mathrm{II}$,  by Jensen's Inequality, we have $	\mathrm{II} = \tilde{\E}_1\left[{1}/{|\Xi|}\right] \geq {1}/{\tilde{\E}_1[|\Xi|]}$, and then
	\begin{equation}
		\label{I-II}
		\begin{aligned}
			\E[\E^2[S_1|X_1]]
			&\geq  \E\left[\mathrm{I}^2 \cdot \frac{1}{\tilde{\E}_1^2|\Xi|}\right].
		\end{aligned}
	\end{equation}
	
	\noindent$\tilde{\E}_1|\Xi|$ is just the expected number of $k$-$\mathrm{PNNs}$  conditioning on $X_1\in \Xi$,  or equivalently given that there are at most $k-1$ sample points in $R$.  \cite{Lin2006} showed that $\E[|\Xi]|$ is of order $k(\log s)^{p-1}$ when the probability density function of the features is bounded away from $0$ and $\infty$ in $[0,1]^p$. Since $k$-$\mathrm{PNN}$ depends only on the relative distance,  it can be shown that there exists some $c_2 >0$ such that 
	\begin{equation}
		\label{II}
		\tilde{\E}_1 [|\Xi|] \leq c_2 \E[|\Xi|], \quad \text{for } X_1 \in [0,1]^p.
	\end{equation}
	Note that $V(k,s) \geq k$. Combining \cref{I}, \cref{I-II} and \cref{II}, we have 
	\begin{equation}
		\label{rp}
		\begin{aligned}
			\limsup_{s\to \infty}\frac{\zeta_s/{s\zeta_{1,\omega}}}{(\log s)^{2p-2}} \leq \limsup_{s\to \infty}\frac{(\sigma^2/k + ||f||_\infty^2/4) (\log s)^{2-2p}}{ s\left(  c_1\frac{V(k,s)}{2s-1} \cdot c_2^{-2} (k(\log s)^{p-1} )^{-2}\cdot \sigma^2\right)} <\infty,
		\end{aligned}
	\end{equation}
	thus achieving what was claimed in Proposition \ref{rptree}.  \hfill $\blacksquare$ \\

	\subsection{Variance Ratio for non-adaptive $k$-trees}

	\label{app: B-AD}

	Finally, we turn to the analysis for non-adaptive $k$-trees, which form predictions by taking a sample non-adaptively from the potential nearest neighbors and averaging the corresponding response values. 
	\\
	\\
	\noindent \textbf{Proof of Proposition \ref{non-adaptive}: } Denote a non-adaptive $k$-tree by $T= T(Z_1,\dots, Z_s;\omega)$ and the set of $2k$-$\mathrm{PNNs}$ by $\Xi$.  We have
	\[ 
	\V(T)  = \V(\tilde{T}) + {\sigma^2}/{k}   \leq \sigma^2/k + ||f||^2_\infty/4,
	\]
	where $\tilde{T}$ is the tree prediction in the noiseless case. Let $\mathrm{L}(x;\omega)$ be the terminal node that $x$ belongs to, then 
	$
	\E[T\mid Z_1] = \epsilon_1 \E\left[\frac{1}{|\mathrm{L}(x;\omega)|}\one_{X_1\in \mathrm{L}(x;\omega)} \mid X_1\right] + \E\left[\tilde{T}  \mid X_1\right]
	$.
	Since $\mathrm{L}(x;\omega)$ is independent of $\epsilon$, we have 
	\begin{equation}
		\begin{aligned}
			\V(\E[T\mid Z_1])   \geq \sigma^2 \E\left[ \E^2\left[\frac{1}{|\mathrm{L}(x;\omega)|}\one_{X_1\in \mathrm{L}(x;\omega)} \mid X_1\right] \right]
			= \sigma^2 \E[S_1^2],
		\end{aligned}
	\end{equation}
	where $S_1 =\E[ \frac{1}{|\mathrm{L}(x;\omega)|}\one_{X_1\in \mathrm{L}(x;\omega)}\mid X_1]$. Note that $\E[S_1] = \E\left[\sum_{i=1}^s S_i\right]/s =1/{s}$, and thus we have $\V(\E[T\mid Z_1])\geq \sigma^2/s^2$. Furthermore, since $\mathrm{L}(x;\omega)\leq 2k-1$, then $\mathrm{L}(x;\omega)\subset 2k$-$\mathrm{PNNs}$ and  we have
	\begin{equation}
		\label{random_not}
		\begin{aligned}
			S_1 & = \E[ \frac{1}{|\mathrm{L}(x;\omega)|}\one_{X_1\in \mathrm{L}(x;\omega)}\mid X_1] \\
			&   \leq \frac{1}{k}\E[\one_{X_1\in \Xi}\mid X_1] = \frac{1}{k}\cdot \mathrm{I},
		\end{aligned}
	\end{equation}
	where $\mathrm{I0}$ is the same as in \cref{I}. We get rid of the extra randomness $\omega$ here.
	From the proof of \cref{rptree}, we know  $\E[\mathrm{I}^2] = O(1/s)$ and according to \cite{Lin2006}, we have $\E[\mathrm{I}] = O(2k(\log s)^{p-1}/s)$. Note that for any random variable $X$ and any real number $x$, we always have 
	\begin{equation}
		\label{trick}
		\begin{aligned}
			\E[X^2] & \geq \E[X^2; X\geq x] \\
			&  = \Pr(X\geq x)  \E[X^2\mid  X \geq x] \\
			& \geq  \Pr(X \geq x)   \E^2[X\mid  X\geq x]\\
			& =\frac{ \E^2[X;  X\geq x]}{\Pr(X\geq x)}.
		\end{aligned}
	\end{equation}
	Now, let $x =  s^{-(1+\alpha)}$ for any $\alpha >0$, then $\Pr(\mathrm{I}\geq s^{-(1+\alpha)}) \leq \frac{ \E^2[\mathrm{I};  \mathrm{I}\geq  s^{-(1+\alpha)}] }{\E[\mathrm{I}^2]} = O({(\log s)^{2p-2}}/{s})$. Then, by applying \cref{trick} again, we have 
	\begin{equation}
		\begin{aligned}
			\E[S_1^2] & \geq \frac{\E^2[S_1;S_1\geq k^{-1}s^{-(1+\alpha)}]}{\Pr(S_1\geq k^{-1}s^{-(1+\alpha)})}.
		\end{aligned}\\
	\end{equation}
	Since $\E[S_1]= 1/s$, $\E^2[S_1;S_1\geq k^{-1}s^{-(1+\alpha)}] = O(1/s^2)$, and hence
	$\E[S_1^2]\geq  \frac{O(1/s)}{\Pr(\mathrm{I}\geq s^{-(1+\alpha))})} = O(1/s(\log s)^{2p-2})$. Thus, $\V(\E[T|Z_1])  \geq   \sigma^2 \cdot O(1/(s\log s )^{2p-2})$. Therefore, 
	\begin{equation}
		\begin{aligned}
			\limsup_{s\to \infty}\frac{\zeta_s/{s\zeta_1}}{(\log s)^{2p-2}} \leq \limsup_{s\to \infty}\frac{(\sigma^2/k + ||f||_\infty^2/4) (\log s)^{2-2p}}{ s\left( s^{-1}  ((\log s)^{p-1} )^{-2}\cdot \sigma^2\right)} <\infty,
		\end{aligned}
	\end{equation}
	thus achieving what was claimed in Proposition \ref{non-adaptive}.  \hfill $\blacksquare$ \\
	
	Note that the proof strategy here follows closely to that in \cite{Wager2018}; the main difference above is in how we obtain $\Pr(\mathrm{I}\geq s^{-(1+\alpha)})$.

	\subsection{Simulation}
	\label{app:B-SI}
	We present here a small simulation study in order to illustrate the behavior of the variance ratio of when the base learners employed are decision trees or nearest neighbor methods, as well to visually demonstrate the asymptotic normality derived in \cref{normality}. We consider the underlying regression function:
	\begin{equation}
		\label{MARS}
		y = 1.0\sin(\pi x_1 x_2) + 0.2  (x_3- 0.5)^2 + 0.5  x_4 + 0.1 x_5 + \epsilon, \quad \epsilon \sim \mathcal{N}(0,0.3),
	\end{equation}
	where $(x_1,\dots, x_5)\sim \mathcal{N}(\mu, \Sigma)$, $\mu = (0.2, 0.3, 0.2, 0.7, 0.4)$, $\Sigma_{(i,i)} = 1$ and $\Sigma_{(i,j)}=0.2$ for $1\leq i, j\leq 5 $. This function has slightly different forms, but was originally used in the MARS paper \cite{friedman1991multivariate} and has since been frequently used in more recent work exploring the behavior of random forests \citep{Biau2012,Mentch2016, JMLR:v21:19-905}. We consider the prediction at $x_0 = (0.5,0.5,0.5,0.5,.0.5)$ for the base learners. $\zeta_s$ and $\zeta_{1,\omega}$ are obtained separately. For $\zeta_s$, we generate $n_{\text{out}}$ independent samples of size $s$, build base learners on each, and then use the sample variance of the predictions as our estimate. The parameter $\zeta_{1,\omega} = \V(\E[h \mid Z_1])$ is estimated in two steps. We begin by sampling one observation $Z_1$, which we refer to as the initial fixed point. We then generate several samples of size $s-1$ and combine them with $Z_1$ to build a base learner and then record the mean of the predictions at $x_0$. Let $n_\text{in}$ denote the number of samples drawn so that this average is taken over $n_\text{in}$ predictions. We then repeat the process for $n_\text{out}$ initial sets of fixed points and take our final estimate of $\zeta_{1,\omega}$ as the variance over the $n_\text{out}$ final averages, yielding the estimator.

	It is worth pausing for a moment to note that previous work has suggested that a larger $n_{\text{in}}$ seems to provide better results even when $n_{\text{out}}$ is relative small \cite{Mentch2016}.  An explanation for this is as follows. Denote $\E[h\mid Z_1=z] = h_1(z)$ and note that the inner approximation is trying to estimate $h_1(z)$. The outer approximation is intended to estimate $\V(h_1(Z))$, but it is actually estimating $\V(\hat{h}_1(Z))$. By law of total variance, we have 
	\begin{equation}
		\begin{aligned}
			\V_n(\hat{h}_1(Z)) & = \V_n(\E[\hat{h}_1(Z)\mid Z]) + \E_n[\V(\hat{h}_1(Z)\mid Z)] \\
			& = \V_n(h_1(Z)) + \E_n[\V(\hat{h}_1(Z)\mid Z)],
		\end{aligned}
	\end{equation} 
	where $\V_n(\cdot)$ denotes sample variance and $\E_n(\cdot)$ denotes sample mean. Then 
	\begin{equation}
		\label{est-bias}
		\begin{aligned}
			\frac{\E[ \V_n(\hat{h}_1(Z))]}{\V(h_1(Z))}  
			& = 1 + \frac{\E[\V(\hat{h}_1(Z)\mid Z)]}{\V(h_1(Z))} \\
			& =  1 + \frac{\E[\V(\hat{h}_1(Z)\mid Z)]}{\zeta_{1,\omega}}.
		\end{aligned}
	\end{equation}
	On the other hand, note that $\V(\hat{h}_1(Z)\mid Z) = \V(h \mid Z_1=Z)/ n_{\text{in}} $. When $s$ is large, knowing one observation will not influence the final prediction by much and so $ \V(h(\mid Z_1=Z ) \approx \zeta_s$. Therefore, the right hand side of \cref{est-bias} is close to $1 + \frac{s}{n_{\text{in}}} \frac{\zeta_s}{s\zeta_{1,\omega}}$.
	Since $\frac{\zeta_s}{s\zeta_{1,\omega}}\geq 1$, to make $\V_n(\hat{h}_1(Z))$ less biased, we need to make $n_{\text{in}}$ large enough relative to $s$.
	\\
	\begin{figure}[t]
		\centering
		\includegraphics[width=1\textwidth]{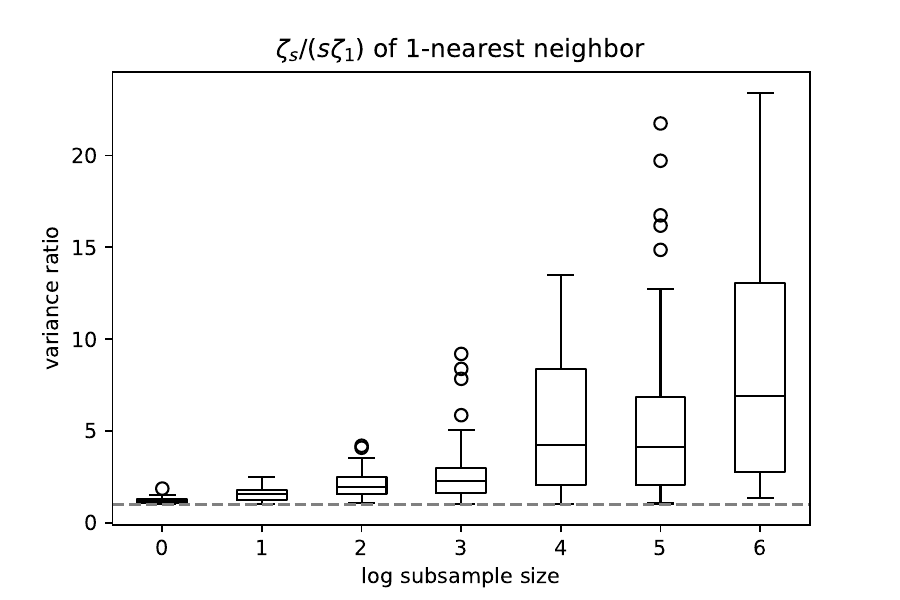}
		\caption{Variance ratio of 1-NN of different subsample size. We estimate $\zeta_s/{s\zeta_{1,\omega}}$ 30 times to obtain the boxplots. We can see that as the sample size increases, the variance ratio appears to slowly get larger.}
		\label{fig:knn_s}
	\end{figure} 
	\\
	To carry out these simulations with kNN base learners, we use the Python package \\ \texttt{sklearn.neighbors.KNeighborsRegressor} with the distance metric is set to as ``minkowski".  \cref{fig:knn_s} uses 1-nearest neighbor base learners and shows how the estimated variance ratio changes with (log) subsample size. Here $n_\text{in}=1000$ and $n_\text{out}=50$ and we obtain the boxplots of the ratio $\zeta_s/{s\zeta_1}$ for $s = 2^i$, for $i=1,\dots, 7$.  We see that as $s$ increases, so does the variance in our estimation.  This is because $\zeta_1$ is getting smaller and becoming harder to estimate.  However, if look at the median of the estimated variance ratio $\zeta_s/{s\zeta_1}$, we see that it increases as $s$ increases at a rate roughly on the order of $\log(s)$. 
	\begin{figure}[t]
		\centering
		\includegraphics[width=1\textwidth]{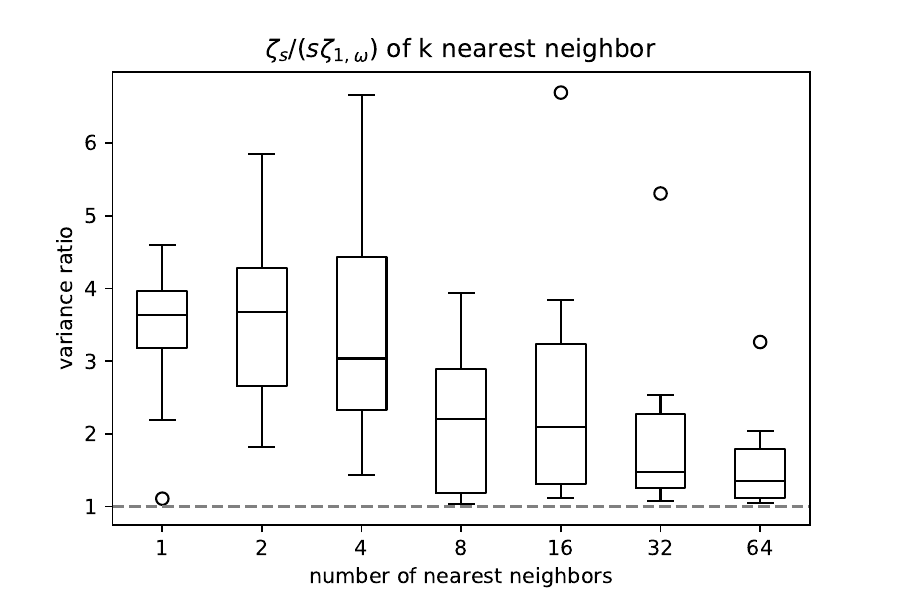}
		\caption{Variance ratio of $k$NN with differing numbers of nearest neighbors. We see that the variance ratio appears to get smaller as the number of nearest neighbors increases.}
		\label{fig:knn_k}
	\end{figure} 
	
	In \cref{fig:knn_k}, we fix the subsample size $s$ to be $300$ and consider $k$-nearest neighbor base learners with differing values of $k$, where $k=2^i$ for $i=1,\dots, 7$. Here $n_\text{in}=1000$ and $n_\text{out}=50$. As expected, when $k$ increases, $\zeta_s/{s\zeta_1}$ decreases and gets closer to $1$.  Again, as $k$ increases, $\zeta_1$ gets larger, so the estimation of $\zeta_1$ also becomes more accurate, which can be seen from the trend in the variation of the estimates.
	
	To carry the same kinds of simulations with decision trees, we use the Python package \texttt{sklearn.tree.DecisionTreeRegressor}. We let the minimum number of samples required to split an internal node be $2k$ and the minimum number of samples required in a (terminal) leaf node be $k$ so that the terminal node size is always bounded between $k$ and $2k-1$. The number of features randomly made eligible for splitting at each internal node -- often referred to as the \texttt{mtry} parameter -- is set to the square root of total number of features.
	\begin{figure}[h]
		\centering
		\includegraphics[width=1\textwidth]{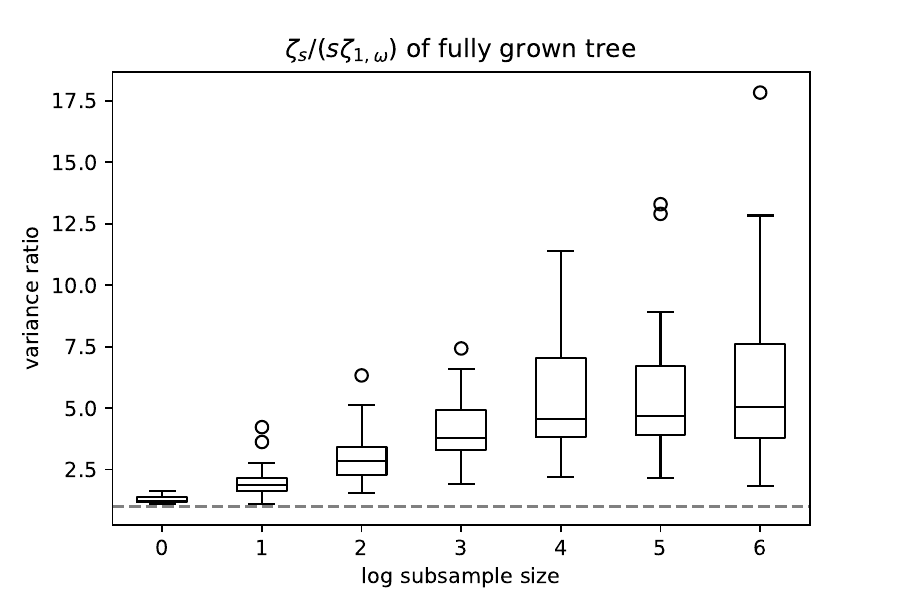}
		\caption{Variance ratio of fully grown trees built with subsamples of varying size.  We can see that as the sample size increases, the variance ratio appears to slowly get larger.}
		\label{fig:tree_s}
	\end{figure} 
	
	\cref{fig:tree_s}, utilizes fully grown trees with $n_\text{in}=1000$ and $n_\text{out}=50$ and again obtain boxplots of the estimated variance ratio $\zeta_s/{s\zeta_{1,\omega}}$ for $s = 2^i$, for $i=1,\dots, 7$.  Once again we see that as $s$ increases, the variance of our estimates becomes larger and the estimation less accurate.  We also again see that the median of the estimated variance ratio $\zeta_s/{s\zeta_{1,\omega}}$ increases as $s$ increases at a rate roughly on the order of $\log(s)$.
	\begin{figure}[h]
		\centering
		\includegraphics[width=1\textwidth]{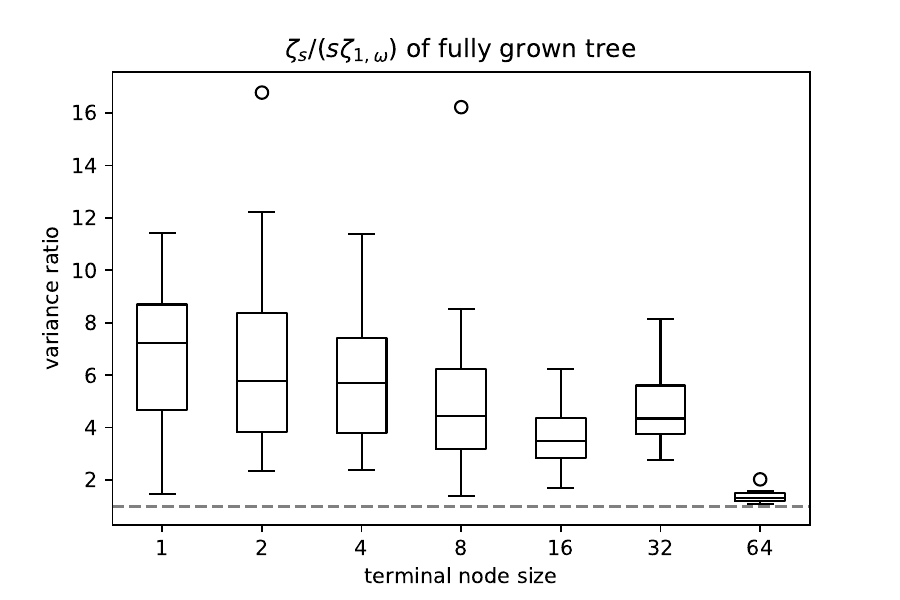}
		\caption{Variance ratio of fully grown trees with differing minimal terminal sizes. We see that the variance ratio get smaller as the minimal terminal size increases.} 
		\label{fig:tree_k}
	\end{figure}

	In \cref{fig:tree_k}, we fix the subsample size $s$ to be $100$ and consider trees with the terminal sizes ranging between $k$ and $2k-1$, where $k=2^i$ for $i=1,\dots, 7$. Compared to ensembles of $k$-NN base learners, the trees here have extra randomness through $\omega$ and so $\zeta_{1,\omega}$ will be relatively quite small. Thus, we let $s$ be $100$ here instead of $300$ in an effort to make $\zeta_{1,\omega}$ large enough for reasonably accurate estimation.  Here we let $n_\text{in}=2000$ and $n_\text{out}=50$. As $k$ increases, we can see that $\zeta_s/{s\zeta_{1,\omega}}$ decreases and gets closer to $1$, as expected.  
	
	Finally, we turn to illustrating the asymptotic normality of generalized incomplete U-statistics, where the base learners are either nearest neighbor methods (kNN) or decision trees.  Here we utilize the same prediction location and MARS function given in \cref{MARS}, letting $n=1000$, $N=2000$ and repeat the process 250 times to get an empirical distribution of the resulting predictions.  The size of each subsample is denoted by $s$ and $k$ denotes the number of nearest neighbors for kNN or the minimum terminal node size for decision trees.  In keeping with our definition of generalized U-statistics, to generate an ensemble, we first simulate $\hat{N}$ and then randomly generate $\hat{N}$ subsamples (without replacement) and build a base learner on each.  Note that we want $\hat{N}\sim \text{Binomial}({n\choose s}, N/{n\choose n})$, but this may be difficult to directly simulate since ${n\choose s}$ could be very large. However, by the Berry-Esseen theorem, we have
	\begin{equation}
		\label{bino-norm}
		\sup_{z\in \R}	\left|\Pr\left\{\frac{\hat{N}- N}{\sqrt{{n\choose s}p(1-p)} }\leq z \right\}- \Phi(z)\right|\leq C\frac{1-2p}{\sqrt{{n\choose s}p(1-p)}},
	\end{equation}
	where $p=N/{n\choose N}$ and $C<0.5$. In our setting $p\ll 10^{-139}$, so the right hand side of \cref{bino-norm} is $\leq {C}/{\sqrt{N}}\approx 0.011$ and $\sqrt{{n\choose s}(p)(1-p)}\approx 1/\sqrt{N}$. Thus, we can simulate $\hat{N}$ by letting $\hat{N} \sim {\tiny }\lfloor{N + \sqrt{N}\cdot \mathcal{N}(0,1)\rfloor}$.
	\begin{figure}[h]
		\centering
		\includegraphics[width=1\textwidth]{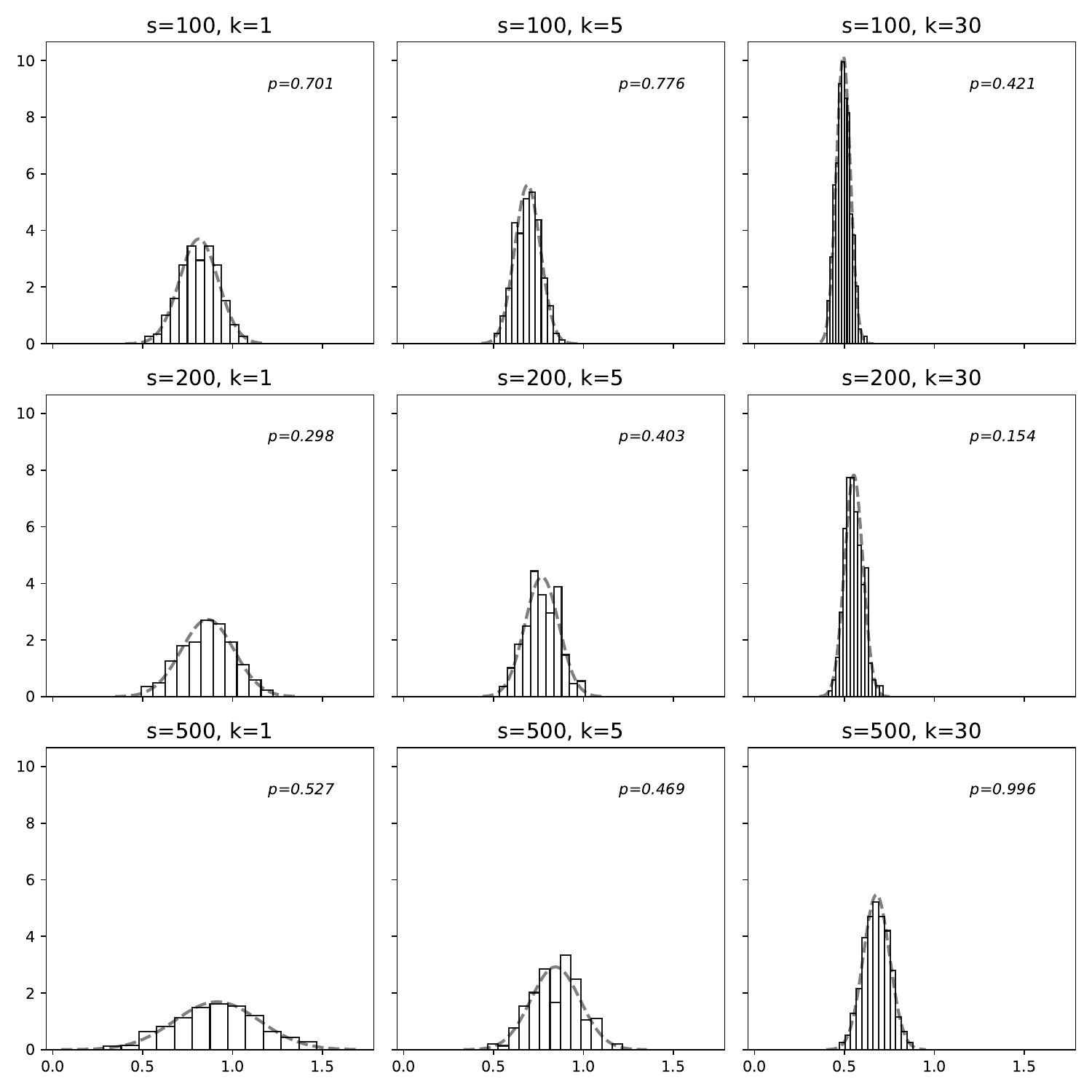}
		\caption{Histograms of predictions from kNN ensembles. Overlaying each
			histogram is the normal density estimate formed by the data.} 
		\label{fig:norm_knn}
	\end{figure} 
	
	Figures \ref{fig:norm_knn} and \ref{fig:norm_tree} show the distribution of predictions for kNN and tree ensembles, respectively, at different subsample sizes $s$ and neighborhood (or terminal node) sizes $k$.  Note that for both ensemble types, the distribution appears normal at every combination of $s$ and $k$.  This was verified with a Shapiro–Wilk test on each; p-values are shown in the upper-right of each plot.  For both ensemble types, we see that the resulting variance of the (approximately normal) distribution seems to increase with $s$ and, at each fixed $s$, decrease as $k$ increases.  Note that for large subsamples, the resulting distributions still remain approximately normal, though they do appear to be getting less normal with larger $s$, which is to be expected.
	\begin{figure}[h]
		\centering
		\includegraphics[width=1\textwidth]{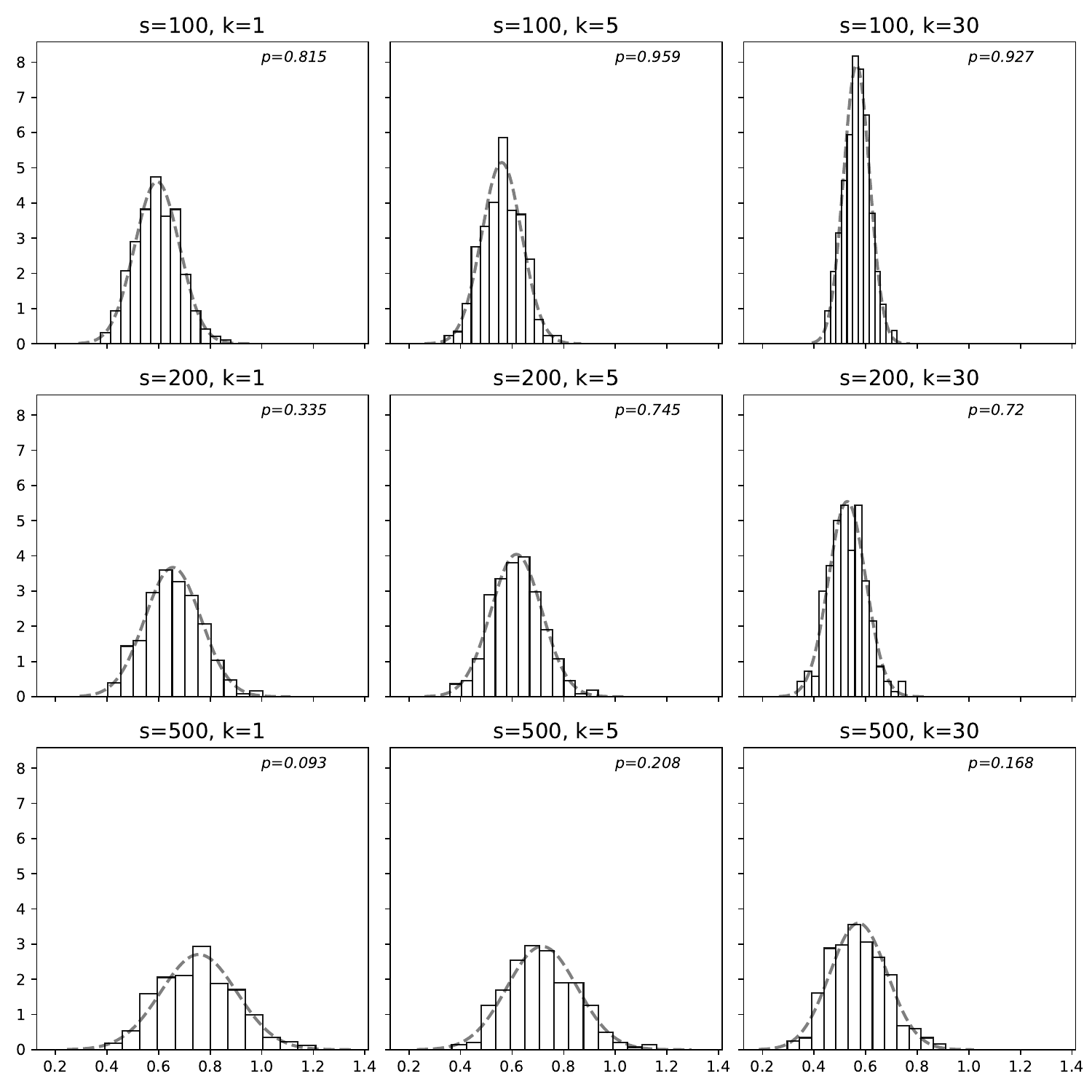}
		\caption{Histograms of predictions from tree-based ensembles. Overlaying each
			histogram is the normal density estimate formed by the data.} 
		\label{fig:norm_tree}
	\end{figure}

	%\vspace{-10pt}
	%%%%%%%%%%%%%%%%%%%%%%%%%%%%%%%%%%%%%%%%%%%
	%%%%%%%%%%%%%%%%%%%%%%%%%%%%%%%%%%%%%%%%%%%
	\section{Proofs in \cref{berry-esseen}}
	\label{app:berry-esseen}
	%%%%%%%%%%%%%%%%%%%%%%%%%%%%%%%%%%%%%%%%%%
	%%%%%%%%%%%%%%%%%%%%%%%%%%%%%%%%%%%%%%%%%%
	
	%%%%%%%%%%%%%%%%%%%%%%%%%%%%%
	\subsection{Introduction to Lemma \ref{chenCol}} 
	\label{app:C-RC}
	
	In many cases of interest, a statistic $T$ can be written as a linear statistic plus a manageable term. \cite{Chen2010} used the K-function approach derived from Stein's method \citep{Stein1972} to build a random concentration inequality for linear statistics. This inequality is an extension of the usual concentration inequalities but the bounds can be random. The authors then apply this randomized concentration inequality to provide a Berry-Esseen bound for $T$ as in Lemma \ref{chenCol}. For completeness, we begin with a brief discussion of this inequality and its derivatives.
	
	Let $Z_1,\ldots, Z_n$ be independent random variables and $T$ be a statistic of the form
	\[
	T=T(Z_1,\ldots,Z_n) = W + \Delta
	\]
	where 
	\[ 
	W = \sum_{i=1}^n g_{n,i}(Z_i), \quad \text{and} \quad \Delta=\Delta(Z_1,\dots, Z_n)
	\]
	for some functions $g_{n,i}$ and $\Delta$. Note here that $W$ is linear and thus $T$ takes the form of a linear statistic plus a remainder. Let $\xi_i = g_{n,i}(Z_i)$ and assume that 
	\begin{equation}
		\label{eq:0}
		\E[\xi_i ] =0 \quad ( i =1,\dots, n) \quad \text{and} \quad  \sum_{i=1}^n \V(\xi_i)=1.
	\end{equation}
	
	The following randomized concentration inequality can be used to establish uniform Berry-Esseen bounds on $T$ with optimal asymptotic rates.
	
	\begin{lemma}[\citep{Chen2010}]
		\label{rce}
		Let $\delta>0$ satisfying
		\begin{equation}
			\label{eq:1}
			\sum_{i=1}^n \E\left[|\xi_i|\min(\delta, |\xi_i|)\right]\geq 1/2.
		\end{equation}
		Then for any real-valued random variables $\Delta_1$ and $\Delta_2$,
		\begin{equation}
			\label{RCE}
			\begin{aligned}
				\P(\Delta_1\leq W\leq \Delta_2)
				& \leq 4\delta + \E|W(\Delta_2-\Delta_1)|\\
				& ~~ + \sum_{i=1}^n\left[ \E|\xi_i(\Delta_1-\Delta_{1,i})| + \E|\xi_i(\Delta_2-\Delta_{2,i})| \right]
			\end{aligned}
		\end{equation}
		whenever $\xi_i$ is independent of $(W-\xi_i,\Delta_{1,i},\Delta_{2,i})$.
	\end{lemma}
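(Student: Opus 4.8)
The plan is to run Stein's method for $W=\sum_i\xi_i$, using a piecewise-linear Stein function tailored to the random window $[\Delta_1,\Delta_2]$ and widened by $\delta$ on each side, and then to sandwich $\E[Wf(W)]$ between two estimates. A crude size bound on $f$ will contribute the $\E|W(\Delta_2-\Delta_1)|$ term; a leave-one-out lower bound will contribute $\tfrac12\P(\Delta_1\le W\le\Delta_2)$, where the dependence of $f$ on the \emph{random} endpoints forces a correction that produces the $\sum_i\bigl(\E|\xi_i(\Delta_1-\Delta_{1,i})|+\E|\xi_i(\Delta_2-\Delta_{2,i})|\bigr)$ terms and the $\delta$-widening produces the remaining $\delta$ term. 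Throughout we may assume $\Delta_1\le\Delta_2$, since otherwise the left side of \eqref{RCE} is zero.

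\emph{Construction and the easy bound.} Let $f=f(\cdot\,;\Delta_1,\Delta_2)$ be the non-decreasing, $1$-Lipschitz, piecewise-linear function with $f'(w)=\mathbf 1\{\Delta_1-\delta\le w\le\Delta_2+\delta\}$ and $f\bigl(\tfrac{\Delta_1+\Delta_2}{2}\bigr)=0$, so that $|f(w)|\le\tfrac12(\Delta_2-\Delta_1)+\delta$ for all $w$. Define $f_i$ identically but with $(\Delta_{1,i},\Delta_{2,i})$ in place of $(\Delta_1,\Delta_2)$; since both are $1$-Lipschitz and their midpoints and clamping levels move by at most $\tfrac12(|\Delta_1-\Delta_{1,i}|+|\Delta_2-\Delta_{2,i}|)$, one checks that $|f(w)-f_i(w)|\le\tfrac12(|\Delta_1-\Delta_{1,i}|+|\Delta_2-\Delta_{2,i}|)$ uniformly in $w$. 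Since $\E|W|\le(\E W^2)^{1/2}=\bigl(\sum_i\V(\xi_i)\bigr)^{1/2}=1$ by \eqref{eq:0}, the size bound on $f$ gives $\E[Wf(W)]\le\tfrac12\E|W(\Delta_2-\Delta_1)|+\delta$.

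\emph{Leave-one-out with a correction.} Write $W^{(i)}=W-\xi_i$. Because $\xi_i$ is independent of $(W^{(i)},\Delta_{1,i},\Delta_{2,i})$ with $\E\xi_i=0$, we have $\E[\xi_i f_i(W^{(i)})]=0$, so
\[
\E[Wf(W)]=\sum_i\E\bigl[\xi_i\bigl(f(W)-f(W^{(i)})\bigr)\bigr]+\sum_i\E\bigl[\xi_i\bigl(f(W^{(i)})-f_i(W^{(i)})\bigr)\bigr],
\]
and by the previous paragraph the second sum is at least $-\tfrac12\sum_i\bigl(\E|\xi_i(\Delta_1-\Delta_{1,i})|+\E|\xi_i(\Delta_2-\Delta_{2,i})|\bigr)$. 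For the first sum, write $f(W)-f(W^{(i)})=\int_0^{\xi_i}f'(W^{(i)}+t)\,dt$; the role of widening the support of $f'$ by $\delta$ is the pointwise estimate
\[
\xi_i\bigl(f(W)-f(W^{(i)})\bigr)\ \ge\ |\xi_i|\min(\delta,|\xi_i|)\,\mathbf 1\{\Delta_1\le W\le\Delta_2\},
\]
valid because on $\{\Delta_1\le W\le\Delta_2\}$ the portion of the segment joining $W^{(i)}$ to $W=W^{(i)}+\xi_i$ of length $\min(\delta,|\xi_i|)$ adjacent to $W$ lies inside $[\Delta_1-\delta,\Delta_2+\delta]$ (so the integral has absolute value $\ge\min(\delta,|\xi_i|)$ and the same sign as $\xi_i$), while off that event the left side is $\ge 0$ by monotonicity of $f$.

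\emph{Decoupling and assembly.} The indicator $\mathbf 1_A$ with $A=\{\Delta_1\le W\le\Delta_2\}$ does not depend on $i$, so summing the last display gives $\sum_i\E[\xi_i(f(W)-f(W^{(i)}))]\ge\E[\mathbf 1_A\,S]$ with $S:=\sum_i|\xi_i|\min(\delta,|\xi_i|)$. Hypothesis \eqref{eq:1} is precisely $\E[S]\ge\tfrac12$, and independence of the $\xi_i$ together with $\min(\delta,|\xi_i|)^2\le\delta^2$ gives $\V(S)=\sum_i\V\bigl(|\xi_i|\min(\delta,|\xi_i|)\bigr)\le\delta^2\sum_i\V(\xi_i)=\delta^2$, so $\E[\mathbf 1_A S]=\P(A)\E[S]+\Cov(\mathbf 1_A,S)\ge\tfrac12\P(A)-\tfrac12\delta$ by Cauchy--Schwarz. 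Combining this with the correction bound and the upper bound $\E[Wf(W)]\le\tfrac12\E|W(\Delta_2-\Delta_1)|+\delta$, then multiplying through by $2$, yields $\P(A)\le\E|W(\Delta_2-\Delta_1)|+3\delta+\sum_i(\E|\xi_i(\Delta_1-\Delta_{1,i})|+\E|\xi_i(\Delta_2-\Delta_{2,i})|)$, which is \eqref{RCE}, the stated constant $4$ absorbing the $3\delta$ with room to spare. The step I expect to need the most care is the correction: because $f$ legitimately depends on the random endpoints, the usual orthogonality $\E[\xi_i f(W^{(i)})]=0$ fails, and one must replace $f$ by its $\xi_i$-independent surrogate $f_i$ and estimate $|f-f_i|$ sharply enough to land the constant $1$ on the remainder terms; a secondary subtlety is recognizing that $\mathbf 1_A$ factors out of the $i$-sum, so that decoupling reduces to the variance estimate on $S$ rather than to an anti-concentration statement for the $W^{(i)}$, and that the $\delta$ in the truncation condition \eqref{eq:1} must be matched to the $\delta$-widening of $f'$.
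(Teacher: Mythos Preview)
Your proof is correct and follows essentially the same route as the paper's (which reproduces the argument from Chen, Goldstein, and Shao): the same piecewise-linear Stein function $f_{\Delta_1,\Delta_2}$ with support widened by $\delta$, the same split of $\E[Wf(W)]$ into a ``main'' term handled via $\int_0^{\xi_i}f'(W^{(i)}+t)\,dt$ and a ``correction'' term controlled by $|f-f_i|\le\tfrac12(|\Delta_1-\Delta_{1,i}|+|\Delta_2-\Delta_{2,i}|)$, and the same variance estimate $\V(S)\le\delta^2$. Your decoupling via $\Cov(\mathbf 1_A,S)$ and Cauchy--Schwarz with $\V(\mathbf 1_A)\le 1/4$ even yields $3\delta$ rather than the paper's $4\delta$, but this is a cosmetic sharpening of the identical argument.
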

	For completeness, we replicate the proof of \cref{RCE} originally given in \cite{Chen2010}. The spirit of the proof is to replace bounding the probability by bounding the expectation of some functions.
	\begin{proof}
		Let \[
		f_{a,b}(w) =\begin{cases}
			-\frac{1}{2}(b-a)-\delta, \quad w<a-\delta \\
			w- \frac{1}{2}(a+b), \quad a-\delta\leq w\leq b+\delta \\
			\frac{1}{2}(b-a) + \delta, \quad w> b+\delta
		\end{cases}
		\]
		and let 
		\[\hat{K}_i(t)= \xi_i\{1_{-\xi_i\leq t\leq 0}-1_{0<t\leq -\xi_i}\}  ,\quad \hat{K}(t)=\sum_{i=1}^n \hat{K}_i(t).\]
		Since $\xi_i$ and $f_{\Delta_{1,i}, \Delta_{2,i}}(W-\xi_i)$ are independent for $1\leq i\leq n$ , we have 
		\[
		\begin{aligned}
			\E \left[Wf_{\Delta_1,\Delta_2}(W) \right]
			& = \sum_{i=1}^n \E\left[\xi_i(f_{\Delta_1,\Delta_2}(W) - f_{\Delta_1,\Delta_2}(W-\xi_i))\right] \\
			& ~~~~ + \sum_{i=1}^n  \E \left[\xi_i(f_{\Delta_1,\Delta_2}(W-\xi_i)-f_{\Delta_{1,i},\Delta_{2,i}}(W-\xi_i))\right]\\
			&  = H_1 + H_2
		\end{aligned}
		\]
		where 
		\[
		\begin{aligned}
			H_1 & = \sum \limits_{i=1}^n \E \left[\xi_i \int_{-\xi_i}^0 f'_{\Delta_1,\Delta_2}(W+t)\,dt\right] \\
			& = \sum \limits_{i=1}^n \E\left[\int_{-\infty}^{\infty} f'_{\Delta_1,\Delta_2}(W+t)\hat{K}_i(t)\,dt\right]\\
			& \geq  \E \left[\int_{|t|\leq \delta} f'_{\Delta_1,\Delta_2}(W+t)\hat{K}(t)\,dt\right]\\
			& \geq  \E\left[1_{\Delta_1\leq W\leq \Delta_2}\int_{|t|\leq \delta}\hat{K}(t)\,dt\right] \\
			& =  \E\left[1_{\Delta_1\leq W\leq \Delta_2} \sum \limits_{i=1}^n |\xi_i|\min(\delta,\xi_i)\right] \\
			&\geq H_{1,1}-H_{1,2}
		\end{aligned}\]
		and where 
		\[
		H_{1,1}= \P(\Delta_1\leq W\leq\Delta_2)\sum  \limits_{i=1}^n \E\left[|\xi_i|\min(\delta,\xi_i)\right]\geq 1/2\P(\Delta_1\leq W\leq \Delta_2)
		\]
		and 
		\[H_{1,2}= \E\left| \sum \limits_{i=1}^n \left[|\xi_i|\min(\delta,\xi_i)- \E\left[|\xi_i|\min(\delta,\xi_i)\right]\right]\right| \leq \V\left(\sum  \limits_{i=1}^n |\xi_i|\min(\delta,\xi_i)\right)^{1/2}\leq \delta.\]
		
		\noindent For $H_2$, we have
		\[
		|f_{\Delta_1,\Delta_2}(w)-f_{\Delta_{1,i},\Delta_{2,i}(w)}|\leq \frac{1}{2}\left|\Delta_1-\Delta_{1,i}\right| + \frac{1}{2}\left|\Delta_2-\Delta_{2,i}\right|
		\]
		which then yields 
		\[
		|H_2|\leq \frac{1}{2}\left( \E|\xi_i(\Delta_1-\Delta_{1,i})| + \E|\xi_i(\Delta_2-\Delta_{2,i})|\right).\]
		It follows from the definition of $f_{a,b}$ that 
		\[ |f_{\Delta_1,\Delta_2}(w)|\leq \frac{1}{2}(\Delta_2-\Delta_1)+\delta.
		\]
		Hence,
		\[
		\begin{aligned}
			\P(\Delta_1\leq W\leq \Delta_2) 
			&  \leq 2 \E\left[ Wf_{\Delta_1,\Delta_2}(W)\right] + 2\delta +  \sum_{i=1}^n
			\left[	\E|\xi_i(\Delta_1-\Delta_{1,i})| + \E|\xi_i(\Delta_2-\Delta_{2,i})|\right] \\
			& \leq \E|W(\Delta_2-\Delta_1)| + 2\delta \E|W| \\
			& ~~~~ + 2\delta +  \sum_{i=1}^n\left[\E|\xi_i(\Delta_1-\Delta_{1,i})| + \E|\xi_i(\Delta_2-\Delta_{2,i})|\right]\\
			&  \leq \E|W(\Delta_2-\Delta_1)| + 4\delta +  \sum_{i=1}^n\left[ \E|\xi_i(\Delta_1-\Delta_{1,i})| + \E|\xi_i(\Delta_2-\Delta_{2,i})|\right].
		\end{aligned}
		\]
		
	\end{proof}
	Now, for any estimator of the form $T = W +\Delta $, we can write
	\[
	-\P(z-|\Delta|\leq W\leq z)\leq \P(T\leq z)-\P(W\leq z)\leq \P(z\leq W\leq z+|\Delta|).
	\]
	Applying \cref{RCE} to these bounds, we arrive at the following lemma. 
	
	\begin{lemma}[\citep{Chen2010}]
		\label{chenCol}
		
		Let $\xi_1,\dots, \xi_n$ be independent random variables satisfying \cref{eq:0}, $W= \sum_{i=1}^n \xi_i$ and  $T = W+\Delta$. Let $\Delta_i$ be a random variable such that $\xi_i$ and $(W-\xi_i, \Delta_i)$ are independent. Then for any $\delta$ satisfying \cref{eq:1}, we have 
		\[
		\sup_{z\in \R}\left|\P(T\leq z)-\P(W\leq z)\right| \leq 4\delta + \E|W\Delta|+ \sum_{i=1}^n
		\E|\xi_i (\Delta-\Delta_i)|.
		\]
		In particular, 
		\begin{equation}
			\label{bound1}
			\sup_{z\in \R}\left|\P(T\leq z)-\P(W\leq z)\right|\leq 2(\beta_2+\beta_3)+ \E|W\Delta|+ \sum_{i=1}^n \E|\xi_i (\Delta-\Delta_i)|
		\end{equation}
		and 
		\begin{equation}
			\label{bound2}
			\sup_{z\in \R}\left|\P(T\leq z)-\Phi(z)\right|\leq 6.1(\beta_2+\beta_3) + \E|W\Delta|+ \sum_{i=1}^n \E|\xi_i (\Delta-\Delta_i)| 
		\end{equation}
		where
		\[
		\beta_2 = \sum \limits_{i=1}^n\E[|\xi_i^2|\one_{|\xi_i|>1}]\quad \quad \text{and} \quad \quad \beta_3 = \sum \limits_{i=1}^n \E[|\xi_i^3|\one_{|\xi_i|\leq 1}] .
		\]
	\end{lemma}
	
	\noindent Note that since $\sum_{i=1}^n \E\left[\xi_i^2\right] =1$, if $\delta>0$ satisfies 
	\[
	\sum \limits_{i=1}^n \E \left[\xi_i^21_{|\xi_i|\geq \delta}\right]<\frac{1}{2}
	\]
	then \cref{eq:1} holds. In particular, when the $\xi_i$ are standardized i.i.d.\ random variables, then $\delta$ must be on the order of $1/\sqrt{n}$.  Furthermore, note that when $\beta_2+\beta_3 \leq 1$ and $4\delta\leq 2(\beta_2+\beta_3)$, then \cref{eq:1} is automatically satisfied and thus \cref{bound1} is immediate.  \cref{bound2} is obtained by combining \cref{bound1} and the sharp Berry-Esseen bound of the sum of independent random variables in \cite{Chen2004}. 
	
	\subsection{Berry-Esseen Bounds for Generalized U-statistics}
	\label{app:C-BR}

	\noindent \textbf{Proof of \cref{be-1}: } We provide the proof for $U_{n,s}$, the extension to $U_{n,s,\omega}$ follows in the same fashion with the only difference being in the H-decomposition. Without loss of generality, let $\theta=0$.
	Observe that 
	\[ 
	U_{n,s} = \sum_{j=1}^s{s \choose j}H_n^{(j)} =  \frac{s}{n}\sum \limits_{i=1}^n g(Z_i)+  \sum_{j=2}^s{s\choose j}H_n^{(j)},
	\]
	where $g(z)=\E[h(z,Z_2,\dots, Z_n)]$ and $H_n^{(j)} = {n \choose j}^{-1}\sum_{(n,j)} h^{(j)}(Z_{i1},\dots, Z_{ij})$. Let 
	\[ \Delta = \sqrt{\frac{{n}}{s^2\zeta_1}} \sum_{j=2}^s{s \choose j}H_n^{(j)} \]
	and for $l\in \{1,\dots, n\}$, let
	\begin{equation}
		\label{eq:delta}
		\Delta_l = \Delta - \sqrt{\frac{{n}}{s^2\zeta_1}} {n \choose j}^{-1}\sum_{S_{j}^{(l)}} h^{(j)}(Z_{i1},\dots, Z_{ij})
	\end{equation}
	where $S_{j}^{(l)}$ denotes the collection of all subsets of variables of size $j$ that include the $l^{th}$ observation.  The choice of $\Delta_l$ plays key role in deciding Berry-Esseen bound. The closer $\Delta_l$ is to $\Delta$, the tighter the bound in \cref{bound2}. We have 
	\begin{equation}
		\label{eq:u=w+delta}
		\sqrt{\frac{n}{s^2\zeta_1}}U_{n,s} = W+\Delta
	\end{equation}
	where $W = \sum_{i=1}^n  \xi_i$ with $\xi_i =g(Z_i)/\sqrt{n\zeta_1}$. For each $i\in \{1,\dots ,n\}$, the random variable $W-\xi_i$ and $\Delta_i$ are functions of $Z_j$, $j\neq i$. Therefore $\xi_i$ is independent of $(W-\xi_i, \Delta_i)$. 
	By Cauchy-Schwarz inequality, we have
	\[ 
	\E[|W\Delta|]\leq \sqrt{\E |W|^2}\cdot \sqrt{\E|\Delta|^2}= \sqrt{\E|\Delta|^2}
	\]
	and
	\[
	\sum_{i=1}^n \E[|\xi_i(\Delta-\Delta_i)|]\leq \sqrt{ \sum_{i=1}^n \E[\xi_i^2]} \cdot \sqrt{ \sum_{i=1}^n \E|\Delta-\Delta_i|^2 }\leq \sqrt{n} \max(\sqrt{ \E|\Delta-\Delta_i|^2}).
	\]
	Observing the terms on the right, we have
	\begin{equation*}
		\begin{aligned}
			\frac{s^2\zeta_1}{n}\E|\Delta|^2 
			&= \V\left\{\sum_{j=2}^s{s\choose j}H_n^{(j)}\right\}\\
			& = \sum_{j=2}^s{s \choose j}^2{n \choose j}^{-1}V_j\\ 
			& \leq \frac{s^2}{n^2}(\zeta_s-s\zeta_1).
		\end{aligned}
	\end{equation*}
	Similarly, we have 
	\begin{equation*}
		\begin{aligned}
			\frac{s^2\zeta_1}{n} \E|\Delta-\Delta_i|^2 
			&  = \V\left\{\sum_{j=2}^s{s \choose j}{n \choose j}^{-1}\sum_{S_j^{(i)}} h^{(j)}(Z_{i1},\dots, Z_{ij})\right\}\\
			& = \sum_{j=2}^s{s \choose j}^2{n \choose j}^{-2}{n-1\choose j-1}V_j \\
			& = \sum_{j=2}^s{s \choose j}^2{n \choose j}^{-1}\frac{j}{n}V_j \\
			& \leq \frac{2s^2}{n^3}\sum_{j=2}^s{s\choose j}V_j\\
			& \leq \frac{2s^2}{n^3}(\zeta_s-s\zeta_1).
		\end{aligned}
	\end{equation*}
	Note that 
	\[
	\begin{aligned}
		\beta_2+\beta_3 
		&= \sum \limits_{i=1}^n \E\left[ \left|\frac{g(Z_i)}{\sqrt{n\zeta_1}}\right|^2 \one_{|g(Z_i)|\geq \sqrt{n\zeta_1}}\right] + \sum \limits_{i=1}^n \E\left[\left|\frac{g(Z_i)}{\sqrt{n\zeta_1}}\right|^3 \one_{|g(Z_i)|\leq \sqrt{n\zeta_1}}\right]\\
		& \leq \frac{1}{n^{1/2}} \frac{\E|g|^3}{\zeta_1^{3/2}}.
	\end{aligned}
	\]
	Finally, by applying Lemma \ref{chenCol}, we obtain 
	\begin{equation*}
		\sup_{z\in \R}\left|\P\left\{\frac{U_{n,s}}{\sqrt{s^2\zeta_1/n}}\leq z\right\}- \Phi(z)\right|\leq \frac{6.1 \E|g|^3}{n^{1/2}\zeta_1^{3/2}} +(1+\sqrt{2})\left\{\frac{s}{n} \left(\frac{\zeta_s}{s\zeta_1}-1\right)\right\}^{1/2}. \quad \blacksquare
	\end{equation*}
	~\\

	\noindent \textbf{Proof of \cref{be-2}: } We provide a bound for incomplete, infinite-order U-statistics. An analogous result for \emph{generalized} incomplete U-statistics $U_{n,s,N,\omega}$ can be established by applying the extended form of the H-decomposition. As eluded to earlier, an incomplete U-statistic can be written as
	\begin{equation}
		\label{eqn:iustat}
		U_{n,s,N} = \frac{1}{\hat{N}} \sum_{(n,s)} \rho h(Z_{i1},\ldots, Z_{is}) 
	\end{equation}
	where  $\rho\sim\text{Bernoulli}(p)$ and $p = N/{n\choose s}$. Note however that \cref{eqn:iustat} can also be written as
	\[
	U_{n,s,N} = \frac{N}{\hat{N}} \left\{ {n\choose s}^{-1}  \sum_{(n,s)} \frac{\rho}{p}h(Z_{i1},\dots, Z_{is})\right\} = \frac{N}{\hat{N}}U^*_{n,s}
	\]
	so that the incomplete U-statistic now takes the form of a scaled, \emph{generalized} complete U-statistic.  We thus now consider $U^*_{n,s}$ and can then extend the results to $U_{n,s,N}$. First, note that the variance terms $\zeta_c^*$ for $c=1,\dots, s$  of $U^*_{n,s}$ are different from those of $U_{n,s}$ in \cref{eq:u}.  For $c=1,\dots, s-1$, we have
	\[
	\zeta^*_c = \Cov(\frac{\rho}{p} h(Z_1,\dots, Z_c,Z_{c+1},\dots, Z_s), \frac{\rho'}{p} h(Z_1,\dots, Z_{c},Z'_{c+1},\dots,Z_n')) = \zeta_c 
	\]
	and 
	\[
	\zeta_s^* = \Cov(\frac{\rho}{p} h(Z_1,\dots, Z_s),\frac{\rho}{p} h(Z_1,\dots, Z_s)) = \frac{1}{p}\zeta_s.
	\]
	The H-decomposition will also be different. Here, we have 
	\begin{align}
		h^{(1)*} &=  \E[\frac{\rho}{p} h \mid Z_1] = h^{(1)} \nonumber \\
		h^{(2)*} &=  \E[\frac{\rho}{p} h \mid Z_1,Z_2] - \E[\frac{\rho}{p} h|Z_1]- \E[\frac{\rho}{p} h \mid Z_2] = h^{(2)} \nonumber \\
		&    \vdots \nonumber \\
		h^{(s)*} &=  \frac{\rho}{p} h - \sum_{j=1}^{s-1}\sum_{(s,j)}h^{(j)}(Z_{i1}, \dots, Z_{ij}) \nonumber
	\end{align}
	\noindent where the $h$ appearing in the earlier form is replaced here by $\frac{\rho}{p} h$.  These kernels still retain the desirable properties laid out in Proposition \ref{prop1}. Furthermore, we have 
	$$V_j^*= V_j \quad \text{for } j = 1,\dots, s-1$$
	and 
	$$V_s^*= \frac{1}{p}\sum_{j=1}^s{s \choose j}V_j - \sum_{j=1}^{s-1}{s \choose j }V_j = V_s+\frac{(1-p)}{p}\zeta_s.$$
	Thus, 
	\[
	\begin{aligned}
		U^*_{n,s} 
		& = \sum_{j=1}^{s-1}{s \choose j}H_n^{(j)*} + H_n^{(s)*} \\
		& = \sum_{j=1}^{s-1}{s \choose j}H_n^{(j)} + {n \choose s}^{-1}\sum_{(n,s)} h^{(s)*}(Z_{i1},\dots, Z_{is}) \\
		%& = sH_n^{(1)} + \sum_{j=2}^{s-1}{s \choose j}H_n^{(j)} +  {n \choose s}^{-1}\sum_{(n,s)} h^{(s)*}(Z_{i1},\dots, Z_{is}) 
		%& = sH_n^{(1)} + \sum_{j=2}^{s-1}{s\choose j}H_n^{(j)} + \frac{1}{N}\sum_{(n,s)} h^{(s)*}(Z_{i1},\dots, Z_{is }) 
		& = sH_n^{(1)}+ \Delta.
	\end{aligned}
	\]
	Now, because we have rewritten $U_{n,s}^*$ as a linear term plus a remainder, we can follow the same general strategy as in the complete case above in applying Lemma \ref{chenCol}. In particular,  
	let \[
	\Delta - \Delta_i = \sum \limits_{j=2}^{s-1}{s \choose j}{n \choose j}^{-1}\sum_{S_{j}^{(i)}} h^{(j)}(Z_{i1}, \dots,  Z_{ij})  + {n\choose s}^{-1}\sum_{S_{s}^{(i)}} h^{(s)*}(Z_{i1}, \dots, Z_{is}),
	\]
	where $S_{j}^{(i)}$ denotes the collection of all subsets of size $j$ that include the $i^{th}$ observation.  Then 
	\[
	\begin{aligned}
		\E| \Delta|^2 
		& = \sum_{j=2}^{s-1}{s \choose j}^2{n \choose j}^{-1}V_j +{n \choose s}^{-1}V_s^* \\
		&  = \sum_{j=2}^s{s\choose j}^2{n \choose j}^{-1}V_j + \frac{1}{N}(1-p)\zeta_s,
	\end{aligned}\]
	and 
	\[
	\begin{aligned}
		\E|\Delta-\Delta_i|^2 
		&= \sum_{j=2}^s{s\choose j}^2{n \choose j}^{-1}\frac{j}{n}V_j + {n\choose s}^{-2}{n-1 \choose s-1}V_s^* \\
		& = \sum_{j=2}^s{s \choose j}^2{n \choose j}^{-1}\frac{j}{n}V_j +\frac{s}{n} \frac{1}{N}(1-p)\zeta_s.
	\end{aligned}\]
	Thus
	$$\frac{n}{s^2\zeta_1}\E|\Delta^2|\leq \frac{s}{n}\left[\frac{\zeta_s}{s\zeta_1}-1\right] + \frac{n}{Ns}(1-p)\frac{\zeta_s}{s\zeta_1}$$
	and 
	$$\sum\limits_{i=1}^n  \frac{n}{s^2\zeta_1}\E|\Delta_i^2|\leq \frac{2s}{n}\left[\frac{\zeta_s}{s\zeta_1}-1\right] + \frac{n}{N}(1-p)\frac{\zeta_s}{s\zeta_1}.$$
	By applying \cref{chenCol}, we have
	\[
	\begin{aligned}
		& \sup_{z\in \R}\left|\P\left\{\frac{U_{n,s}^*-\theta}{\sqrt{s^2\zeta_{1}/n}}\leq z\right\}- \Phi(z)\right| \\
		& \leq \frac{6.1\E|g|^3}{n^{1/2}\zeta_{1}^{3/2}} + (1+\sqrt{2})\left\{\frac{s}{n} \left(\frac{\zeta_s}{s\zeta_{1}}-1\right)\right\}^{1/2}  + \left(1+\sqrt{\frac{1}{s}}\right)\left\{ \frac{n}{N}\left(1-p\right) \frac{\zeta_s}{s\zeta_{1}}\right\}^{1/2} \\
		& := \epsilon_0.
	\end{aligned}\]
	
	Next, we give the Berry-Esseen bound for $U_{n,s,N}$. Denote $\frac{U_{n,s,N}-\theta}{\sqrt{s^2\zeta_{1}/n}} = T = W + \Delta$, where $W =\frac{U_{n,s}^*-\theta}{\sqrt{s^2\zeta_{1}/n}}$ and $\Delta = \left(\frac{N}{\hat{N}} - 1\right)W$. Note that we always have 
	\begin{equation*}
		-\P(z-|\Delta|\leq W\leq z)\leq \P(T\leq z)-\P(W\leq z)\leq \P(z\leq W\leq z+|\Delta|).
	\end{equation*}
	Since the Berry-Esseen bound for $W$ is already obtained, it remains only to bound $\P(z\leq W\leq z+|\Delta|)$ and $\P(z-|\Delta|\leq W\leq z)$. Without loss of generality, we consider bounding $\P(z\leq W\leq z+|\Delta|)$.
	By Bernstein's inequality, for any $\epsilon>0$,
	\begin{equation}
		\Pr\left( |\frac{\hat{N}}{N} - 1| \geq \epsilon \right) \leq 2 \exp\left(- \frac{-\epsilon^2 N}{(1-p) + \epsilon/3}\right).
	\end{equation}
	By letting $\epsilon = 2N^{-\beta} < 1/2$ and noting that $\P(|Z|\geq N^\alpha) \leq 2\exp(-N^{2\alpha}/2)$, we have 
	\begin{equation*}
		\begin{aligned}
			\P(|\Delta| \geq N^{-2\beta+\alpha})
			&  \leq \P(|\frac{N}{\hat{N}}-1|\geq N^{-2\beta}) + \P(|W|\geq N^{\alpha}) \\
			& \leq  \P(|\frac{\hat{N}}{N}-1|\geq 2N^{-2\beta}) + \P(|W|\geq N^{\alpha}) \\
			& \leq \P(|\frac{N}{\hat{N}}-1|\geq 2N^{-2\beta}) + \P(|Z|\geq N^\alpha ) + 2\epsilon_0 \\
			& \leq 2\exp(-N^{2\alpha}/2) + 2\exp(-\frac{4N^{1-2\beta}}{(1-p)(1 + 2N^{-\beta}/3)} + 2\epsilon_0 \\
			& := \epsilon_1 + 2\epsilon_0.
		\end{aligned}
	\end{equation*}
	As a result, we have
	\begin{equation*}
		\begin{aligned}
			\P(z\leq W\leq z+|\Delta|) &  \leq  \P(z\leq W\leq z+|\Delta|, \, |\Delta|\leq N^{-2\beta + \alpha}) + \P( |\Delta|\geq N^{-2\beta + \alpha}) \\
			& \leq  \P(z\leq W\leq z+N^{-2\beta+\alpha}) +\epsilon_1 + 2\epsilon_0 \\
			& \leq 2\epsilon_0 + \P(z \leq Z\leq z +N^{-2\beta+\alpha} ) + \epsilon_1 + 2\epsilon_0 \\
			& \leq 4\epsilon_0 + \epsilon_1 + \frac{1}{\sqrt{2\pi}} N^{-2\beta +\alpha} \\
			& := 4\epsilon_0 + \epsilon_1 + \epsilon_2.
		\end{aligned}
	\end{equation*}
	Let $\beta = \frac{1}{2} - \frac{1}{2}\eta_0$ and $\alpha = \frac{1}{2}\eta_0$, where $0 < \eta_0< \frac{1}{2}$. We know that $\epsilon_1 \ll \epsilon_2$ when $N$ is large, and therefore
	\begin{equation*}
		\begin{aligned}
			& \sup_{z\in \R}\left|\P\left\{\frac{U_{n,s,N}-\theta}{\sqrt{s^2\zeta_{1}/n}}\leq z\right\}- \Phi(z)\right| \\
			& =  \sup_{z\in \R}|\Pr(T\leq z) - \Phi(z)|  \\
			& \leq  \sup_{z\in \R}|\P(W\leq z) - \Phi(z)|  +   \sup_{z\in \R}|\Pr(T\leq z) - \P(W\leq z)| \\
			&   \leq 5\epsilon_0 + \epsilon_1 + \epsilon_2 \\
			& = O\left(\frac{\E[g^2]}{n^{1/2}\zeta^{3/2}_{1}} + \left[\frac{s}{n}\left(\frac{\zeta_s}{s\zeta_{1}} - 1\right)\right]^{1/2} +\left\{ \frac{n}{N}(1-p)\frac{\zeta_s}{s\zeta_{1}} \right\}^{1/2}+ N^{-1/2+\eta_0}\right).
		\end{aligned}
	\end{equation*}
	Here, $\eta_0$ can be any constant such that $0 < \eta_0 < 1/2$. 
	Note that the factor of ${N}/{\hat{N}}$ only introduces the extra term - $N^{-1/2+\eta_0}$ in the ultimate Berry-Esseen bound of $U_{n,s,N}$ compared to that of $U_{n,s}^*$.

	\hfill $\blacksquare$. \\

	We now turn to the proof of \cref{be-3}. Before that, we give the statement and proof of  the following simple lemma, which will be used in our proof.
	\begin{lemma}
		\label{lemma-a}
		\begin{equation}
			\lim_{a\to 1^+} \sup_{z\in \R}\left| \frac{\Phi(az)-\Phi(z)}{a-1}  \right| <\infty.
		\end{equation}
	\end{lemma}

	\noindent \textbf{Proof of Lemma \ref{lemma-a}: } Let $f(z) = \Phi(az)-\Phi(z)$, then 
	$$f'(z) = \frac{1}{\sqrt{2\pi}}\left(ae^{-a^2z^2/2} -e^{-z^2/2}\right).$$ Solving $f'(z) = 0$, we get $z^2(a) = \frac{\log(a)}{(a^2-1)/2}$. Then
	\begin{equation}
		\begin{aligned}
			\lim_{a\to 1^+} \sup_{z\in \R}\left| \frac{\Phi(az)-\Phi(z)}{a-1}  \right|
			&  = 	\lim_{a\to 1^+} \left| \frac{\Phi(az(a))-\Phi(z(a))}{a-1}  \right|   \\
			& = \lim_{a\to 1^+} \left|\Phi'(az(a))(az(a))' -\Phi'(z(a))z'(a)\right|.
		\end{aligned}
	\end{equation}
	According to the Taylor expansion of $\log(a)$ at $a=1$, we have 
	$$z^2(a) = \frac{2}{a+1}\left(1-\frac{a-1}{2}+\dots\right) = 1 + o(a-1).$$
	Hence $\lim_{a\to 1^+}z^2(a)=1$ and  $\lim_{a\to 1^+}z'(a) = -\frac{1}{2}$,  and therefore we have 
	\begin{equation}
		\lim_{a\to 1^+} \sup_{z\in \R}\left| \frac{\Phi(az)-\Phi(z)}{a-1}  \right| = \frac{e^{-1/2}}{\sqrt{2\pi}} <\infty
	\end{equation}
	as desired. \hfill $\blacksquare$ \\

	\noindent \textbf{Proof of \cref{be-3}: } We begin with a bound for incomplete, infinite-order U-statistics. The extension of this result to the generalized setting and can be derived in the same fashion. First rewrite \cref{eqn:iustat} as 
	\begin{equation}
		\label{eq:IU}
		U_{n,s,N}= \frac{1}{\hat{N}} \sum_{i } \rho_i h(\bz_i) 
	\end{equation}
	where $\rho_i  \sim \text{Bernoulli}(N/{n \choose s})$ are  i.i.d.\ and $\mathbf{Z}_i= (Z_{i1},\dots, Z_{is})$ denotes a subsample with index $i$ and the sum is taken over all subsamples. 
	We can rewrite $U_{n,s,N}$ in \cref{eq:IU} as a complete U-statistic $U_{n,s}$ plus some manageable term up to the scalar $N/\hat{N}$. We have
	\begin{equation}
		\label{eq:decIU}
		\begin{aligned}
			U_{n,s,N} 
			& = \frac{\hat{N}}{N}\left[{n \choose s}^{-1}\sum_{i}h(\bz_i) + \frac{1}{N}\sum_{i}(\rho_i-p)h(\bz_i)\right] \\
			& = \frac{\hat{N}}{N} \left[U_{n,s} + \left(\sqrt{1-p}\right) \frac{1}{N}\sum_{i}\frac{\rho_i-p}{\sqrt{1-p}}h(\bz_i)\right]  \\
			& = \frac{\hat{N}}{N}\left[A_n + \left(\sqrt{1-p}\right) B_n\right] \\
			& = \frac{\hat{N}}{N}W_n.
		\end{aligned}
	\end{equation}
	We first consider the Berry-Esseen bound of $W_n$. Since we already know the limiting behavior of $A_n$, it remains only to control $B_n$. Note that 
	\[
	\P\left(\sqrt{n}W_n \leq z\right)= \P\left\{\sqrt{N}B_n \leq \frac{z}{\sqrt{\alpha_n(1-p)}}-\sqrt{\frac{N}{1-p}}A_n\right\}
	\]
	where $\alpha_n = n/N$. Conditioning on $Z_1,\dots, Z_n$, $A_n$ can be treated as a constant and we have
	\begin{equation}
		\label{eq:Bn}
		\begin{aligned}
			\sqrt{N}B_n \mid Z_1,\dots, Z_n
			& = {n \choose s}^{-\frac{1}{2}}\sum_{i}\left[\frac{(\rho_i-p)}{\sqrt{p(1-p)}}\right]h(\bz_i) \mid Z_1,\dots, Z_n.
		\end{aligned}
	\end{equation}
	Now, $\sqrt{N}B_n\mid Z_1,\dots, Z_n$ is a sum of independent random variables with variance $U_2$, where 
	\begin{equation}
		\label{eq:gamma}
		U_2=  {n \choose s}^{-1}\sum_{i} h^2(\bz_i).
	\end{equation}
	Let 
	\[
	\xi_i =\frac{(p(1-p))^{-1/2}(\rho_i-p)h(\bz_i)}{\sqrt{\sum_{i}h^2(\bz_i)}},\quad   a_i = \frac{h(\bz_i)}{\sqrt{\sum_{i}h^2(\bz_i)}}
	\]
	then
	\[\sum_{i} a_i^2 = 1\quad \text{and} \quad \xi_i = a_i\left[\frac{(\rho_i-p)}{\sqrt{p(1-p)}}\right].\]
	Applying the  Berry-Esseen bound in \citep{Chen2001} for independent random variables, we have 
	\begin{equation}
		\label{eq:P1}
		\sup_{z\in \R}\left|\P\left(\sqrt{N}B_n\leq z\mid Z_1,\dots, Z_n\right)-\Phi\left(z/\sqrt{U_2}\right)\right| \leq 4.1(\beta_2+\beta_3),
	\end{equation}
	where   $\beta_2 = \sum_{i} \E\left[|\xi_i|^21_{|\xi_i|\geq 1} \right]$ and $\beta_3 = \sum_{i}  \E\left[|\xi_i|^31_{|\xi_i|\leq 1}\right]$.  
	Next,  we show that $(\beta_2+\beta_3)$ can be uniformly bounded by a small number with  high probability and in the rare case when $(\beta_2+\beta_3)$ is large, trivially, we have $$\left|\P\left(\sqrt{N}B_n\leq z\mid Z_1,\dots, Z_n \right)-\Phi\left(z/\sqrt{U_2}\right)\right|\leq 2.$$
	Indeed, 
	\begin{equation}
		\label{beta}
		\begin{aligned}
			\beta_2+\beta_3 
			& \leq \sum_{i} \E|\xi_i|^3 \\
			& = \left\{\frac{ { n \choose s}^{-1}\sum_{i} |h(\bz_i)|^3}{\left({n \choose s}^{-1}\sum_{i}|h(\bz_i)|^2\right)^{3/2}}\right\}\cdot {n \choose s}^{-\frac{1}{2}}\left[ \frac{2p^2-2p+1}{\big( p(1-p)\big)^{1/2}}\right] \\
			& = \frac{U_3}{U_2^{3/2}} {n \choose s}^{-1} \left[\frac{1-2p}{(p(1-p))^{1/2}}\right],
		\end{aligned}
	\end{equation}
	where  $ U_3 = {n \choose s}^{-1}\sum_{i}|h(\bz_i)|^3$. The terms of $U_2$ and $U_3$ are both complete U-statistics and as such, should be  concentrated around their expectations. Let
	\[
	\kappa_1 = \frac{\E|h|^4}{(\E|h|^2)^2}, \quad \kappa_2 = \frac{\E|h|^6}{(\E|h|^3)^2}
	\]
	and recall that $\kappa_1 ,\kappa_2$ are uniformly bounded  by our assumption. Let $\nu_2=\E|h|^2(=\zeta_s)$, $\delta_2 = (\frac{s}{n})^{\eta}\nu_2$, where $\eta>0$. Then  by Chebyshev's inequality, we have 
	\[ 
	\P \left( \left| U_2 - \nu_2\right|\geq \delta_2 \right) \leq \frac{\frac{s}{n}\V \left(|h|^2\right)}{\delta_2^2} =\left(\frac{s}{n}\right)^{1-2\eta} (\kappa_1-1) 
	.\]
	A similar inequality holds for $|U_3-\nu_3|$ and therefore with probability of at least $1 - \epsilon_0$, where $\epsilon_0 = c_0(\frac{s}{n})^{1-2\eta}$ for some constant $c_0>0$, we have
	\[
	\begin{aligned}
		\left|\frac{U_3}{U_2}\right|  = \left|\frac{\frac{U_3}{\nu_3}}{\frac{U_2^{3/2}}{\nu_1^{3/2}}}\cdot \frac{\nu_3}{\nu_2^{3/2}}\right| 
		\leq \left\{\frac{\frac{\nu_3 +\delta_3}{\nu_3}}{\frac{(\nu_2-\delta_2)^{3/2}}{\nu_2^{3/2}}}\right\} \frac{\nu_3}{\nu_2^{3/2}} \leq c_1 \frac{\nu_3}{\nu_2^{3/2}},
	\end{aligned}
	\]
	where $c_1= \left\{\frac{1+(\frac{s}{n})^\eta}{(1-(\frac{s}{n})^\eta)^{3/2  }} \right\} $. Hence, combining this with \cref{beta}, with probability of at least $1-\epsilon_0$, we have 
	\[
	\begin{aligned}
		\beta_2+\beta_3 
		&\leq c_1  \frac{\nu_3}{\nu_2^{3/2}}{n \choose s}^{-\frac{1}{2}}\left\{\frac{1-2p+2p^2}{(p(1-p))^{1/2}}\right\}  \\
		& \leq c_1 \frac{\nu_3}{\nu_2^{3/2}} N^{-\frac{1}{2}}\left\{ \frac{1-2p+2p^2}{(1-p)^{1/2}}\right\}  \\
		& \leq c_1c_2\frac{\nu_3}{\nu_2^{3/2}} N^{-\frac{1}{2}},
	\end{aligned}
	\]
	where $c_2 =  \frac{1-2p+2p^2}{(1-p)^{1/2}} $. The next step is to substitute $U_2$ by $\zeta_s$ by applying  \cref{lemma-a} stated above.
	We obtain 
	\[
	\begin{aligned}
		\sup_{z\in \R}\left|\Phi\left( z/\sqrt{U_2}\right)-\Phi\left(z/\sqrt{\zeta_s} \right)\right| 
		& \leq c_3 |\sqrt{\zeta_s}\wedge \sqrt{U_2}|^{-1}|\sqrt{U_2}-\sqrt{\zeta_s}| \\
		& \leq c_3|\zeta_s \wedge U_2|^{-1}|U_2-\zeta_s| .
	\end{aligned}\]
	Under the event that $|U_2-\zeta_s|\leq \delta_2$, we have
	\begin{equation*}
		\label{eq:P2} 
		\sup_{z\in \R}\left|\Phi\left( z/\sqrt{U_2}\right)-\Phi\left(z/\sqrt{\zeta_s} \right)\right|\leq c_3 \frac{\delta_2}{\zeta_s - \delta_2} 
		=  c_3\frac{\left(\frac{s}{n}\right)^{\eta}}{1- \left(\frac{s}{n}\right)^{\eta}}.
	\end{equation*}
	Next, since $A_n$ is a complete U-statistic, by \cref{be-1}, we have
	\begin{equation}
		\label{eq:P3}
		\sup_{z\in \R}\left|\P\left(\sqrt{n}A_n\leq z\right)-\P\left(Y_A\leq z\right)\right|\leq \epsilon_2
	\end{equation}
	where $\epsilon_2 = \frac{6.1 \E|g|^3}{n^{1/2}\zeta_1^{3/2}} +(1+\sqrt{2})\left\{\frac{s}{n}\left(\frac{\zeta_s}{s \zeta_1}-1\right)\right\}^{1/2}$ and  $Y_A \sim N(0, s^2\zeta_1)$. Lastly,
	\begin{equation*}
		\begin{aligned}
			\P\left(\sqrt{n}W_n\leq z\right)
			& = \E\left[\P\left\{\sqrt{N}B_n\leq \frac{z}{\sqrt{\alpha_n(1-p)}}- \sqrt{\frac{N}{1-p}}A_n \mid Z_1,\dots ,Z_n\right\}\right] \\
			& \leq \P\left\{Y_B \leq \frac{z}{\sqrt{\alpha_n(1-p)}}- \sqrt{\frac{N}{1-p}}A_n\right\} + \epsilon_1 \\
			&  = \P\left\{\sqrt{n}A_n \leq z -\sqrt{\alpha_n(1-p)}Y_B\right\} + \epsilon_1
		\end{aligned}
	\end{equation*}
	where $Y_B \sim N(0, \zeta_s)$ is independent of $Z_1, \dots, Z_n$ and $\epsilon_1 =  4.1\left\{ c_1c_2 \frac{\nu_3}{\nu_2^{3/2}} N^{-1/2}\right\} + c_3\frac{\left(\frac{s}{n}\right)^{\eta}}{1- \left(\frac{s}{n}\right)^{\eta}} + 2\epsilon_0 $. Now, conditioning on $Y_B$, we have 
	\[
	\begin{aligned}
		\P\left\{ \sqrt{n}A_n \leq z -\sqrt{\alpha_n(1-p)}Y_B \mid Y_B\right\}\leq \P\left\{Y_A\leq z -\sqrt{\alpha_n(1-p)}Y_B \mid Y_B\right\} + \epsilon_2.
	\end{aligned}
	\]
	Combining \cref{eq:P1} and \cref{eq:P3}, we conclude that 
	\[
	\begin{aligned}
		\P\left\{\sqrt{n}W_n\leq z\right\}
		&\leq \P\left\{Y_A \leq z- \sqrt{\alpha_n(1-p)}Y_B \right\}+ \epsilon_1+\epsilon_2 \\
		& = \P\left\{Y_A +\sqrt{\alpha_n(1-p)}Y_B\leq z\right\} + \epsilon_1+\epsilon_2 \\
		& \leq \P\left\{Y_A + \alpha_n^{1/2}Y_B\leq z\right\}+  \epsilon_1+\epsilon_2+\epsilon_3.
	\end{aligned}
	\]
	By Lemma \ref{lemma-a}, we have
	\[
	\begin{aligned}\epsilon_3 
		& \leq c_3 \left(s^2\zeta_1 + \alpha_n(1-p)\zeta_s\right)^{-1}\alpha_np\zeta_s\\
		& = c_3 \left(s^2\zeta_1 + \alpha_n(1-p)\zeta_s\right)^{-1}{n\choose s}^{-1}n\zeta_s\\ 
		&\leq c_3 \min\left\{ p(1-p)^{-1}, \frac{n/s}{{n \choose s}}\frac{\zeta_s}{s\zeta_1}\right\}.
	\end{aligned}
	\]
	Thus, in summary,
	\[
	\sup_{z\in \R}\left|\P\left\{\sqrt{N}W_n\leq z\right\}-\P\left\{Y_W\leq z\right\}\right|  \leq \epsilon_1+\epsilon_2+\epsilon_3
	\]
	where 
	\begin{align*}
		\epsilon_1 &= 2c_0\left( \frac{s}{n}\right)^{1-2\eta}   +  4.1\left\{ c_1c_2 \frac{\nu_3}{\nu_2^{3/2}} N^{-1/2} \right\}+ c_3\frac{\left(\frac{s}{n}\right)^{\eta}}{1- \left(\frac{s}{n}\right)^{\eta}} 
		\\
		\epsilon_2 &=  \frac{6.1 \E|g|^3}{n^{1/2}\zeta_1^{3/2}} +(1+\sqrt{2})\left\{\frac{s}{n} \left(\frac{\zeta_s}{s\zeta_1}-1\right)\right\}^{1/2}\\
		\epsilon_3 &= c_3 \cdot \min\left\{ p(1-p)^{-1}, \frac{n/s}{{n \choose s}}\frac{\zeta_s}{s\zeta_1}\right\}
	\end{align*}
	
	\noindent and $Y_W\sim N(0, {s^2}\zeta_1/n + \zeta_s/N)$.   Note that $\epsilon_1$ and $\epsilon_2$ dominate because of the $\binom{n}{s}$ in the denominator of $\epsilon_3$. Therefore, by choosing $\eta=1/3$, the above bound can be simplified as 
	\[
	\begin{aligned}
		\epsilon_1+\epsilon_2+\epsilon_3  
		& \leq C\left\{ \frac{\E|g|^3}{n^{1/2}(\E|g|^2)^{3/2}} + \frac{\E|h|^3}{N^{1/2}(\E|h|^2)^{3/2}} \right. \\
		&~~~~~~~~~+ \left.\left\{\frac{s}{n} \left(\frac{\zeta_s}{s\zeta_1}-1\right)\right\}^{1/2}+\left(\frac{s}{n}\right)^{1/3} \right\}. 
	\end{aligned} 
	\]
	As in the Proof of \cref{be-2}, the scalar $\hat{N}/ N$ introduces the extra term $N^{-1/2+\eta_0}$ in the ultimate Berry-Essen bound of $U_{n,s,N}$, where $0 < \eta_0<1/2$.
	\hfill $\blacksquare$ 
	~\\

	%%%%%%%%%%%%%%%%%%%%%%%%
	%\appendix{Proof of a tighter bound}
	%%%%%%%%%%%%%%%%%%%%%%%%
	\subsection{Discussion on A Tighter Bound}
	\label{app:C-TB}
	
	Here we provide a sketch of the proof of \cref{be-4}. Let $m = \lfloor n/s\rfloor$ and define
	\begin{equation*}
		V(Z_1,Z_2,\dots, Z_n) = \frac{1}{m}\sum_{j=0}^{m-1} h(Z_{j\cdot s+1},Z_{j\cdot s+2}\dots, Z_{j\cdot s+s}).
	\end{equation*}
	The general form of a complete U-statistic in \cref{eq:u} can be rewritten as
	\begin{equation*}
		U_{n,s} = \frac{1}{n!} \sum_{\beta \in S_n}V(Z_{\beta_1, \beta_2, \dots , \beta_n})
	\end{equation*}
	where $S_n$ consists of all permutations of $(1,2,\dots,n)$. Now, suppose that $(h-\theta)/\sigma$ is sub-Gaussian with variance proxy $v^2$, where $\sigma^2= \V(h)$, then by definition, we have
	\begin{equation}
		\label{subGaussian}
		\E\left[\exp(\lambda (h-\theta))\right] \leq \exp\left\{ \frac{\lambda^2 \sigma^2 v^2}{2}\right\},\quad \lambda\in \R
	\end{equation}
	and hence we have 
	\begin{align*}
		\P(U_{n,s}-\theta>t) 
		& \leq \exp(-\lambda t)\E[\exp\left(\lambda(U_{n,s}-\theta)\right)]\\
		&  \leq \exp(-\lambda t)\sum_{\beta \in S_n} \frac{1}{n!}\E[\exp\left(\lambda(V(Z_{\beta_1},Z_{\beta_2},\dots,Z_{\beta_n})-\theta)\right)]  \\
		& = \exp(-\lambda t) \E[\exp\left(\lambda(V-\theta)\right)] \label{u-v}    \stepcounter{equation}\tag{\theequation} \\
		& \leq \exp\left\{-\frac{mt^2}{2\sigma^2 v^2}\right\}, \quad t>0.
	\end{align*}
	The second inequality in \cref{u-v} is due to Jensen's inequality and the last inequality is due to Hoeffding inequality.  Observe that $\P\left(U_{n,s}-\theta< t \right)$ follows in the same manner (recall that \cref{subGaussian} holds for all $\lambda \in \R$), and we get
	\begin{equation}
		\label{eq5.3}
		\P\left(\left|U_{n,s}-\theta\right|\geq t\right)\leq 2\exp\left\{-\frac{mt^2}{2\sigma^2 v^2}\right\}.
	\end{equation}
	Now let $t = m^{-1/2+\eta}\sigma$ where $0<\eta<1/2$.  Then with probability at least $1-2\exp\left(-\frac{1}{2v^2}(\lfloor n/s\rfloor)^{2\eta}\right)$, $|U_{n,s}-\theta|\leq (\lfloor n/s\rfloor )^{1/2-\eta}\sigma$.  Therefore if $|h-\theta|^2$ and $|h-\theta|^3$ are sub-Gaussian after being standardized, we can then apply \cref{eq5.3} in the proof of \cref{be-3} to obtain the improved result.


\begin{thebibliography}{10}
   	
   	\bibitem{Breiman2001}
   	Leo Breiman.
   	\newblock Random forests.
   	\newblock {\em Machine Learning}, 45(1):5--32, 2001.
   	
   	\bibitem{belgiu2016random}
   	Mariana Belgiu and Lucian Dr{\u{a}}gu{\c{t}}.
   	\newblock Random forest in remote sensing: A review of applications and future
   	directions.
   	\newblock {\em ISPRS Journal of Photogrammetry and Remote Sensing}, 114:24--31,
   	2016.
   	
   	\bibitem{qi2012random}
   	Yanjun Qi.
   	\newblock Random forest for bioinformatics.
   	\newblock In {\em Ensemble machine learning}, pages 307--323. Springer, 2012.
   	
   	\bibitem{khaidem2016predicting}
   	Luckyson Khaidem, Snehanshu Saha, and Sudeepa~Roy Dey.
   	\newblock Predicting the direction of stock market prices using random forest.
   	\newblock {\em arXiv preprint arXiv:1605.00003}, 2016.
   	
   	\bibitem{coleman2020statistical}
   	Tim Coleman, Lucas Mentch, Daniel Fink, Frank~A La~Sorte, David~W Winkler,
   	Giles Hooker, and Wesley~M Hochachka.
   	\newblock Statistical inference on tree swallow migrations with random forests.
   	\newblock {\em Journal of the Royal Statistical Society: Series C (Applied
   		Statistics)}, 69(4):973--989, 2020.
   	
   	\bibitem{Fernandez2014}
   	Manuel Fern{\'a}ndez-Delgado, Eva Cernadas, Sen{\'e}n Barro, and Dinani Amorim.
   	\newblock Do we need hundreds of classifiers to solve real world classification
   	problems?
   	\newblock {\em The Journal of Machine Learning Research}, 15(1):3133--3181,
   	2014.
   	
   	\bibitem{CART}
   	Leo Breiman, Jerome Friedman, Charles~J. Stone, and R.A. Olshen.
   	\newblock {\em {C}lassification and {R}egression {T}rees}.
   	\newblock Wadsworth, Belmont, CA, 1st edition, 1984.
   	
   	\bibitem{banerjee2007}
   	Moulinath Banerjee and Ian~W. McKeague.
   	\newblock Confidence sets for split points in decision trees.
   	\newblock {\em Ann. Statist.}, 35(2):543--574, 04 2007.
   	
   	\bibitem{Ishwaran2015}
   	Hemant Ishwaran.
   	\newblock The effect of splitting on random forests.
   	\newblock {\em Mach. Learn.}, 99(1):75--118, April 2015.
   	
   	\bibitem{Lin2006}
   	Yi~Lin and Yongho Jeon.
   	\newblock Random forests and adaptive nearest neighbors.
   	\newblock {\em Journal of the American Statistical Association},
   	101(474):578--590, 2006.
   	
   	\bibitem{Lopes2020}
   	Miles~E Lopes.
   	\newblock Estimating a sharp convergence bound for randomized ensembles.
   	\newblock {\em Journal of Statistical Planning and Inference}, 204:35--44,
   	2020.
   	
   	\bibitem{Biau2008}
   	G{\'e}rard Biau, Luc Devroye, and G\'{a}bor Lugosi.
   	\newblock Consistency of random forests and other averaging classifiers.
   	\newblock {\em J. Mach. Learn. Res.}, 9:2015--2033, June 2008.
   	
   	\bibitem{Biau2010}
   	G{\'e}rard Biau and Luc Devroye.
   	\newblock On the layered nearest neighbour estimate, the bagged nearest
   	neighbour estimate and the random forest method in regression and
   	classification.
   	\newblock {\em Journal of Multivariate Analysis}, 101(10):2499--2518, 2010.
   	
   	\bibitem{Biau2012}
   	G{\'e}rard Biau.
   	\newblock Analysis of a random forests model.
   	\newblock {\em J. Mach. Learn. Res.}, 13:1063--1095, April 2012.
   	
   	\bibitem{pmlr-v28-denil13}
   	Misha Denil, David Matheson, and Nando Freitas.
   	\newblock Consistency of online random forests.
   	\newblock In Sanjoy Dasgupta and David McAllester, editors, {\em Proceedings of
   		the 30th International Conference on Machine Learning}, volume~28 of {\em
   		Proceedings of Machine Learning Research}, pages 1256--1264, Atlanta,
   	Georgia, USA, 17--19 Jun 2013. PMLR.
   	
   	\bibitem{Ishwaran2008}
   	Hemant Ishwaran, Udaya~B Kogalur, Eugene~H Blackstone, Michael~S Lauer, et~al.
   	\newblock Random survival forests.
   	\newblock {\em The annals of applied statistics}, 2(3):841--860, 2008.
   	
   	\bibitem{Ishwaran2010}
   	Hemant Ishwaran and Udaya~B Kogalur.
   	\newblock Consistency of random survival forests.
   	\newblock {\em Statistics \& probability letters}, 80(13-14):1056--1064, 2010.
   	
   	\bibitem{Cui2019}
   	Yifan Cui, Ruoqing Zhu, Mai Zhou, and Michael Kosorok.
   	\newblock Consistency of survival tree and forest models: splitting bias and
   	correction.
   	\newblock {\em arXiv preprint arXiv:1707.09631}, 2019.
   	
   	\bibitem{Meinshausen2006}
   	Nicolai Meinshausen.
   	\newblock Quantile regression forests.
   	\newblock {\em Journal of Machine Learning Research}, 7(Jun):983--999, 2006.
   	
   	\bibitem{Zhu2015}
   	Ruoqing Zhu, Donglin Zeng, and Michael~R Kosorok.
   	\newblock Reinforcement learning trees.
   	\newblock {\em Journal of the American Statistical Association},
   	110(512):1770--1784, 2015.
   	
   	\bibitem{Biau2016}
   	G{\'e}rard Biau and Erwan Scornet.
   	\newblock A random forest guided tour.
   	\newblock {\em Test}, 25(2):197--227, 2016.
   	
   	\bibitem{Wager2014}
   	Stefan Wager, Trevor Hastie, and Bradley Efron.
   	\newblock Confidence intervals for random forests: The jackknife and the
   	infinitesimal jackknife.
   	\newblock {\em The Journal of Machine Learning Research}, 15(1):1625--1651,
   	2014.
   	
   	\bibitem{Efron2014}
   	Bradley Efron.
   	\newblock Estimation and accuracy after model selection.
   	\newblock {\em Journal of the American Statistical Association},
   	109(507):991--1007, 2014.
   	
   	\bibitem{Scornet2015}
   	Erwan Scornet, G{\'e}rard Biau, and Jean-Philippe Vert.
   	\newblock {Consistency of random forests}.
   	\newblock {\em {Annals of Statistics}}, 43(4):1716--1741, August 2015.
   	
   	\bibitem{Mentch2016}
   	Lucas Mentch and Giles Hooker.
   	\newblock Quantifying uncertainty in random forests via confidence intervals
   	and hypothesis tests.
   	\newblock {\em The Journal of Machine Learning Research}, 17(1):841--881, 2016.
   	
   	\bibitem{Wager2018}
   	Stefan Wager and Susan Athey.
   	\newblock Estimation and inference of heterogeneous treatment effects using
   	random forests.
   	\newblock {\em Journal of the American Statistical Association}, 0(0):1--15,
   	2018.
   	
   	\bibitem{Halmos1946}
   	Paul~R Halmos.
   	\newblock The theory of unbiased estimation.
   	\newblock {\em The Annals of Mathematical Statistics}, pages 34--43, 1946.
   	
   	\bibitem{Hoeffding1948}
   	Wassily Hoeffding.
   	\newblock A class of statistics with asymptotically normal distribution.
   	\newblock {\em Ann. Math. Statist.}, 19(3):293--325, 09 1948.
   	
   	\bibitem{Frees1989}
   	Edward~W Frees.
   	\newblock Infinite order u-statistics.
   	\newblock {\em Scandinavian Journal of Statistics}, pages 29--45, 1989.
   	
   	\bibitem{Breiman1996}
   	Leo Breiman.
   	\newblock Bagging predictors.
   	\newblock {\em Machine learning}, 24(2):123--140, 1996.
   	
   	\bibitem{Romano2019}
   	Joseph~P Romano and Cyrus DiCiccio.
   	\newblock Multiple data splitting for testing.
   	\newblock Technical report, Technical report, 2019.
   	
   	\bibitem{Hoeffding1961}
   	Wassily Hoeffding.
   	\newblock On sequences of sums of independent random vectors.
   	\newblock In {\em Proceedings of the Fourth Berkeley Symposium on Mathematical
   		Statistics and Probability, Volume 2: Contributions to Probability Theory},
   	pages 213--226, Berkeley, Calif., 1961. University of California Press.
   	
   	\bibitem{Efron1981}
   	B.~Efron and C.~Stein.
   	\newblock The jackknife estimate of variance.
   	\newblock {\em Ann. Statist.}, 9(3):586--596, 05 1981.
   	
   	\bibitem{rainforth2018nesting}
   	Tom Rainforth, Rob Cornish, Hongseok Yang, Andrew Warrington, and Frank Wood.
   	\newblock On nesting monte carlo estimators.
   	\newblock In {\em International Conference on Machine Learning}, pages
   	4267--4276. PMLR, 2018.
   	
   	\bibitem{Scornet2016kernel}
   	Erwan Scornet.
   	\newblock Random forests and kernel methods.
   	\newblock {\em IEEE Transactions on Information Theory}, 62(3):1485--1500,
   	2016.
   	
   	\bibitem{Olson2018}
   	Matthew~A Olson and Abraham~J Wyner.
   	\newblock Making sense of random forest probabilities: a kernel perspective.
   	\newblock {\em arXiv preprint arXiv:1812.05792}, 2018.
   	
   	\bibitem{JMLR:v21:19-905}
   	Lucas Mentch and Siyu Zhou.
   	\newblock Randomization as regularization: A degrees of freedom explanation for
   	random forest success.
   	\newblock {\em Journal of Machine Learning Research}, 21(171):1--36, 2020.
   	
   	\bibitem{mentch2020getting}
   	Lucas Mentch and Siyu Zhou.
   	\newblock Getting better from worse: Augmented bagging and a cautionary tale of
   	variable importance.
   	\newblock {\em arXiv preprint arXiv:2003.03629}, 2020.
   	
   	\bibitem{peng2021bias}
   	Wei Peng, Lucas Mentch, and Leonard Stefanski.
   	\newblock Bias, consistency, and alternative perspectives of the infinitesimal
   	jackknife.
   	\newblock {\em arXiv preprint arXiv:2106.05918}, 2021.
   	
   	\bibitem{zhou2019v}
   	Zhengze Zhou, Lucas Mentch, and Giles Hooker.
   	\newblock $ v $-statistics and variance estimation.
   	\newblock {\em arXiv preprint arXiv:1912.01089}, 2019.
   	
   	\bibitem{coleman2019scalable}
   	Tim Coleman, Wei Peng, and Lucas Mentch.
   	\newblock Scalable and efficient hypothesis testing with random forests.
   	\newblock {\em arXiv preprint arXiv:1904.07830}, 2019.
   	
   	\bibitem{Berry1941}
   	Andrew~C Berry.
   	\newblock The accuracy of the gaussian approximation to the sum of independent
   	variates.
   	\newblock {\em Transactions of the american mathematical society},
   	49(1):122--136, 1941.
   	
   	\bibitem{Esseen1942}
   	Carl-Gustaf Esseen.
   	\newblock On the liapunov limit error in the theory of probability.
   	\newblock {\em Ark. Mat. Astr. Fys.}, 28:1--19, 1942.
   	
   	\bibitem{Callaert1978}
   	Herman Callaert, Paul Janssen, et~al.
   	\newblock The berry-esseen theorem for $ u $-statistics.
   	\newblock {\em The Annals of Statistics}, 6(2):417--421, 1978.
   	
   	\bibitem{Chan1977}
   	Y-K Chan and John Wierman.
   	\newblock On the berry-esseen theorem for u-statistics.
   	\newblock {\em The Annals of Probability}, pages 136--139, 1977.
   	
   	\bibitem{Grams1973}
   	William~F Grams, RJ~Serfling, et~al.
   	\newblock Convergence rates for $ u $-statistics and related statistics.
   	\newblock {\em The Annals of Statistics}, 1(1):153--160, 1973.
   	
   	\bibitem{Chen2010}
   	Louis~HY Chen, Larry Goldstein, and Qi-Man Shao.
   	\newblock {\em Normal approximation by Stein’s method}.
   	\newblock Springer Science \& Business Media, 2010.
   	
   	\bibitem{Song_2019}
   	Yanglei Song, Xiaohui Chen, and Kengo Kato.
   	\newblock Approximating high-dimensional infinite-order $u$-statistics:
   	Statistical and computational guarantees.
   	\newblock {\em Electronic Journal of Statistics}, 13(2), Jan 2019.
   	
   	\bibitem{chernozhukov2017central}
   	Victor Chernozhukov, Denis Chetverikov, and Kengo Kato.
   	\newblock Central limit theorems and bootstrap in high dimensions.
   	\newblock {\em The Annals of Probability}, 45(4):2309--2352, 2017.
   	
   	\bibitem{chen2019randomized}
   	Xiaohui Chen and Kengo Kato.
   	\newblock Randomized incomplete $ u $-statistics in high dimensions.
   	\newblock {\em The Annals of Statistics}, 47(6):3127--3156, 2019.
   	
   	\bibitem{vaart_1998}
   	A.~W. van~der Vaart.
   	\newblock {\em Asymptotic Statistics}.
   	\newblock Cambridge Series in Statistical and Probabilistic Mathematics.
   	Cambridge University Press, 1998.
   	
   	\bibitem{lee1990u}
   	Justin Lee.
   	\newblock U-statistics: Theory and practice.
   	\newblock 1990.
   	
   	\bibitem{Biau2015}
   	G{\'e}rard Biau and Luc Devroye.
   	\newblock {\em Lectures on the nearest neighbor method}.
   	\newblock Springer, 2015.
   	
   	\bibitem{friedman1991multivariate}
   	Jerome~H Friedman.
   	\newblock Multivariate adaptive regression splines.
   	\newblock {\em The annals of statistics}, pages 1--67, 1991.
   	
   	\bibitem{Stein1972}
   	Charles Stein et~al.
   	\newblock A bound for the error in the normal approximation to the distribution
   	of a sum of dependent random variables.
   	\newblock In {\em Proceedings of the Sixth Berkeley Symposium on Mathematical
   		Statistics and Probability, Volume 2: Probability Theory}. The Regents of the
   	University of California, 1972.
   	
   	\bibitem{Chen2004}
   	Louis~HY Chen, Qi-Man Shao, et~al.
   	\newblock Normal approximation under local dependence.
   	\newblock {\em The Annals of Probability}, 32(3):1985--2028, 2004.
   	
   	\bibitem{Chen2001}
   	Louis~HY Chen and Qi-Man Shao.
   	\newblock A non-uniform berry--esseen bound via stein's method.
   	\newblock {\em Probability theory and related fields}, 120(2):236--254, 2001.
   	
   \end{thebibliography}
\end{document}